\newtheorem{theorem}{Theorem}[section]
\newtheorem{proposition}[theorem]{Proposition}
\newtheorem{definition}[theorem]{Definition}
\newtheorem{lemma}[theorem]{Lemma}
\newtheorem{corollary}[theorem]{Corollary}
\newtheorem{fact}[theorem]{Fact}
\declaretheoremstyle[
notefont=\bfseries, notebraces={}{},
bodyfont=\normalfont\itshape,
headformat=\NAME \NOTE
]{nopar}
\let\originalparagraph\paragraph
\renewcommand{\paragraph}[2][.]{\originalparagraph{#2#1}}
\pgfplotsset{ticks=none}
\def\shft#1{\stackunder[43pt]{}{\kern112pt #1}}
\tikzset{
	module/.style={draw,thick,  shape=rectangle, rounded corners = 0.5ex, minimum width =2em, minimum height =2em},
	bigmodule/.style={draw, thick, shape=rectangle, rounded corners = 1ex, align=right, minimum width =4em, minimum height =3em, inner sep=2ex}
	}
\newcommand{\sigmoid}{
	\begin{tikzpicture}[declare function={sigma(\x)=1/(1+exp(-\x));}, scale=0.05]
		\begin{axis}%
			[
			line width= 0.5cm,
			xmin=-6,
			xmax=6,
			axis x line=bottom,
			ymax=1,
			axis y line=middle,
			samples=100,
			domain=-6:6,
			xticklabels={,,},
			yticklabels={,,},
			axis line style={draw=none}
			]
			\addplot[blue,mark=none]   (x,{sigma(x)});
			\legend{}
		\end{axis}
\end{tikzpicture}}
\newcommand{\linetikz}{
	\begin{tikzpicture}[declare function={sigma(\x)=x;}, scale=0.05]
		\begin{axis}%
			[
			line width= 0.5cm,
			xmin=-1,
			xmax=1,
			axis x line=bottom,
			ymax=1,
			axis y line=middle,
			samples=100,
			domain=-1:1,
			xticklabels={,,},
			yticklabels={,,},
			axis line style={draw=none}
			]
			\addplot[blue,mark=none]   (x,{sigma(x)});
			\legend{}
		\end{axis}
\end{tikzpicture}}
\newcommand{\reals}{{\mathbb R}}
\newcommand{\ones}{\operatorname{\mathbf 1}}
\newcommand{\idm}{\operatorname{I}}
\newcommand{\id}{\operatorname{I}}
\newcommand{\Rank}{\operatorname{\bf Rank}}
\newcommand{\Tr}{\operatorname{\bf Tr}}
\newcommand{\diag}{\operatorname{\bf diag}}
\newcommand{\Vect}{\operatorname{Vec}}
\newcommand{\Expect}{\operatorname{\mathbb E}}
\DeclareMathOperator*{\argmin}{arg\,min}
\newcommand{\bigO}{\mathcal{O}}
\newcommand{\dom}{\operatorname{dom}}
\def\shortdisplay{\setlength{\abovedisplayskip}{5pt}%
	\setlength{\belowdisplayskip}{5pt}%
	\setlength{\abovedisplayshortskip}{2pt}%
	\setlength{\belowdisplayshortskip}{2pt}}
\let\oldselectfont\selectfont
\def\selectfont{\oldselectfont\shortdisplay}
\newcommand{\var}{w}
\newcommand{\dimvar}{p}
\newcommand{\rand}{x}
\newcommand{\obj}{h}
\newcommand{\Obj}{F}
\newcommand{\reg}{r}
\newcommand{\set}{C}
\newcommand{\lab}{y}
\newcommand{\Lab}{Y}
\newcommand{\labpred}{\hat y}
\newcommand{\setparam}{R}
\newcommand{\numclass}{k}
\newcommand{\loss}{\mathcal{L}}
\newcommand{\nbsamp}{n}
\newcommand{\chain}{f}
\newcommand{\dyn}{\phi}
\newcommand{\horizon}{\tau}
\newcommand{\dimparam}{\rho}
\newcommand{\dimlatent}{\delta}
\newcommand{\dimlabel}{q}
\newcommand{\latent}{z}
\newcommand{\nonlin}{a}
\newcommand{\linearcste}{\beta^0}
\newcommand{\diminter}{\eta}
\newcommand{\Weight}{W}
\newcommand{\weight}{w}
\newcommand{\offset}{w^0}
\newcommand{\patch}{\Pi}
\newcommand{\Latent}{Z}
\newcommand{\nbfilter}{n^f}
\newcommand{\dimfilter}{s^f}
\newcommand{\nbpatch}{n^p}
\newcommand{\activ}{\alpha}
\newcommand{\activelt}{\bar \alpha}
\newcommand{\batchsize}{m}
\newcommand{\biaffine}{b}
\newcommand{\bilinear}{\beta}
\newcommand{\linearlatent}{\beta^\latent}
\newcommand{\diminput}{\delta}
\newcommand{\intervar}{\omega}
\newcommand{\regbatchnorm}{\epsilon}
\newcommand{\normal}{\nu}
\newcommand{\pool}{\pi}
\newcommand{\batchreg}{\epsilon}
\newcommand{\diminputnext}{\tilde \diminput}
\newcommand{\latentnext}{\tilde \latent}
\newcommand{\costate}{\lambda}
\newcommand{\spars}{s}
\newcommand{\normidx}{2}
\newcommand{\opnormidx}{2, 2}
\newcommand{\tensnormidx}{2, 2, 2}
\newcommand{\classfunc}{\mathcal{C}}
\newcommand{\lip}{\ell}
\newcommand{\smooth}{L}
\newcommand{\tensornorm}{\smooth}
\newcommand{\bound}{m}
\newcommand{\diam}{D}
\newcommand{\stepsize}{\gamma}
\newcommand{\lin}{\ell}
\newcommand{\qua}{q}
\newcommand{\A}{A}
\newcommand{\B}{B}
\newcommand{\PP}{P}
\newcommand{\Q}{Q}
\newcommand{\R}{R}
\newcommand{\p}{p}
\newcommand{\q}{q}
\newcommand{\costogo}{\operatorname{cost}}
\newcommand{\CC}{C}
\newcommand{\cc}{c}
\newcommand{\cste}{\operatorname{cste}}
\newcommand{\K}{K}
\newcommand{\kk}{k}
\newcommand{\dualvar}{\mu}
\newcommand{\radius}{\rho}
\newcommand{\nbcomp}{k}
\newcommand{\Relu}{\operatorname{ReLu}}
\newcommand{\conv}{\operatorname{conv}}
\newcommand{\full}{\operatorname{full}}
\newcommand{\softmax}{\operatorname{softmax}}
\newcommand{\softplus}{\operatorname{softplus}}
\newcommand{\batchnorm}{\operatorname{batch}}
\newcommand{\maxpool}{\operatorname{maxpool}}
\newcommand{\avgpool}{\operatorname{avgpool}}
\newcommand{\avg}{\operatorname{avg}}
\newcommand{\impvar}{\beta}
\newcommand{\dimimpvar}{b}
\newcommand{\dimimpfix}{a}
\newcommand{\impfunc}{\zeta}
\newcommand{\Impfunc}{g}
\newcommand{\impfix}{\alpha}
\newcommand{\impgrad}{\xi}
\newcommand{\smoothess}{H}
\newcommand{\state}{x}
\newcommand{\ctrl}{u}
\newcommand{\auxctrl}{v}
\newcommand{\auxxctrl}{w}
\newcommand{\fixedstate}{\bar x}
\newcommand{\dimctrl}{p}
\newcommand{\dimstate}{d}
\newcommand{\ctrls}{\ctrl}
\newcommand{\auxctrls}{\auxctrl}
\newcommand{\auxxctrls}{\auxxctrl}
\newcommand{\nextctrls}{\ctrls_{\text{new}}}
\newcommand{\auxstate}{y}
\newcommand{\chainaux}{g}
\newcommand{\chainauxx}{h}
\newcommand{\dynaux}{\psi}
\newcommand{\dynauxx}{\chi}
\newcommand{\oraclectrls}{v^*}
\newcommand{\oraclectrl}{v^*}
\newcommand{\currctrls}{u}
\newcommand{\chainoutput}{\psi}
\newcommand{\dimin}{d}
\newcommand{\dimout}{m}
\newcommand{\lipp}{l}
\definecolor{darkgreen}{rgb}{0.01, 0.60, 0.14}
\definecolor{plum}{cmyk}{0.00, 0.51, 0.06, 0.44}
\definecolor{orange}{RGB}{255,90,0}
\definecolor{darkorange}{RGB}{180,50,0}
\definecolor{violet}{RGB}{128,0,128}
\definecolor{pink}{rgb}{1, 0, 0.5}
\definecolor{purple}{RGB}{51,0,111}
\definecolor{darkpurple}{RGB}{75,0,130}
\definecolor{darkblue}{RGB}{20,20,100}
\definecolor{mediumblue}{RGB}{65,105,225}
\definecolor{lightblue}{RGB}{0,160,200}
\definecolor{mediumred}{RGB}{178,34,34}
\definecolor{darkred}{RGB}{90,0,0}
\newcommand{\blue}[1]{\textcolor{mediumblue}{#1}}
\pgfplotsset{compat=1.17}
\title{An Elementary Approach to Convergence Guarantees\\
	of Optimization Algorithms for Deep Networks}
\author{Vincent Roulet, Zaid Harchaoui \\
Department of Statistics, University of Washington, Seattle, USA
}
\date{}
\begin{document}
	
\maketitle
\begin{abstract}
	We present an approach to obtain convergence guarantees of optimization algorithms for deep networks based on elementary arguments and computations. The convergence analysis revolves around the analytical and computational structures of optimization oracles central to the implementation of deep networks in modern machine learning software. We provide a systematic way to compute estimates of the smoothness constants that govern the convergence behavior of first-order optimization algorithms used to train deep networks. Diverse examples related to modern deep networks are interspersed within the text to illustrate the approach.
\end{abstract}

\section{Introduction}
Deep networks have achieved remarkable performance in several application domains such as computer vision, natural language processing and genomics \citep{krizhevsky2012imagenet, pennington2014glove, duvenaud2015convolutional}. The input-output mapping implemented by a deep neural network is a chain of compositions of modules, where each module is typically a composition of a non-linear mapping, called an activation function, and an affine mapping. The last module in the chain is usually task-specific in that it relates to a performance accuracy for a specific task. This module can be expressed either explicitly in analytical form as in supervised classification or implicitly as a solution of an optimization problem as in dimension reduction or unsupervised clustering.

The optimization problem arising when training a deep network is often framed as a non-convex optimization problem, dismissing the structure of the objective yet central to the software implementation. Indeed optimization algorithms used to train deep networks proceed by making calls to first-order (or second-order) oracles relying on dynamic programming such as gradient back-propagation \citep{werbos1994roots, rumelhart1985learning, lecun1988theoretical, duda2012pattern,martin2002neural,shalev2014understanding,goodfellow2016deep}. Gradient back-propagation is now part of modern machine learning software~\citep{tensorflow2015-whitepaper, paszke2017automatic}. We highlight here the elementary yet important fact that the chain-compositional structure of the objective naturally emerges through the smoothness constants governing the convergence guarantee of a gradient-based optimization algorithm. This provides a reference frame to relate the network architecture and the convergence rate through the smoothness constants. This also brings to light the benefit of specific modules popular among practitioners to improve the convergence.

In Sec.~\ref{sec:pb}, we define the parameterized input-output map implemented by a deep network as a chain-composition of modules and write the corresponding optimization objective consisting in learning the parameters of this map. In Sec.~\ref{sec:oracles}, we detail the implementation of first-order and second-order oracles by dynamic programming; the classical gradient back-propagation algorithm is recovered as a canonical example. Gauss-Newton steps can also be simply stated in terms of calls to an automatic-differentiation oracle implemented in modern machine learning software libraries. In Sec.~\ref{sec:optim}, we present the computation of the smoothness constants of a chain of computations given its components and the resulting convergence guarantees for gradient descent. Finally, in Sec.~\ref{sec:appli}, we present the application of the approach to derive the smoothness constants for the VGG architecture and illustrate how our approach can be used to identify the benefits of batch-normalization~\citep{simonyan2014very, ioffe2015batch}. 
In the Appendix, we estimate the smoothness constants related to the VGG architecture and we investigate batch-normalization in the light of our approach~\citep{simonyan2014very, ioffe2015batch}. All the proofs and the notations are also provided in the Appendix.

\section{Problem formulation}\label{sec:pb}
\subsection{Deep network architecture}\label{sec:deep_pres}
A feed-forward deep  network of depth $\horizon$ can be described as a transformation of an input $\rand$ into an output $\state_\horizon$ through the composition of $\horizon$ blocks, called layers, illustrated in Fig.~\ref{fig:layers}. Each layer is defined by a set of parameters. In general, (see Sec.~\ref{ssec:deep} for a detailed decomposition), these parameters act on the input of the layer through an affine operation followed by a non-linear operation. Formally,  the  $t$\textsuperscript{th} layer can be described as a function of its parameters $\ctrl_t$ and a given input $\state_{t-1}$ that outputs $\state_t$ as
\begin{equation}\label{eq:deep_simp}
\state_t = \dyn_t(\state_{t-1}, \ctrl_t) = \nonlin_t(\biaffine_t(\state_{t-1}, \ctrl_t)),
\end{equation}
where $\biaffine_t$ is generally linear in $\ctrl_t$ and affine in $\state_{t-1}$ and $\nonlin_t$ is non-linear.

Learning a deep network consists in minimizing w.r.t. its parameters an objective involving $\nbsamp$ inputs $\fixedstate^{(1)}, \ldots, \fixedstate^{(\nbsamp)} \in \reals^\diminput$. Formally, the problem is written
\begin{align}
	\min_{(\ctrl_1, \ldots, \ctrl_\horizon)  \in \reals^{\dimctrl_1} \times \ldots \times \reals^{\dimctrl_\horizon}} \qquad & \obj(\state_\horizon^{(1)}, \ldots, \state_\horizon^{(\nbsamp)}) + \reg(\ctrl_1,\ldots, \ctrl_\horizon) \nonumber\\ 
	\mbox{subject to} \qquad & \state_t^{(i)} = \dyn_t(\state_{t-1}, \ctrl_t^{(i)}) \quad \mbox{for} \: t=1, \ldots, \horizon, \  \: i=1, \ldots, \nbsamp, \nonumber \\
	& \state_0^{(i)} = \fixedstate^{(i)} \hspace{42pt}\mbox{for}  \: i=1, \ldots, \nbsamp, \label{eq:deep_obj}
\end{align}
where $\ctrl_t \in \reals^{\dimctrl_t}$ is the set of parameters at layer $t$ whose dimension $\dimctrl_t$ can vary among layers and $\reg$ is a regularization on the parameters of the network.

We are interested in the influence of the architecture on the optimization complexity of the problem. The architecture translates into a structure of the chain of computations involved in the optimization problem. 
\begin{definition}\label{def:chain}
	A chain of $\horizon$ computations  $\dyn_t: \reals^{\dimstate_{t-1}} \times\reals^{\dimctrl_t}  \rightarrow \reals^{\dimstate_t}$ is defined as $\chain: \reals^{\dimstate_0} \times  \reals^{\sum_{t=1}^\horizon \dimctrl_t} \rightarrow \reals^{\sum_{t=1}^ \horizon \dimstate_t}$ such that for $\state_0 \in \reals^{\dimstate_0}$ and  $\ctrls = (\ctrl_1; \ldots;\ctrl_\horizon)\in \reals^{\sum_{t=1}^\horizon \dimctrl_t}$ we have $\chain(\state_0, \ctrls) = (\chain_1(\state_0, \ctrls); \ldots ;\chain_\horizon(\state_0, \ctrls))$ with
	\begin{align}\label{eq:chain_def}
		\chain_t(\state_0, \ctrls)  & =	\dyn_t(\chain_{t-1}(\state_0, \ctrls), \ctrl_t ) \quad \mbox{for} \: t=1, \ldots, \horizon, & 
	\end{align}
	and $\chain_0(\state_0, \ctrls) = \state_0$.  We denote $\chain_{t, \state_0}(\ctrls) = \chain_t(\state_0, \ctrls)$ and $\chain_{t, \ctrls} (\state_0) = \chain_t (\state_0, \ctrls)$.
\end{definition}
Denote then $\chain^0$ the chain of computations associated to the layers of a deep network and consider the concatenation of the transformations of each input as a single transformation, i.e., $\chain_t(\fixedstate, \ctrls) =( \chain_t^0(\fixedstate^{(1)}, \ctrls); \ldots; \chain_t^0(\fixedstate^{(\nbsamp)}, \ctrls))$ for $t\in \{1, \ldots, \horizon\}$, and $\fixedstate = (\fixedstate^{(1)}; \ldots;\fixedstate^{(\nbsamp)})$, the objective in~\eqref{eq:deep_obj} can be written as
\begin{align}
\min_{\ctrls \in \reals^{\sum_{t=1}^\horizon \dimctrl_t}} \quad  & \obj(\chain_\horizon(\fixedstate, \ctrls)) + \reg(\ctrls), \label{eq:pb}
\end{align}
where $\chain_\horizon:  \reals^{\nbsamp \dimstate_0} \times  \reals^{\sum_{t=1}^\horizon \dimctrl_t} \rightarrow \reals^{\nbsamp\dimstate_\horizon}$ is the output of a chain of $\horizon$ computations with $\dimstate_0 = \diminput$,  $\reg: \reals^{\sum_{t=1}^\horizon \dimctrl_t} \rightarrow \reals$  is typically a decomposable differentiable function such as  $\reg(\ctrls) = \lambda \sum_{t=1}^{\horizon}\|\ctrl_t\|_2^2$ for $\lambda \geq 0$,  and we present examples of learning objectives $f:\reals^{\nbsamp \dimstate_\horizon} \rightarrow \reals$ below. Assumptions on differentiability and smoothness of the objective are detailed in Sec.~\ref{sec:optim}.

\begin{figure*}
	\begin{center}
		
		\begin{tikzpicture}
			\node (input)  {$\blue{\state_0}$};
			\node[module] (layer1) [right=2em of input]{$\dyn_1$};
			\node (param1) [above=4em of layer1, align=right, mediumred] {$\ctrl_1$};
			
			\node (dots1) [right=2em of layer1]{$\ldots$};
			\node [module] (biaffine)[right=3em of dots1]{$\linetikz$};
			\node [above=-2pt] at (biaffine.south east) {\footnotesize \hspace{-13pt} $\biaffine_t$};
			\node [module] (nonlin)[right=3em of biaffine]{$\sigmoid$};
			\node [above=-2pt] at (nonlin.south east) {\footnotesize \hspace{-13pt} $\nonlin_t$};
			\node (params) [above=4em of biaffine, align=right, mediumred] {$\ctrl_t$};
			\node (empty) [right=1em of nonlin] {};
			\node [bigmodule, fit=(biaffine) (nonlin) (empty), minimum height = 5em] (layerl){};
			\node [above] at (layerl.south east) {\hspace{-2em} \large $\dyn_t$};
			
			\node (dots2) [right=5em of nonlin] {$\ldots$};
			\node[module] (layerk) [right=2em of dots2]{$\dyn_{\horizon}$};
			\node (paramk) [above=4em of layerk, align=right, mediumred] {$\ctrl_{\horizon}$};
			\node (output) [right=2em of layerk]{$\blue{\state_\horizon} = \chain_\horizon(\state_0, \ctrls)$};
			\draw[->, thick, mediumblue] (input) -- (layer1);
			\draw[->, thick, mediumblue] (layer1) -- (dots1);
			\draw[->, thick, mediumblue] (dots1) -- (biaffine) node[near start, above, mediumblue]{$\state_{t-1}$};
			
			\draw[->, thick, mediumblue] (biaffine) -- (nonlin);
			
			\draw[->, thick, mediumblue] (nonlin) -- (dots2) node[pos=0.85, above, mediumblue]{$\state_t$};
			\draw[->, thick, mediumblue] (dots2) -- (layerk);
			\draw[->, thick, mediumblue] (layerk) -- (output);
			
			\draw[->, thick, darkred] (params) -- (biaffine);
			\draw[->, thick, darkred] (param1) -- (layer1);
			\draw[->, thick, darkred] (paramk) -- (layerk);
			
			\node  [bigmodule, fit=(layer1) (layerl) (layerk), minimum height = 5em] (chain){};
			\node [above] at (chain.south east) {\hspace{-2em} \large $\chain$};
		\end{tikzpicture}
	\end{center}
	\caption{Deep network architecture. \label{fig:layers}}
\end{figure*}
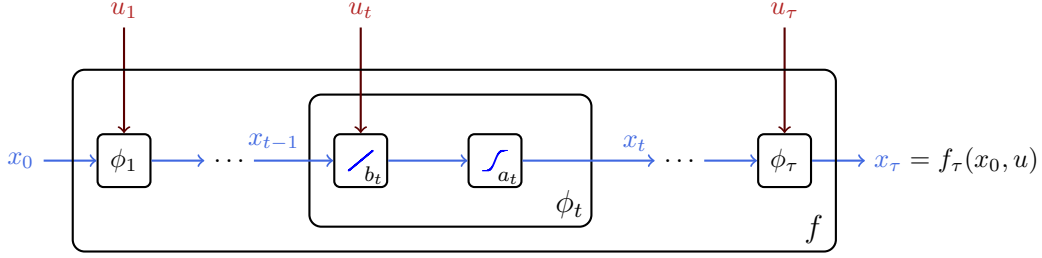	

\subsection{Objectives}
In the following, we consider the output of the chain of computations on $n$ sample to be given as $\labpred = (\labpred^{(1)};\ldots;\labpred^{(\nbsamp)}) = ( \chain_\horizon^0 (\fixedstate^{(1)}, \ctrls); \ldots; \chain_\horizon^0(\fixedstate^{(\nbsamp)}, \ctrls)) = \chain_\horizon(\fixedstate, \ctrls)$  for $\fixedstate = (\fixedstate^{(1)}; \ldots;\fixedstate^{(\nbsamp)})$.
\paragraph{Supervised learning}
For supervised learning, the objective can be decomposed as a finite sum
\begin{equation}\label{eq:supervised}
\obj(\labpred) = \frac{1}{\nbsamp}\sum_{i=1}^\nbsamp \obj^{(i)}(\labpred^{(i)}),
\end{equation}
where $\obj^{(i)}$ are losses on the labels predicted by the chain of computations, i.e., $\obj^{(i)}(\labpred^{(i)}) = \loss(\labpred^{(i)}, \lab^{(i)})$ with $\lab^{(i)}$ the label of $\fixedstate_i$, and $\loss$ is a given loss such as the squared loss or the logistic loss (see Appendix~\ref{ssec:supervised_obj}).

\paragraph{Unsupervised learning}
In unsupervised learning tasks the labels are unknown. The objective itself is defined through a minimization problem rather than through an explicit loss function. For example, a convex clustering objective~\citep{hocking2011clusterpath} is written
\begin{align*}
\obj(\labpred)=  \min_{\lab^{(1)},\ldots, \lab^{(\nbsamp)} \in \reals^\dimlabel} & \sum_{i=1}^\nbsamp\frac{1}{2}\|\lab^{(i)}-\labpred^{(i)} \|_2^2  + \sum_{i<j}\|\lab^{(i)}-\lab^{(j)}\|_2.
\end{align*} 
We consider in
Appendix~\ref{ssec:unsupervised_obj}  different clustering objectives. Note that classical ones, such as the one of $k$-means or spectral clustering, are inherently non-smooth, i.e., non-continuously differentiable.

\subsection{Layer decomposition}\label{ssec:deep}
The $t$\textsuperscript{th} layer of a deep network can be described by the following components:
\begin{enumerate}[label=(\roman*)]
	\item  a bi-affine operation such as a matrix multiplication or a convolution, denoted $\biaffine_t: \reals^{\dimstate_{t-1}} \times \reals^{\dimctrl_t}   \rightarrow \reals^{\diminter_t}$ and decomposed as
	\begin{equation}\label{eq:biaffine}
		\biaffine_t(\state_{t-1}, \ctrl_t) = \bilinear_t(\state_{t-1}, \ctrl_t) + \bilinear_t^\ctrl(\ctrl_t) + \bilinear_t^\state(\state_{t-1}) + \linearcste_t,
	\end{equation}
	where $\bilinear_t$ is bilinear, $\bilinear^\ctrl_t$ and $\bilinear_t^\state$ are linear and $\linearcste_t$ is  a constant vector,
	\item an activation function, such as the element-wise application of a non-linear function,  denoted $\activ_t: \reals^{\diminter_t} \rightarrow \reals^{\diminter_t}$,
	\item  a reduction of dimension, such as a pooling operation, denoted $\pool_t : \reals^{\diminter_t } \rightarrow \reals^{\dimstate_t}$,
	\item a normalization of the output, such as batch-normalization, denoted $\normal_t: \reals^{\dimstate_t} \rightarrow \reals^{\dimstate_t}$.
\end{enumerate}
By concatenating the non-affine operations, i.e., defining $\nonlin_t = \normal_t \circ \pool_t \circ \activ_t$, 
a layer can be written as
\begin{align}\label{eq:deep_layer} 
	\dyn_t(\state_{t-1}, \ctrl_t) =\nonlin_t(\biaffine_t(\state_{t-1}, \ctrl_t)).
\end{align}
Note that some components may not be included, for example some layers do not include normalization. In the following, we consider the non-linear operation $\nonlin_t$ to be an arbitrary composition of functions, i.e., $\nonlin_t = \nonlin_{t, \nbcomp_t} \circ \ldots \circ \nonlin_{t, 1}$.
We present common examples of the components of a deep network.

\subsubsection{Linear operations}
In the following, we drop the dependency w.r.t. the layer $t$ and denote by a tilde $\tilde \cdot$ the quantities characterizing the output. We denote by semi-columns the concatenations of matrices by rows, i.e., for $A\in \reals^{d\times n}, B\in \reals^{q\times n}$, $(A;B) = (A^\top, B^\top)^\top$.
\paragraph{Fully connected layer}
A \emph{fully connected} layer taking an input of dimension $\diminput$ is written
\begin{equation}\label{eq:fully_connected}
	\latentnext = \Weight^\top \latent + \offset,
\end{equation}
where $\latent \in \reals^{\diminput}$ is the input, $\Weight \in \reals^{\diminput \times \diminputnext}$ are the weights of the layer and $\offset \in \reals^{\diminputnext}$ define the intercepts. 
By vectorizing the parameters and the inputs, a fully connected layer can be written as 
\begin{align*}
	& \tilde \state = \bilinear(\state, \ctrl) + \bilinear^\ctrl(\ctrl), \\
	\mbox{where} \qquad & \bilinear(\state, \ctrl) = \Weight^\top \latent \in \reals^{\diminputnext}, \ 
	\bilinear^\ctrl(\ctrl) = \offset,
	\\
	& 
	\state= \latent \in \reals^{\diminput}, \ 
	\ctrl = \Vect(\Weight; \offset) \in \reals^{\diminputnext (\diminput+1)}.
\end{align*}

\paragraph{Convolutional layer}
A \emph{convolutional} layer convolves an input (images or signals) of dimension $\diminput$ denoted $\latent \in \reals^{\diminput}$ with $\nbfilter$ affine filters of size $\dimfilter$ defined by weights $\Weight =(\weight_1, \ldots, \weight_{\nbfilter})\in \reals^{\dimfilter \times \nbfilter}$ and intercepts $\offset =(\offset_{1}, \ldots, \offset_{\nbfilter}) \in \reals^{\nbfilter}$ through $\nbpatch$ patches. The $k$\textsuperscript{th} output of the convolution of the input by the $j$\textsuperscript{th} filter reads 
\begin{equation}\label{eq:conv}
	\Xi_{j, k} = \weight_{j}^\top\patch_k \latent + \offset_{j},
\end{equation}
where $\patch_k\in \reals^{\dimfilter \times \diminput}$ extracts a patch of size $\dimfilter$ at a given position of the input $\latent$.  The output $\latentnext$ is then given by the concatenation $\latentnext_{k + \nbpatch(j-1)} =\Xi_{ j, k}$.
By vectorizing the inputs and the outputs, the convolution operation is defined by a set of matrices $(\patch_k)_{k=1}^{\nbpatch}$ such that
\begin{align*}
	&\tilde \state = \bilinear(\state, \ctrl)  + \bilinear^\ctrl(\ctrl),\\
	\mbox{where} \qquad &
	\bilinear( \state, \ctrl) = (\weight_{j}^\top\patch_k \latent)_{j=1,\ldots, \nbfilter ; k=1,\ldots, \nbpatch} \in \reals^{\nbfilter \nbpatch},
	\ \bilinear^\ctrl(\ctrl) =  \offset \otimes \ones_{\nbpatch},\\
	& \state = \latent \in \reals^{\diminput}, \
	\ctrl = \Vect(\Weight; \offset) \in  \reals^{(\dimfilter +1)\nbfilter}, \ \Weight = (\weight_1, \ldots, \weight_{\nbfilter}).
\end{align*}

\subsubsection{Activation functions}
We consider element-wise activation functions $\activ:\reals^\diminter \rightarrow \reals^\diminter$  such that for a given $\state =(\state_1,\ldots, \state_\diminter) \in \reals^\diminter $, 
\begin{equation}\label{eq:element_wise}
	\activ(\state) = (\activelt(\state_1), \ldots, \activelt(\state_\diminter)),
\end{equation}
for a given scalar function $\activelt$ such as  $\activelt(x) = \max(x, 0)$ for the Rectified Linear Unit (ReLU) or $\activelt(x) = (1+\exp(-x))^{-1}$ for the sigmoid function.

\subsubsection{Pooling functions} 
A pooling layer reduces the dimension of the output. 
For example, an average pooling convolves an input image with a mean filter. Formally, for an input $\latent \in \reals^{\diminput}$, the average pooling with a patch size $\dimfilter$ for inputs with $\nbfilter$ channels and $\nbpatch$ coordinates such that $\diminput= \nbfilter \nbpatch$ convolves the input with a filter $P = \ones_{\dimfilter} \ones_{\nbfilter}^\top/\dimfilter$.
The output dimension for each input is $\tilde \diminput =\nbfilter \tilde \nbpatch $ and the patches, represented by some $(\patch_k)_{k=1}^{\tilde{\nbpatch}}$ acting in Eq.~\eqref{eq:conv}, are chosen such that it induces a reduction of dimension, i.e., $\tilde{\nbpatch} \leq \nbpatch$.

\subsubsection{Normalization functions}
Given a batch of input $\Latent \in \reals^{\diminput \times \batchsize}$ the batch-normalization outputs $\tilde \Latent$ defined by
\begin{align}\label{eq:batchnorm}
	(\tilde \Latent)_{ij} & = \frac{\Latent_{ij} - \mu_i}{\sqrt{\epsilon +\sigma_i^2}}, \\
	\mbox{where} \qquad  \mu_i & = \frac{1}{\batchsize}\sum_{j=1}^\batchsize \Latent_{ij}, \quad \sigma_i^2 = \frac{1}{\batchsize}\sum_{j=1}^\batchsize (\Latent_{ij} - \mu_i)^2, \nonumber
\end{align}
with $\batchreg >0$, such that the vectorized formulation of the batch-normalization reads $\normal(\state) = \Vect(\tilde \Latent)$ for $\state = \Vect(\Latent)$.

\subsection{Specific structures}
\subsubsection{Auto-encoders}
An auto-encoder seeks to learn a compact representation of some data $\fixedstate\in \reals^d$ by passing it through an encoder network with output dimension $\hat d\ll d$ then a decoder network with output dimension $ d$ with the objective that the final output is close to the original input. Each network can be represented by a chain of computations. Given $\nbsamp$ data points $\fixedstate = (\fixedstate^{(1)}: \ldots;\fixedstate^{(\nbsamp)})$, denoting $\chain^e$ the encoder with parameters $\ctrls_e$ such that $\chain_{\ctrls_e}^e: \reals^d \rightarrow \reals^{\hat d}$ and $\chain^d$ the decoder with parameters $\ctrls_d$ such that $\chain_{\ctrls_d}^d: \reals^{\hat d} \rightarrow \reals^{d}$, the objective is 
\[
\min_{\ctrls^e, \ctrls^d} \frac{1}{n}\sum_{i=1}^n \|\fixedstate^{(i)} - \chain^d (\chain^e(\fixedstate^{(i)}, \ctrls_e), \ctrls_d)\|_2^2.
\]
The composition of the encoder and the decoder form a chain of computations such that the overall objective can be written as in~\eqref{eq:pb} as detailed in Appendix~\ref{ssec:deep}. Formally, denoting $\chain^0 :\state_0, \ctrls \rightarrow\chain^d (\chain^e(\state_0, \ctrls_e), \ctrls_d)$  for $\ctrls = (\ctrls_d;\ctrls_e)$ the resulting chain of computations on a single input and $\chain(\fixedstate,\ctrls) = (\chain^0(\fixedstate^{(1)}, \ctrls); \ldots;\chain^0(\fixedstate^{(\nbsamp)}, \ctrls)$ the concatenation of the outputs applied to the set of inputs, the objective of an auto-encoder has the form
$
\obj(\chain(\fixedstate, \ctrls))
$
with $\obj(\hat \state) = \frac{1}{n}\sum_{i=1}^n\|\fixedstate_i - \hat \state_i\|_2^2$.

\subsubsection{Dense, highway or residual networks}
Dense networks use not only the last input but all previous ones.  The output of such networks  can be described as
\begin{align}
	\label{eq:dense_net}
	\chain_\horizon(\state_0, \auxctrls) = \state_\horizon \quad \mbox{with} \qquad \state_t & =	\dyn_t( \state_{0:t-1}, \auxctrl_t), \quad  \state_{0:t-1} = (\state_{0}; \ldots; \state_{t-1}) \quad \mbox{for} \: t=1, \ldots, \horizon,
\end{align}
where  $ \auxctrl_t= (\ctrl_{t,0}; \ldots \ctrl_{t,t-1})$ are the parameters of the layer dispatched with one set of parameters per previous state and $\auxctrls =(\auxctrl_1;\ldots;\auxctrl_\horizon)$. The dynamics can be described as previously as $ \dyn_t(\state_{0:t-1}, \auxctrl_t) = \nonlin_t(\biaffine_t(\state_{0:t-1}, \auxctrl_t))$. The bilinear operation $\biaffine_t$ is still a matrix multiplication or a convolution as previously presented except that it incorporates more variables. The non-linear operation $\nonlin_t$ is also the same, i.e., it incorporates an activation function and, potentially, a pooling operation and a normalization operation.

Dense networks can naturally be translated as a single input-output transformation by defining layers of the form 
\begin{align}\nonumber
	\state_{0:t} =  \dynaux_t(\state_{0:t-1}, \auxctrl_t) = (\state_0; \ldots;  \state_{t-1}; \dyn_t(\state_{0:t-1}, \auxctrl_t) ) =   (\state_{0:t-1}; \dyn_t(\state_{0:t-1}, \auxctrl_t) ) \quad \mbox{for} \: t=1, \ldots, \horizon,
\end{align}
and $\chain_\horizon (\state_0, \auxctrls) = E_\horizon  \state_{0:\horizon} = \state_\horizon$ where $E_\horizon$ is a linear projector that extracts $\state_\horizon$ from $ \state_{0:\horizon}$.

Highway networks are dense networks that consider only the last input and the penultimate one, i.e., they are of the form~\eqref{eq:dense_net} except that they propagate only $ \state_{t-1:t} = (\state_{t-1}, \state_t)$. Namely they are defined by 
\begin{align}\nonumber
	\state_{t-1:t} =  \dynaux_t(\state_{t-2:t-1}, \auxctrl_t) =  (\state_{t-1}; \dyn_t(\state_{t-2:t-1}, \auxctrl_t) ) \quad \mbox{for} \: t=1, \ldots, \horizon,
\end{align}
with  $ \auxctrl_t= (\ctrl_{t,t-2};\ctrl_{t,t-1})$.
Finally, residual networks are highway networks with fixed parameters acting on the penultimate input. In the simple case where the current and penultimate inputs have the same dimension, they read
\begin{align}\label{eq:chain_res_def}
	\state_t & =	\nonlin_t(\biaffine_t(\state_{t-1}, \ctrl_t) + \state_{t-2}) = \nonlin_t(\tilde \biaffine_t(\state_{t-2:t-1}, \ctrl_t)) \quad \mbox{for} \: t=1, \ldots, \horizon, 
\end{align}
with $\state_{-1} = 0$, 
where $\biaffine_t$ and $\nonlin_t$ are of the forms described above.
This amounts to define layers $ \dynaux_t$ on $\state_{t-2:t-1}$ whose bi-affine operation $\tilde \biaffine_t(\state_{t-2:t-1}, \ctrl_t)$ has a non-zero affine term $\tilde \bilinear_t^\state$ on $\state_{t-2:t-1} = (\state_{t-2}; \state_{t-1})$, see Appendix~\ref{ssec:residual}.

\subsubsection{Implicit functions}
We consider implicit functions that take the form 
\[
\Impfunc(\impfix) = \argmin_{\impvar \in \reals^\dimimpvar} \impfunc(\impfix, \impvar)
\]
where $\impfunc$ is twice differentiable and $\impfunc(\alpha, \cdot)$ is strongly convex for any $\impfix$ such that $\Impfunc(\impfix)$ is uniquely defined . These can be used either in the objective as seen before with clustering tasks, in that case $\impfix = \state_\horizon$. These can also be used in the layers such that $\impfix = (\state, \ctrl)$ and $\dyn(\state, \ctrl) = \argmin_{\impvar \in \reals^\dimimpvar} \impfunc(\state, \ctrl, \impvar)$. 

If the minimizer is computed exactly, we can compute the gradient by invoking the implicit function theorem.
Formally, denoting $\impgrad(\impfix, \impvar) = \nabla_{\impvar} \impfunc(\impfix, \impvar)$, the function $g(\alpha)$ is defined by the implicit equation $\impgrad(\impfix, \Impfunc(\impfix)) = 0$ and its gradient is given by
\begin{align*}
	\nabla \Impfunc(\impfix) & = - \nabla_{\impfix} \impgrad(\impfix, \Impfunc(\impfix))\nabla_\impvar \impgrad(\impfix, \Impfunc(\impfix))^{-1} 
	= - \nabla_{\impfix, \impvar}^2\impfunc(\impfix, \Impfunc(\impfix)) \nabla_{\impvar, \impvar}^2 \impfunc(\impfix, \Impfunc(\impfix))^{-1} 
\end{align*}
The smoothness constants of this layer for exact minimizations are provided in Appendix~\ref{ssec:implicit}.

If the minimizer is computed approximately through an algorithm, its derivative can be computed by using automatic differentiation through the chain of computations defining the algorithm (see Subsection~\ref{ssec:autodiff} for a detailed explanation of automatic differentiation). Alternatively, an approximate gradient can be computed by using the above formula. 
The resulting approximation error of the gradient is given by the following lemma. 
\begin{restatable}{lemma}{approximplicitgrad}
	Let $\impfunc: (\impfix, \impvar) \rightarrow \impfunc(\impfix, \impvar) \in \reals$ for $\impfix \in \reals^{\dimimpfix}, \impvar \in \reals^\dimimpvar$ be  s.t. $\impfunc(\impfix, \cdot)$ is $\mu_\impfunc$-strongly convex for any $\impfix$ and denote 	 $\impgrad(\impfix, \impvar) = \nabla_{\impvar} \impfunc(\impfix, \impvar)$. Denote $\Impfunc(\impfix) = \argmin_{\impvar \in \reals^\dimimpvar} \impfunc(\impfix, \impvar)$ and $\hat \Impfunc(\impfix) \approx \argmin_{\impvar \in \reals^\dimimpvar} \impfunc(\impfix, \impvar)$ be an approximate minimizer. Provided that $\impfunc$ has a $\smooth_\impfunc$-Lipschitz gradient and a $\smoothess_\impfunc$-Lipschitz Hessian, the approximation error of using 
	\[
	\widehat \nabla \hat \Impfunc(\impfix) = - \nabla_{\impfix} \impgrad(\impfix, \hat \Impfunc(\impfix))\nabla_\impvar \impgrad(\impfix, \hat \Impfunc(\impfix))^{-1}
	\]
	instead of $\nabla \Impfunc(\impfix)$ is bounded as
	\[
	\|\widehat \nabla \hat \Impfunc(\impfix) - \nabla \Impfunc(\impfix) \|_2 \leq \smoothess_\impfunc \mu_\impfunc^{-1}(1+ \smooth_\impfunc\mu_\impfunc^{-1})\|\hat \Impfunc(\impfix) - \Impfunc(\impfix)\|_2.
	\]
\end{restatable}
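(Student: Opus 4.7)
The plan is a direct perturbation analysis: rewrite the difference as the sum of two pieces, one capturing the perturbation of the cross-Hessian $\nabla_\impfix \impgrad$ and one capturing the perturbation of the inverse Hessian $(\nabla_\impvar \impgrad)^{-1}$, then bound each factor separately. Introduce the shorthand $A = \nabla_\impfix \impgrad(\impfix, \Impfunc(\impfix))$, $\hat A = \nabla_\impfix \impgrad(\impfix, \hat\Impfunc(\impfix))$, $B = \nabla_\impvar \impgrad(\impfix, \Impfunc(\impfix))$, $\hat B = \nabla_\impvar \impgrad(\impfix, \hat\Impfunc(\impfix))$, so that $\nabla \Impfunc(\impfix) = -AB^{-1}$ and $\widehat\nabla \hat\Impfunc(\impfix) = -\hat A\hat B^{-1}$. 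Adding and subtracting $A\hat B^{-1}$ and applying the standard resolvent identity $B^{-1} - \hat B^{-1} = B^{-1}(\hat B - B)\hat B^{-1}$ yields
\[
\widehat\nabla \hat\Impfunc(\impfix) - \nabla \Impfunc(\impfix) = -(\hat A - A)\hat B^{-1} + AB^{-1}(\hat B - B)\hat B^{-1}.
\]

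Next I would establish three ingredient bounds. First, $A,\hat A$ (and likewise $B,\hat B$) are blocks of $\nabla^2 \impfunc$ evaluated at the two points $(\impfix, \Impfunc(\impfix))$ and $(\impfix, \hat\Impfunc(\impfix))$, which differ only in the $\impvar$ coordinate; the $\smoothess_\impfunc$-Lipschitzness of $\nabla^2 \impfunc$ together with the fact that the spectral norm of a sub-block is bounded by the spectral norm of the full matrix then gives
\[
\|\hat A - A\|_2 \le \smoothess_\impfunc \|\hat\Impfunc(\impfix) - \Impfunc(\impfix)\|_2, \qquad \|\hat B - B\|_2 \le \smoothess_\impfunc \|\hat\Impfunc(\impfix) - \Impfunc(\impfix)\|_2.
\]
Second, $\mu_\impfunc$-strong convexity of $\impfunc(\impfix, \cdot)$ gives $B, \hat B \succeq \mu_\impfunc I$, hence $\|B^{-1}\|_2, \|\hat B^{-1}\|_2 \le \mu_\impfunc^{-1}$. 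Third, $\smooth_\impfunc$-Lipschitzness of $\nabla \impfunc$ implies $\|\nabla^2 \impfunc\|_2 \le \smooth_\impfunc$ everywhere, and again the cross-block inherits this bound, so $\|A\|_2 \le \smooth_\impfunc$.

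Combining these through sub-multiplicativity of the operator $2$-norm in the decomposition above and collecting terms yields the claimed
\[
\|\widehat\nabla \hat\Impfunc(\impfix) - \nabla \Impfunc(\impfix)\|_2 \le \smoothess_\impfunc\mu_\impfunc^{-1}\bigl(1 + \smooth_\impfunc\mu_\impfunc^{-1}\bigr)\|\hat\Impfunc(\impfix) - \Impfunc(\impfix)\|_2.
\]
There is no substantive obstacle; the only step requiring a moment of care is the block-norm inequality $\|M_{IJ}\|_2 \le \|M\|_2$, which follows by writing the block as $E_I^\top M E_J$ for the coordinate embeddings $E_I, E_J$, both of which are $1$-Lipschitz. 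Everything else is routine algebra and a single application of each of the three assumptions on $\impfunc$.
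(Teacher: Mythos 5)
Your proof is correct, and it reaches the bound by a route the paper only gestures at. The paper's own proof views both $\nabla \Impfunc(\impfix)$ and $\widehat\nabla\hat\Impfunc(\impfix)$ as values of the single map $h(\impfix,\impvar) = -\nabla_\impfix\impgrad(\impfix,\impvar)\nabla_\impvar\impgrad(\impfix,\impvar)^{-1}$ at the two points $\Impfunc(\impfix)$ and $\hat\Impfunc(\impfix)$, and bounds the Lipschitz constant of $h(\impfix,\cdot)$ by computing $\nabla_\impvar h$ through a lemma on the derivative of a matrix inverse, then controlling the two resulting tensor terms (one of size $\smoothess_\impfunc\mu_\impfunc^{-1}$, one of size $\smoothess_\impfunc\smooth_\impfunc\mu_\impfunc^{-2}$). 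Your telescoping decomposition $-(\hat A - A)\hat B^{-1} + AB^{-1}(\hat B - B)\hat B^{-1}$ produces exactly the same two terms but at the finite-difference level: the resolvent identity plays the role of the derivative-of-inverse lemma, and the block-norm inequality $\|E_I^\top M E_J\|_2\le\|M\|_2$ replaces the tensor-norm bookkeeping. The paper explicitly remarks that ``same results can be obtained by considering differences of the functions''; your argument is precisely that variant, carried out in full. What it buys is that you never need third derivatives of $\impfunc$ to exist: the stated hypothesis of a Lipschitz Hessian is applied directly to the differences $\hat A - A$ and $\hat B - B$, whereas the paper's differentiation of $h$ tacitly assumes $\impgrad$ is twice differentiable. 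No gaps.
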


\section{Oracle arithmetic complexity}\label{sec:oracles}
For each class of optimization algorithm considered (gradient descent, Gauss-Newton, Newton), we define the appropriate optimization oracle called at each step of the optimization algorithm which can be efficiently computed through a dynamic programming procedure. For a gradient step, we retrieve the gradient back-propagation algorithm. The gradient back-propagation algorithm forms then the basis of automatic-differentiation procedures. 

\subsection{Oracle reformulations}
In the following, we use the notations presented in Sec.~\ref{sec:notations} for gradients, Hessians and tensors. Briefly, $\nabla f(x)$ is used to denote the gradient of a function $f$ at $x$, which, if $f:\reals^p \rightarrow\reals^d$ is multivariate, is the transpose of the Jacobian, i.e., $\nabla f(x) \in \reals^{p\times d}$. For a multivariate function $f:\reals^p\rightarrow\reals^ d$, its second order information at $x$ is represented by a tensor $\nabla^2 f(x) \in \reals^{p\times p\times d}$, and we denote for example $\nabla^2 f(x) [y, y, \cdot] = (y^\top\nabla^2 f^{(1)}(x))y; \ldots; y^\top\nabla^2 f^{(n)}(x)y) \in \reals^d $. 
For a function $f:\reals^p \rightarrow \reals^ d$, we define, provided that $\nabla f(x)$, $\nabla f^2(x)$ are defined, 
\begin{align}
	\lin_f^\state(y) =\nabla f(x)^\top y, \label{eq:lin_approx}\qquad 
	\qua_f^\state(y)  = \nabla f(x)^\top y +\frac{1}{2} \nabla^2f(x)[y, y, \cdot],
\end{align}
such that the linear and quadratic approximations of $f$ around $x$  are  $f(x+ y) \approx f(x) +\lin_f^\state(y)$ and $f(x+ y) \approx f(x) + \qua_f^\state(y)$ respectively. 

We consider optimization oracles as procedures that compute either the next step of an optimization method or a decent direction along which the next step of an optimization method is taken. Formally, the optimization oracles for an objective $f$ are defined by a model $m_f^\ctrls$ that approximates the objective around the current point $\ctrls$ as $f(\ctrls+\auxctrl) \approx f(\ctrls) +m_f^\ctrls(\auxctrl)$. 
The models can be minimized with an additional proximal term that ensures that the minimizer  lies in a region where the model approximates well the objective as
\[
\auxctrl^*_\stepsize = \argmin_{\auxctrl\in \reals^{p}} m_f^\ctrls( \auxctrl) + \frac{1}{2\stepsize}\|\auxctrl\|_2^2 , \qquad \nextctrls   = \currctrls +  \auxctrl^*_\stepsize.
\]
The parameter $\stepsize$ acts as a stepsize that controls how large should be the step (the smaller the $\stepsize$, the smaller the $\auxctrl^*_\stepsize$).
Alternatively the model can be minimized directly providing a descent direction along which the next iterate is taken as
\begin{align*}
	\auxctrl^*  = \argmin_{\auxctrl\in \reals^{p}} m_f^\ctrls(\auxctrl) \qquad
	\nextctrls   = \currctrls + \stepsize \auxctrl^*,
\end{align*}
where $\stepsize$ is found by a line-search using e.g. an Armijo condition~\citep{nocedal2006numerical}.

On a point $\currctrls\in \reals^\dimvar$, given a regularization $\kappa$, for an objective of the form $\obj\circ \chainoutput +\reg:\reals^p \rightarrow \reals$,
	\begin{enumerate}[nosep, leftmargin=*]
	\item[(i)] a \emph{gradient} oracle is defined as 
	\begin{align}
	\oraclectrls = \argmin_{\auxctrls \in \reals^\dimvar} \: & \lin_{\obj \circ\chainoutput}^\currctrls(\auxctrls) +
	 \lin_\reg^\currctrls(\auxctrls) +\frac{\kappa}{2}\|\auxctrls\|_2^2, \label{eq:grad_step}
	\end{align}
	\item[(ii)] a (regularized) \emph{Gauss-Newton} oracle is defined as
	\begin{align}
	\oraclectrls = \argmin_{\auxctrls \in \reals^\dimvar}\: & 
	\qua_{\obj}^{\chainoutput(\currctrls)}(\lin_\chainoutput^\currctrls(\auxctrls)) 
	+ \qua_\reg^\currctrls(\auxctrls) + 
	\frac{\kappa}{2}\|\auxctrls\|_2^2, 
	\label{eq:gn_step}
	\end{align}
	\item[(iii)] a (regularized) \emph{Newton} oracle is defined as
	\end{enumerate}
	\begin{align}
	\oraclectrls = \argmin_{\auxctrls \in \reals^\dimvar} \: & \qua_{\obj\circ \chainoutput}^\currctrls(\auxctrls) + \qua_\reg^\currctrls(\auxctrls) + \frac{\kappa}{2}\|\auxctrls\|_2^2. \label{eq:newton_step}
	\end{align}
\begin{restatable}{proposition}{linquad}\label{prop:lin_quad}
	Let $\chain$ be a chain of $\horizon$ computations $\dyn_t:\reals^{\dimstate_{t-1}} \times \reals^{\dimctrl_t} \rightarrow \reals^{\dimstate_t}$, $\currctrls = (\ctrl_1; \ldots;\ctrl_\horizon)$ and $\state_0 \in \reals^{\dimstate_0}$. Denote $\chainoutput = \chain_{\state_0, \horizon}$ and $\chain(\state_0, \ctrls) =  (\state_1;\ldots; \state_\horizon)$. Assume $\reg$ to be decomposable as $\reg(\currctrls) = \sum_{t=1}^\horizon \reg_t(\ctrl_t)$.
	Gradient~\eqref{eq:grad_step}, Gauss-Newton~\eqref{eq:gn_step} and Newton~\eqref{eq:newton_step} oracles on $\obj\circ \chainoutput + \reg$ are the solutions $\oraclectrls = (\oraclectrl_1;\ldots;\oraclectrl_\horizon)$ of problems of the form
	\begin{align}
		\min_{\substack{\auxctrl_1,\ldots, \auxctrl_\horizon \in \reals^{\dimctrl_1} \times \ldots \times \reals^{\dimctrl_\horizon}\\\auxstate_0,\ldots, \auxstate_\horizon \in \reals^{\dimstate_0} \times  \ldots \times \reals^{\dimstate_\horizon}}} \quad &  
		\sum_{t=1}^{\horizon} \frac{1}{2}\auxstate_t^\top \PP_t \auxstate_t  + \p_t^\top \auxstate_t
		+\auxstate_{t-1}^\top \R_t \auxctrl_t   + \frac{1}{2}\auxctrl_t^\top \Q_t\auxctrl_t + \q_t^\top\auxctrl_t 
		+ \frac{\kappa}{2}\|\auxctrl_t\|_2^2 \label{eq:lin_quad}\\
		\mbox{subject to} \quad & \auxstate_t= \A_t \auxstate_{t-1} + \B_t \auxctrl_t \qquad \mbox{for} \quad  t \in \{1,\ldots,\horizon\}, \nonumber\\
		& \auxstate_0 = 0, \nonumber
	\end{align}
	where 
	\begin{flalign*}
		\A_t = \nabla_{\state_{t-1}} \dyn_t(\state_{t-1}, \ctrl_t)^\top, \quad &
		\B_t = \nabla_{\ctrl_t} \dyn_t(\state_{t-1}, \ctrl_t)^\top,
		\\
		\p_\horizon = \nabla \obj(\chainoutput(\currctrls)),  \quad &  p_t = 0  \qquad \mbox{for $t\neq\horizon$},   \\
		&  \q_t = \nabla \reg_t(\ctrl_t),  
	\end{flalign*} 
	\begin{enumerate}[nosep, leftmargin=*]
		\item for gradient oracles~\eqref{eq:grad_step}, 
		\[
		\PP_t = 0, \quad \R_t = 0,\quad \Q_t = 0,
		\]
		\item for Gauss-Newton oracles~\eqref{eq:gn_step}, 
		\begin{gather*}
		\PP_\horizon = \nabla^2\obj(\chainoutput(\currctrls)), \quad
		\PP_t = 0 \quad \mbox{for} \ t\neq\horizon,  \quad
		\R_t = 0, \quad \Q_t = \nabla^2 \reg_t(\ctrl_t),
		\end{gather*},
		\item for Newton oracles~\eqref{eq:newton_step}, 
		defining 
		\begin{gather*}
				\costate_\horizon  =\nabla\obj(\chainoutput(\currctrls)), \quad  
		\costate_{t-1}  = \nabla_{\state_{t-1}} \dyn_t(\state_{t-1}, \ctrl_t) \costate_t \quad \mbox{for} \ t\in\{1,\ldots, \horizon\},
		\end{gather*}
		we have 
		\begin{gather*}
		 \PP_\horizon =  \nabla^2\obj(\chainoutput(\currctrls)),  \quad
		\PP_{t-1} = \nabla^2_{\state_{t-1}\state_{t-1}}\dyn_t(\state_{t-1}, \ctrl_t)[\cdot, \cdot, \costate_t] \quad  \mbox{for} \ t \in\{1, \ldots, \horizon\}, \\
		\R_t = \nabla^2_{\state_{t-1}\ctrl_t} \dyn_t(\state_{t-1}, \ctrl_t)[\cdot, \cdot, \costate_t], \quad
		\Q_t = \nabla^2 \reg_t(\ctrl_t) 
		+ \nabla^2_{\ctrl_t\ctrl_t} \dyn_t(\state_{t-1}, \ctrl_t)[\cdot, \cdot, \costate_t].
		\end{gather*}
	\end{enumerate}
\end{restatable}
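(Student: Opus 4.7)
The unifying device is to re-express any first-order perturbation of $\chainoutput$ at $\currctrls$ in terms of auxiliary state variables. Specifically, for a direction $\auxctrls = (\auxctrl_1;\ldots;\auxctrl_\horizon)$, define $\auxstate_t$ as the first-order perturbation of $\state_t$ propagated through the chain. Differentiating the recursion $\chain_t(\state_0,\ctrls) = \dyn_t(\chain_{t-1}(\state_0,\ctrls),\ctrl_t)$ with respect to $\ctrls$ at $\currctrls$ and applying the chain rule, one obtains that $\auxstate_t = \A_t \auxstate_{t-1} + \B_t \auxctrl_t$ with $\auxstate_0 = 0$, and consequently $\auxstate_\horizon = \nabla \chainoutput(\currctrls)^\top \auxctrls = \lin_\chainoutput^\currctrls(\auxctrls)$. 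So the linear dynamics constraint in~\eqref{eq:lin_quad} is simply a rewriting of the Jacobian action of $\chainoutput$ along the chain.

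\textbf{Gradient and Gauss--Newton.} With the reformulation above, both cases reduce to direct substitution. For the gradient oracle, $\lin_{\obj\circ\chainoutput}^\currctrls(\auxctrls) = \nabla\obj(\chainoutput(\currctrls))^\top \auxstate_\horizon$, which contributes the terminal term $\p_\horizon^\top \auxstate_\horizon$, while the decomposable regularizer contributes $\sum_t \q_t^\top \auxctrl_t$, and all quadratic coefficients vanish. For the Gauss--Newton oracle, the extra piece $\tfrac12 \nabla^2\obj(\chainoutput(\currctrls))[\lin_\chainoutput^\currctrls(\auxctrls),\lin_\chainoutput^\currctrls(\auxctrls)] = \tfrac12 \auxstate_\horizon^\top \nabla^2\obj(\chainoutput(\currctrls))\auxstate_\horizon$ produces $\PP_\horizon$; $\qua_\reg^\currctrls$ gives $\Q_t = \nabla^2 \reg_t(\ctrl_t)$; and the absence of a second-order term in $\chainoutput$ leaves $\R_t$, $\PP_t$ ($t\neq \horizon$) zero. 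Both identifications are mechanical.

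\textbf{Newton.} This is where the co-states enter and is the main obstacle. The goal is to rewrite $\qua_{\obj\circ\chainoutput}^\currctrls(\auxctrls)$ in the block-tridiagonal form of~\eqref{eq:lin_quad}. The cleanest route is a Lagrangian/Bolza reformulation: introduce multipliers $\costate_t$ for the (exact, not linearized) dynamics $\state_t = \dyn_t(\state_{t-1},\ctrl_t)$, form $\mathcal{L}(\ctrls,\state_{1:\horizon},\costate_{1:\horizon}) = \obj(\state_\horizon) + \reg(\ctrls) + \sum_t \costate_t^\top(\dyn_t(\state_{t-1},\ctrl_t) - \state_t)$, and observe that at the primal-feasible point $\state_t = \chain_t(\state_0,\currctrls)$ the stationarity in $\state_t$ yields exactly the backward recursion $\costate_\horizon = \nabla\obj(\chainoutput(\currctrls))$, $\costate_{t-1} = \nabla_{\state_{t-1}} \dyn_t(\state_{t-1},\ctrl_t)\costate_t$ stated in the proposition. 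A standard sensitivity argument then shows that $\qua_{\obj\circ\chainoutput+\reg}^\currctrls(\auxctrls)$ coincides with the second-order Taylor expansion of $\mathcal{L}$ in $(\state_{1:\horizon},\ctrls)$ around the current point, restricted to the linearized-dynamics subspace $\auxstate_t = \A_t\auxstate_{t-1} + \B_t\auxctrl_t$. Expanding each Hamiltonian block $\costate_t^\top \dyn_t(\state_{t-1},\ctrl_t)$ to second order and invoking the tensor-contraction identities $\nabla^2_{\state\state}\dyn_t[\cdot,\cdot,\costate_t]$, $\nabla^2_{\state\ctrl}\dyn_t[\cdot,\cdot,\costate_t]$, $\nabla^2_{\ctrl\ctrl}\dyn_t[\cdot,\cdot,\costate_t]$ identifies precisely the coefficients $\PP_{t-1}$, $\R_t$, and the $\dyn$-part of $\Q_t$, while the terminal $\nabla^2\obj$ gives $\PP_\horizon$ and the regularizer gives $\nabla^2\reg_t(\ctrl_t)$ in $\Q_t$.

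\textbf{Main obstacle.} The routine but delicate part is the bookkeeping of the tensor contractions $\nabla^2_{(\cdot,\cdot)}\dyn_t[\cdot,\cdot,\costate_t]$ and verifying that, after eliminating the second-order state perturbations using the co-state recursion (equivalently, telescoping the $\costate_t^\top(\A_t\auxstate_{t-1}+\B_t\auxctrl_t - \auxstate_t)$ terms), what remains is exactly the block-tridiagonal quadratic in $(\auxstate_{0:\horizon},\auxctrl_{1:\horizon})$ with no residual second-order chain terms. Once this accounting is carried out, the three cases line up with the matrices listed in the statement and the proof closes.
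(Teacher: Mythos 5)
Your reformulation of the linearized dynamics ($\auxstate_t=\A_t\auxstate_{t-1}+\B_t\auxctrl_t$, $\auxstate_\horizon=\nabla\chainoutput(\currctrls)^\top\auxctrls$) and your treatment of the gradient and Gauss--Newton cases match the paper's proof exactly; those parts are indeed mechanical substitutions once the linear recursion is in place. For the Newton case you take a genuinely different, though closely related, route. The paper works directly with the second-order chain rule (Lemma~\ref{lem:derivatives_chain}, Eq.~\eqref{eq:hess_decomp}): it contracts the Hessian tensor $\nabla^2\chain_{\state_0,t}$ against $(\auxctrls,\auxctrls,\costate_t)$, observes that the contraction vector propagates backward via $\costate_{t-1}=\nabla_{\state_{t-1}}\dyn_t(\state_{t-1},\ctrl_t)\costate_t$, and unrolls the recursion to obtain the layerwise sum of contracted second derivatives (Eq.~\eqref{eq:newton_decomp}). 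You instead introduce the co-states as Lagrange multipliers of the exact dynamics and identify $\qua_{\obj\circ\chainoutput}^{\currctrls}$ with the second-order expansion of the Lagrangian restricted to the linearized-constraint subspace. The two are equivalent: your ``standard sensitivity argument'' is, once written out, precisely the elimination of the second-order state perturbation $\delta^2\state_t$ via the telescoping identity $\costate_t^\top\A_t\,\delta^2\state_{t-1}=\costate_{t-1}^\top\delta^2\state_{t-1}$, which is the same computation the paper packages as the Hessian recursion, so neither route avoids the second-order bookkeeping you flag as the main obstacle. What your framing buys is the optimal-control interpretation --- the co-states appear as adjoint variables of a Bolza problem, consistent with the control-literature derivations the paper cites --- and it makes transparent why the adjoint recursion must be the one stated. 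What the paper's framing buys is self-containedness: the entire argument reduces to inserting the two recursions of Lemma~\ref{lem:derivatives_chain} and invoking the decomposability of $\reg$, with no appeal to constrained-optimization sensitivity theory. In either case the identification of $\A_t$, $\B_t$, $\PP_t$, $\R_t$, $\Q_t$, $\p_t$, $\q_t$ is correct.
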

Problems of the form
\begin{align}
	\min_{\substack{\ctrl_1,\ldots, \ctrl_\horizon \in \reals^{\dimctrl_1} \times \ldots \times \reals^{\dimctrl_\horizon}\\\state_0,\ldots, \state_\horizon \in \reals^{\dimstate_0} \times  \ldots \times \reals^{\dimstate_\horizon}}} \quad & \sum_{t=1}^\horizon h_t(\state_t) + \sum_{t=1}^\horizon g_t(\ctrl_t) \label{eq:dyn_pb} \\
	\mbox{subject to} \quad & \state_t = \dyn_t(\state_{t-1}, \ctrl_t) \quad \mbox{for} \ t \in \{1, \ldots, \horizon\}, \nonumber \\
	& \state_0 = \hat \state_0 \nonumber
\end{align}
can be decomposed  into nested  subproblems defined as the cost-to-go from $\hat \state_t$ at time $t$ by
\begin{align*}
	\costogo_t(\hat \state_t) = 
	\min_{\substack{\ctrl_{t+1},\ldots, \ctrl_\horizon \in \reals^{\dimctrl_{t+1}} \times \ldots \times \reals^{\dimctrl_\horizon}\\\state_t,\ldots, \state_\horizon \in \reals^{\dimstate_t} \times  \ldots \times \reals^{\dimstate_\horizon}}} \quad & \sum_{t'=t}^\horizon h_{t'}(\state_{t'}) + \sum_{t'=t+1}^\horizon g_{t'}(\ctrl_{t'})  
	\\
	\mbox{subject to} \quad & \state_{t'} = \dyn_{t'}(\state_{t'-1}, \ctrl_{t'}) \quad \mbox{for} \ t' \in \{t+1, \ldots, \horizon\}, \\
	& \state_t = \hat \state_t,
\end{align*}
such that they follow the recursive relation 
\begin{equation}\label{eq:bellman}
\costogo_{t}(\hat \state_t) = \min_{\ctrl_{t+1} \in \reals^{\dimctrl_{t+1}}} \{h_t(\hat \state_t) + g_{t+1}(\ctrl_{t+1}) + \costogo_{t+1}(\dyn_{t+1}(\state_t, \ctrl_{t+1}))\}.
\end{equation}
This principle cannot be used directly on the original problem, since Eq.~\eqref{eq:bellman} cannot be solved analytically for generic problems of the form~\eqref{eq:dyn_pb}. However, for quadratic problems with linear compositions of the form~\eqref{eq:lin_quad}, this principle can be used to solve problems~\eqref{eq:lin_quad} by dynamic programming  \citep{bertsekas2005dynamic}. Therefore as a corollary of Prop.~\ref{prop:lin_quad},  the complexity of all optimization steps given in~\eqref{eq:grad_step},~\eqref{eq:gn_step},~\eqref{eq:newton_step} is linear w.r.t. to the length $\horizon$ of the chain. Precisely, Prop.~\ref{prop:lin_quad} shows that each optimization step amounts to reducing the complexity of the recursive relation~\eqref{eq:bellman} to an analytic problem.

In particular, while the Hessian of the objective scales as $\sum_{t=1}^\horizon \dimctrl_t$, a Newton step has a linear and not cubic complexity with respect to $\horizon$.
We present in Appendix~\ref{sec:oracle_proofs}
the detailed computation of a Newton step, alternative derivations were first proposed in the control literature \citep{dunn1989efficient}. This involves the inversion of intermediate quadratic costs at each layer. Gauss-Newton steps can also be solved by dynamic programming and can be more efficiently implemented using an automatic-differentiation oracles as we explain below. 

\subsection{Automatic differentiation}\label{ssec:autodiff}
\subsubsection{Algorithm}
As explained in last subsection and shown in Appendix~\ref{sec:oracle_proofs}, a gradient step can naturally be  derived as a dynamic programming procedure applied to the subproblem \eqref{eq:lin_quad}. 
However, the implementation of the gradient step provides itself a different kind of oracle on the chain of computations as defined below.
\begin{definition}
	Given a chain of computations $\chain:  \reals^{\sum_{t=1}^{\horizon} \dimctrl_t} \times\reals^{\dimstate_0}\rightarrow \reals^{\sum_{t=1}^\horizon \dimstate_t}$ as defined in Def.~\ref{def:chain}, $\ctrls \in \reals^{\sum_{t=1}^\horizon \dimctrl_t}$ and $\state_0 \in \reals^{\dimstate_0}$, an \emph{automatic differentiation oracle} is a procedure that gives access to
	\[
	\dualvar \rightarrow \nabla\chain_{\state_0, \horizon}(\ctrls)\dualvar \quad \mbox{for any} \ \dualvar \in \reals^{\dimstate_\horizon}.
	\]
\end{definition}

The subtle difference is that \emph{we have access to $\nabla\chain_{\state_0, \horizon}(\ctrls)$ not as a matrix but as a linear operator}. The matrix $\nabla\chain_{\state_0, \horizon}(\ctrls)$ can also be computed and stored to perform gradient vector products. Yet, this requires a surplus of storage and of computations that are generally not necessary for our purposes. The only quantities that need to be stored are given by the forward pass. Then, these quantities can be used to compute any gradient vector product directly. 

The definition of an automatic differentiation oracle is composed of two steps:
\begin{enumerate}
	\item a \emph{forward} pass that computes $\chain_{\state_0, \horizon}( \ctrls)$ and stores the information necessary to compute gradient-vector products.
	\item the compilation of a \emph{backward} pass that  computes $\dualvar \rightarrow \nabla\chain_{\state_0, \horizon}( \ctrls)\dualvar$ for any $\dualvar \in \reals^{\dimstate_\horizon}$ given the information collected in the forward pass.
\end{enumerate}
Note that the two aforementioned passes are decorrelated in the sense that the forward pass does not require the knowledge of the slope $\dualvar$ for which $\nabla \chain_{\state_0, \horizon}(\ctrls)\dualvar$ is computed.

We present in Algo.~\ref{algo:forward_lin} and Algo.~\ref{algo:backward_lin} the classical forward-backward passes used in modern automatic-differentiation libraries. The implementation of the automatic differentiation oracle as a procedure that computes both the value of the chain $\chain_{\state_0, \horizon}(\ctrls)$ and the linear operator $\dualvar\rightarrow \chain_{\state_0, \horizon}(\ctrls)\dualvar$ is then presented in Algo.~\ref{algo:auto_diff} and illustrated in Fig.~\ref{fig:auto_diff}.

Computing the gradient $g= \nabla (\obj\circ \chain_{\state_0, \horizon})(\ctrls)$ on $\ctrls \in \reals^p$ amounts then to
\begin{enumerate}
	\item computing with Algo.~\ref{algo:auto_diff}, $\chain_{\state_0, \horizon}(\ctrls), \dualvar \rightarrow \nabla \chain_{\state_0, \horizon}(\ctrls) \dualvar = \operatorname{Autodiff}(\chain, \ctrls)$,
	\item computing $\mu = \nabla \obj(\chain_{\state_0, \horizon}(\ctrls))$ then $g = \nabla \chain_{\state_0, \horizon}(\ctrls) \mu$.
\end{enumerate}

\begin{figure}

\begin{algorithm}[H]\caption{Forward pass \label{algo:forward_lin}}
	\begin{algorithmic}[1]
		\State{{\bf Inputs:} Chain of computations $\chain$ defined by $(\dyn_t)_{t=1,\ldots, \horizon}$, input $\rand$ as in Def.~\ref{def:chain}, variable $\ctrls = (\ctrl_1; \ldots; \ctrl_\horizon)$ }
		\State{Initialize $\state_0 = \rand$}
		\For{$t = 1,\ldots, \horizon$}
		\State{Compute $\state_t = \dyn_t(\state_{t-1}, \ctrl_t)$}
		\State{Store $\nabla \dyn_t(\state_{t-1}, \ctrl_t)$}
		\EndFor
		\State{{\bf Output:} $\state_\horizon, \nabla \dyn_t(\state_{t-1}, \ctrl_t)$ for $t \in \{1,\ldots, \horizon\}$.
		}
	\end{algorithmic}
\end{algorithm}

\begin{algorithm}[H]\caption{Backward pass \label{algo:backward_lin}}
	\begin{algorithmic}[1]
		\State{{\bf Inputs:} Slope $\dualvar$, intermediate gradients $\nabla \dyn_t(\state_{t-1}, \ctrl_t)$ for $t \in \{1,\ldots, \horizon\}$}
		\State{Initialize $\costate_\horizon = \dualvar$}
		\For{$t =\horizon,\ldots, 1$}
		\State{Compute $\costate_{t-1} = \nabla_{\state_{t-1}}\dyn_t(\state_{t-1}, \ctrl_t)\costate_t $}
		\State{Store $g_t= \nabla_{\ctrl_t} \dyn_t(\state_{t-1}, \ctrl_t) \costate_t$}
		\EndFor
		\State{{\bf Output:} $(g_1,\ldots, g_\horizon)= \nabla \chain_{\state_0, \horizon}(\ctrls)\dualvar$ }
	\end{algorithmic}
\end{algorithm}

\begin{algorithm}[H]\caption{Chain of computations with automatic-differentiation oracle ($\operatorname{Autodiff}$) \label{algo:auto_diff}}
	\begin{algorithmic}[1]
		\State{{\bf Inputs:} Chain of computations $\chain$ defined by $(\dyn_t)_{t=1,\ldots, \horizon}$, input $\rand$ as in Def.~\ref{def:chain}, variable $\ctrls = (\ctrl_1; \ldots; \ctrl_\horizon)$ }
		\State{Compute using Algo.~\ref{algo:forward_lin} $\left(\state_\horizon, (\nabla \dyn_t(\state_{t-1}, \ctrl_t)_{t=1}^\horizon \right)= \operatorname{Forward}(\chain, \ctrls)$ which gives $\chain_{\state_0, \horizon}(\ctrls)= \state_\horizon $} 
		\State{Define $\dualvar \rightarrow \nabla \chain_{\state_0, \horizon}(\ctrls)\dualvar$ as $\dualvar \rightarrow \operatorname{Backward}(\dualvar, (\nabla \dyn_t(\state_{t-1}, \ctrl_t))_{t=1}^\horizon)$} according to Algo.~\ref{algo:backward_lin}.
		\State{{\bf Output:} $\chain_{\state_0, \horizon}(\ctrls), \dualvar \rightarrow \nabla \chain_{\state_0, \horizon}(\ctrls)\dualvar$}
	\end{algorithmic}
\end{algorithm}

\begin{figure}[H]
	\begin{center}
		\resizebox{\linewidth}{!}{%
			\begin{tikzpicture}[round/.style={circle, draw=black!60, thick, minimum size=30pt}, square/.style={regular polygon,regular polygon sides=4, draw=black!60, thick, minimum size=45pt, inner sep=2pt}, node distance=1em]
				\node[round] (x) {$\rand$};
				\node (forward) [above=2em of x, align=center, mediumblue]{Forward pass \\ $\ctrls \rightarrow \chain(\state_0, \ctrls)$ \\ $\ctrls {=}(\ctrl_1; \ldots;\ctrl_\horizon$)};
				\node[round] (z0) [right=2em of x, mediumblue] {$\state_0$};
				\node[square](phi1) [right=2em of z0] {$\dyn_1$};
				\node[round] (v1) [above=2em of phi1, teal] {$\ctrl_1$};
				\node[round] (z1) [right=2em of phi1, mediumblue] {$\state_1$};
				\node[square](phi2) [right=2em of z1] {$\dyn_2$};
				\node[round] (v2) [above=2em of phi2, teal] {$\ctrl_2$};
				\node[round] (z2) [right=2em of phi2, mediumblue] {$\state_2$};
				\node (dots) [right=2em of z2] {$\ldots$};
				\node[square](phitau) [right=2em of dots] {$\dyn_\horizon$};
				\node[round] (vtau) [above=2em of phitau, teal] {$\ctrl_\horizon$};
				\node[round] (ztau) [right=2em of phitau, mediumblue] {$\state_\horizon$};
				\draw[->, thick, mediumblue] (x) -- (z0);
				\draw[->, thick, mediumblue] (z0) -- (phi1);
				\draw[->, thick, teal] (v1) -- (phi1);
				\draw[->, thick, mediumblue] (phi1) -- (z1);
				\draw[->, thick, mediumblue] (z1) -- (phi2);
				\draw[->, thick, teal] (v2) -- (phi2);
				\draw[->, thick, mediumblue] (phi2) -- (z2);
				\draw[->, thick, mediumblue] (z2) -- (dots);
				\draw[->, thick, mediumblue] (dots) -- (phitau);
				\draw[->, thick, teal] (vtau) -- (phitau);
				\draw[->, thick, mediumblue] (phitau) -- (ztau);
				
				\node[square] (nablaphi1) [below=4em of phi1] {$\nabla \dyn_1$};
				\node[square] (nablaphi2) [below=4em of phi2] {$\nabla \dyn_2$};	
				\node[square] (nablaphitau) [below=4em of phitau] {$\nabla \dyn_\horizon$};
				\draw[->, thick, purple] (phi1) -- (nablaphi1);
				\draw[->, thick, purple] (phi2) -- (nablaphi2);
				\draw[->, thick, purple] (phitau) -- (nablaphitau);
				
				\node[round] (lambdatau) [right=2em of nablaphitau, mediumred] {$\costate_\horizon$};
				\node[round] (mu) [right=2em of lambdatau] {$\dualvar$};
				\node[round](gtau) [below=2em of nablaphitau, darkorange] {$g_\horizon$};
				\node (dotsback) [left=2em of nablaphitau] {$\ldots$};
				\node[round] (lambda2) [right=2em of nablaphi2, mediumred] {$\costate_2$};
				\node[round] (lambda1) [right=2em of nablaphi1, mediumred] {$\costate_1$};
				\node[round] (lambda0) [left=2em of nablaphi1, mediumred] {$\costate_0$};
				\node[round](g2) [below=2em of nablaphi2, darkorange] {$g_2$};
				\node[round](g1) [below=2em of nablaphi1, darkorange] {$g_1$};
				\draw[->, thick, mediumred] (mu) -- (lambdatau);
				\draw[->, thick, mediumred] (lambdatau) -- (nablaphitau);
				\draw[->, thick, darkorange] (nablaphitau) -- (gtau);
				\draw[->, thick, mediumred] (nablaphitau) -- (dotsback);
				\draw[->, thick, mediumred] (dotsback) -- (lambda2);
				\draw[->, thick, mediumred] (lambda2) -- (nablaphi2);
				\draw[->, thick, darkorange] (nablaphi2) -- (g2);
				\draw[->, thick, mediumred] (nablaphi2) -- (lambda1);
				\draw[->, thick, mediumred] (lambda1) -- (nablaphi1);
				\draw[->, thick, darkorange] (nablaphi1) -- (g1);
				\draw[->, thick, mediumred] (nablaphi1) -- (lambda0);
				
				\node (backward) [below=8em of x, align=center, mediumred]{Backward pass \\ $\dualvar{ \rightarrow }\nabla \chain_{\state_0, \horizon}(\ctrls) \dualvar {=}g$ \\ $g {=}(g_1; \ldots ;g_\horizon$)};
				\node (store) [below=3ex of x, align=center, purple]{Store gradients \\ $\nabla \dyn_t(\state_{t-1}, \ctrl_t)$};
			\end{tikzpicture}
		}
		\caption{Automatic differentiation of a chain of computations. \label{fig:auto_diff}}
	\end{center}
\end{figure}

\end{figure}

\subsubsection{Complexity}
Without additional information on the structure of the layers, the space and time complexities of the forward-backward algorithm is of the order of
\begin{align*}
\mathcal{S}_{\operatorname{FB}} & \leq \sum_{t=1}^{\horizon} (\dimctrl_t + \dimstate_{t-1})\dimstate_t,   \\ 
\mathcal{T}_{\operatorname{FB}} & \leq \sum_{t=1}^{\horizon} \mathcal{T}(\dyn_t, \nabla \dyn_t) + 2\sum_{t=1}^{\horizon}\left(\dimstate_{t-1}\dimstate_{t}+ \dimctrl_t\dimstate_{t}\right),
\end{align*}
respectively, where $\mathcal{T}(\dyn_t, \nabla \dyn_t)$ is the time complexity of computing $\dyn_t, \nabla \dyn_t$ during the backward pass. The units chosen are for the space complexity the cost of storing one digit and for the time complexity the cost of performing an addition or a multiplication. 

Provided that for all $t \in \{1,\ldots \horizon\}$,
\begin{equation}\label{eq:asm_bauer_strass}
\mathcal{T}(\dyn_t, \nabla \dyn_t) + 2(\dimstate_{t-1}\dimstate_{t}+ \dimctrl_t\dimstate_{t}) \leq Q \mathcal{T}(\dyn_t),
\end{equation}
where $\mathcal{T}(\dyn_t)$ is the time complexity of computing $\dyn_t$ and $Q\geq 0$ is  a constant,
we get that
\[
\mathcal{T}_{\operatorname{FB}} \leq  Q \mathcal{T}(\chain),
\]
where $\mathcal{T}(\chain)$ is the complexity of computing the chain of computations~\citep{kim1984estimate}. We
retrieve Baur-Strassen's theorem which states that the complexity of computing the derivative of a function formulated as a chain of computations is of the order of the complexity of computing the function itself~\citep{baur1983complexity, griewank2012}.

For chain of computations of the form~\eqref{eq:deep_layer}, this cost can be refined as shown in
Appendix~\ref{sec:oracle_proofs}. 
Specifically, for a chain of fully-connected layers with element-wise activation function, no normalization or pooling, the cost of the backward pass is then of the order of
$
\bigO\left(\sum_{t=1}^{\horizon}2\batchsize \diminput_{t}(\diminput_{t-1} +1)\right)
$
elementary operations.
For a chain of convolutional layers with element-wise activation function, no normalization or pooling, the cost of the backward pass is of the order of $
\bigO\left(\sum_{t=1}^{\horizon}(2 \nbpatch_t \nbfilter_t \dimfilter_t + \nbpatch_t\nbfilter_t + \diminput_t)\batchsize \right)
$ elementary operations.

\subsection{Gauss-Newton by automatic differentiation}
The Gauss-Newton step can also be solved by making calls to an automatic differentiation oracle. 
\begin{restatable}{proposition}{gaussnewtonautodiff}\label{prop:dual_gn_step}
	Consider the Gauss-Newton oracle~\eqref{eq:gn_step} on $\currctrls  = (\ctrl_1;\ldots;\ctrl_\horizon)$ for a convex objective $\obj$, a convex decomposable regularization $\reg(\ctrls) = \sum_{t=1}^\horizon \reg_t(\ctrl_t)$ and a differentiable chain of computations $\chain$ with output $\chainoutput = \chain_{\state_0, \horizon}$ on some input $\state_0$.  
	We have that
	\begin{enumerate}[nosep]
		\item the Gauss-Newton oracle amounts to solving 
		\begin{align}\label{eq:dual_gn_step}
			\min_{\dualvar \in \reals^{\dimstate_{\horizon}}}  \left(\qua_\obj^{\chainoutput(\ctrls)}\right)^ \star(\dualvar) +  \left(\qua_\reg^\ctrls + \kappa\|\cdot\|_2^2/2\right)^\star(- \nabla \chainoutput(\currctrls) \dualvar ),
		\end{align}
		where for a function $f$ we denote by $f^\star$ its convex conjugate,
		\item the Gauss-Newton oracle is  $\oraclectrls = \nabla \left(\qua_\reg^\ctrls + \kappa\|\cdot\|_2^2/2\right)^\star(-\nabla \chainoutput(\currctrls) \dualvar^*  )$ where $\dualvar^*$ is the solution of ~\eqref{eq:dual_gn_step},
		\item the dual problem~\eqref{eq:dual_gn_step} can be solved by $2\dimstate_{\horizon}+1$ calls to an automatic differentiation procedure.
	\end{enumerate}
\end{restatable}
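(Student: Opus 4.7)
The plan is to prove the three claims by Fenchel--Rockafellar duality (for (i) and (ii)) and then by a conjugate-gradient / ``double backprop'' argument (for (iii)).

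For (i), I would introduce a slack variable $y = \nabla\chainoutput(\currctrls)^\top \auxctrls$ to decouple the composition in~\eqref{eq:gn_step}, rewriting the primal as
\begin{equation*}
\min_{\auxctrls,\, y}\; \qua_\obj^{\chainoutput(\currctrls)}(y) + G(\auxctrls)\quad\text{subject to}\quad y = \nabla\chainoutput(\currctrls)^\top \auxctrls,
\end{equation*}
with $G(\auxctrls) = \qua_\reg^\currctrls(\auxctrls) + \tfrac{\kappa}{2}\|\auxctrls\|_2^2$. Convexity of $\obj$ and $\reg$ make $\qua_\obj$ and $G$ convex quadratics, and the Tikhonov term $\tfrac{\kappa}{2}\|\cdot\|_2^2$ renders the primal strongly convex, so strong duality holds. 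Forming the Lagrangian with multiplier $\dualvar\in\reals^{\dimstate_\horizon}$ and minimizing out $y$ then $\auxctrls$ in closed form using the definition of the Fenchel conjugate produces $-\qua_\obj^*(-\dualvar) - G^*(\nabla\chainoutput(\currctrls)\dualvar)$; the substitution $\dualvar\to-\dualvar$ then turns the resulting $\max$ into the $\min$-formulation~\eqref{eq:dual_gn_step}.

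For (ii), the stationarity condition of the inner minimization in $\auxctrls$ gives $\nabla G(\oraclectrls) = -\nabla\chainoutput(\currctrls)\dualvar^*$ (with the sign convention of~\eqref{eq:dual_gn_step}); since $G$ is differentiable and strongly convex, $(\nabla G)^{-1} = \nabla G^*$, yielding $\oraclectrls = \nabla G^*(-\nabla\chainoutput(\currctrls)\dualvar^*)$.

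For (iii), note that~\eqref{eq:dual_gn_step} is a convex quadratic in $\dualvar\in\reals^{\dimstate_\horizon}$ with Hessian $H = \nabla^2\qua_\obj^* + \nabla\chainoutput(\currctrls)^\top\,\nabla^2 G^*\,\nabla\chainoutput(\currctrls)$. I would solve it by conjugate gradient, which reaches the exact minimizer in at most $\dimstate_\horizon$ iterations, each needing a single Hessian-vector product $Hv$. Since $\nabla^2\qua_\obj^*$ and $\nabla^2 G^*$ are available in closed form, one HVP reduces to: (a)~a VJP $\nabla\chainoutput(\currctrls)v$ for $v\in\reals^{\dimstate_\horizon}$, obtained by one call to the autodiff oracle of Algo.~\ref{algo:auto_diff}; and (b)~a JVP $\nabla\chainoutput(\currctrls)^\top w$ for $w\in\reals^{\dimvar}$, obtained from the identity $\nabla\chainoutput(\currctrls)^\top w = \nabla_\dualvar \langle w,\, \nabla\chainoutput(\currctrls)\dualvar\rangle$, where the inner scalar is exactly the output of one autodiff call, so applying autodiff a second time to it (``double backprop'') yields the JVP in one additional call. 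Hence each HVP uses two oracle calls, giving $2\dimstate_\horizon$ calls over the CG loop, and one final autodiff call computes $\nabla\chainoutput(\currctrls)\dualvar^*$ and hence $\oraclectrls = \nabla G^*(-\nabla\chainoutput(\currctrls)\dualvar^*)$, for a total of $2\dimstate_\horizon + 1$.

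The main obstacle is the accounting in (iii): one has to justify that the JVP $\nabla\chainoutput(\currctrls)^\top w$ actually costs only one extra oracle call, i.e., that the scalar $\dualvar \mapsto \langle w,\, \nabla\chainoutput(\currctrls)\dualvar\rangle$ inherits the computational graph produced by the forward pass stored in Algo.~\ref{algo:auto_diff}, so that a single application of the backward procedure to it has the same asymptotic cost as one standard autodiff call (rather than requiring a separate forward-mode differentiation routine).
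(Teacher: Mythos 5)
Your proposal is correct and follows essentially the same route as the paper: claims (i) and (ii) by standard Fenchel duality applied to \eqref{eq:gn_step} (the paper simply invokes ``standard duality computations'' where you write out the Lagrangian with a slack variable), and claim (iii) by running conjugate gradient on the dual quadratic for at most $\dimstate_\horizon$ iterations at two automatic-differentiation calls each, plus one final call to recover $\oraclectrls$. The ``double backprop'' accounting you flag as the main obstacle is exactly what the paper delegates to Lemma~\ref{lem:auto_diff_grad_prod}, which states that differentiating $\dualvar \mapsto g(\nabla\chainoutput(\ctrls)\dualvar)$ for decomposable $g$ costs two oracle calls, so your argument matches the paper's.
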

Proposition~\ref{prop:dual_gn_step} shows that a Gauss-Newton step is only $2\dimstate_{\horizon}+1$ times more expansive than a gradient-step. Precisely, for a deep network with a supervised objective, we have $\dimstate_{\horizon} =\nbsamp \numclass$ where $\nbsamp$ is the number of samples and $\numclass$ is the number of classes. A gradient step makes then one call to an automatic differentiation procedure to get the gradient of the batch and the Gauss-Newton method will then make $2\nbsamp \numclass+1$ more calls. If mini-batch Gauss-Newton steps are considered then the cost reduces to $2\batchsize \numclass +1$ calls to an automatic differentiation oracle, where $\batchsize$ is the size of the mini-batch.

\section{Optimization complexity}\label{sec:optim}
We present smoothness properties with respect to the Euclidean norm $\|\cdot\|_2$,
whose operator norm is denoted $\|\cdot\|_{\opnormidx}$.
In the following, for a function $f: \reals^d \rightarrow \reals^n$ and a set $\set \subset \dom f \subset \reals^d$, we denote by
\begin{gather*}
	\bound_f^\set = \sup_{\substack{x\in \set}} \|f(x)\|_2,\quad 
	\lip_f^\set   = \sup_{\substack{x, y \in \set \\ x\neq y}} \frac{\|f(x)-f(y)\|_2}{\|x-y\|_2}, \quad   
	\smooth_f^\set   = \sup_{\substack{x, y \in \set \\ x\neq y}} \frac{\|\nabla f(x)-\nabla f(y)\|_{\opnormidx}}{\|x-y\|_2},
\end{gather*}
a bound of $\obj$ on $\set$,  the Lipschitz-continuity parameter of $\obj$ on $\set$, and the smoothness parameter of $\obj$ on $\set$ (i.e., the Lipschitz-continuity parameter of its gradient if it exists), all with respect to $\| \cdot\|_2$. Note that if $x= \Vect(X)$ for a given matrix $X$, $\|x\|_2= \|X\|_F$. We denote by $\bound_f, \lip_f, L_f$ the same quantities defined  on the domain of $f$, e.g., $\bound_f =\bound_f^{\dom f}$. 
We denote by $\classfunc_{\bound, \lip, \smooth}$ the class of functions $f$ such that $\bound_f =\bound, \lip_f = \lip, \smooth_f = \smooth$. In the following, we allow the quantities $\bound_f, \lip_f, \smooth_f$ to be infinite if for example the function is unbounded or the smoothness constant is not defined. The procedures presented below output infinite estimates if the combinations of the smoothness properties do not allow for finite estimates. On the other hand, they provide finite estimates automatically if they are available. 
In the following we denote $\bigotimes_{t=1}^ \horizon B_{\setparam_t}(\reals^{\dimctrl_t})  = \{\ctrls =(\ctrl_1;\ldots;\ctrl_\horizon)\in \reals^ {\sum_{t=1}^\horizon \dimctrl_t}:  \ctrl_t \in \reals^{\dimctrl_t}, \|\ctrl_t\|_2 \leq \setparam_t\} $. 

\subsection{Convergence rate to a stationary point}
We recall the convergence rate to a stationary point of a gradient descent and a stochastic gradient descent on constrained problems. 
\begin{theorem}[{\citealp[Theorems 1 and 2]{ghadimi2016mini}}]\label{thm:conv_grad}
	Consider problems of the form
	\begin{align*}
		\mbox{(i)}\qquad \min_{\ctrls \in \reals^\dimvar} \quad  & \left\{\Obj(\ctrls) := \obj(\chainoutput(\ctrls)) + \reg(\ctrls)\right\}, & \mbox{or} \quad  (ii) \qquad \min_{\ctrls \in \reals^\dimvar} \quad  & \left\{\Obj(\ctrls) := \frac{1}{n}\sum_{i=1}^n \obj_i(\chainoutput_i(\ctrls)) + \reg(\ctrls)\right\}, \\
		\mbox{subject to} \quad & \ctrls \in \set,  & 
		\mbox{subject to} \quad & \ctrls \in \set,
	\end{align*}
	where $\set$ is a closed convex set and  $\Obj$ is $\smooth_\Obj^\set$ smooth on $\set$.  For problem (ii), consider that we have access to an unbiased estimate $\widehat \nabla \Obj(\ctrls)$ of $\nabla \Obj(\ctrls)$ with a variance bounded as $\Expect(\|\widehat \nabla \Obj(\ctrls)-\nabla \Obj(\ctrls)\|_2^2)\leq \sigma^2$.
	
	A projected gradient descent applied on problem (i) with step-size $\stepsize = (\smooth^\set_{\Obj})^{-1}$
	converges to an $\varepsilon$-stationary point in at most
	\[
	\bigO\left(\frac{\smooth_\Obj^\set (\Obj(\ctrls_0)- \Obj^*)}{\epsilon^2}\right)
	\]
	iterations, where $\ctrls_0$ is the initial point and $\Obj^*= \min_{\ctrls \in \set}\Obj(\ctrls)$. 
	
	A stochastic projected gradient descent applied on  problem $(ii)$ with step-size $\stepsize = (2\smooth^\set_{\Obj})^{-1}$ converges in expectation to an $(\varepsilon + \sigma)$-stationary point in at most
	\[
	\bigO\left(\frac{\smooth_\Obj^\set (\Obj(\ctrls_0)- \Obj^*)}{\epsilon^2}\right)
	\]
	iterations, where $\ctrls_0$ is the initial point and $\Obj^*= \min_{\ctrls \in \set}\Obj(\ctrls)$. 
\end{theorem}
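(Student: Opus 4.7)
The plan is to reduce both statements to the standard descent-lemma analysis for smooth minimization, with the gradient mapping playing the role of stationarity measure since the problem is constrained. I would take as stationarity measure
\[
G_\stepsize(\ctrls) = \stepsize^{-1}\bigl(\ctrls - P_\set(\ctrls - \stepsize\nabla\Obj(\ctrls))\bigr),
\]
where $P_\set$ denotes the Euclidean projection onto $\set$, and declare $\ctrls$ an $\varepsilon$-stationary point when $\|G_\stepsize(\ctrls)\|_2 \leq \varepsilon$; on $\reals^\dimvar$ this reduces to $\|\nabla\Obj(\ctrls)\|_2 \leq \varepsilon$, and on a convex set it is the usual surrogate for the first-order optimality condition.

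For the deterministic case (i), with the projected-gradient iterate $\ctrls_{t+1}=P_\set(\ctrls_t - \stepsize\nabla\Obj(\ctrls_t))$ one has $\ctrls_{t+1}-\ctrls_t = -\stepsize G_\stepsize(\ctrls_t)$. Combining the smoothness descent inequality
\[
\Obj(\ctrls_{t+1}) \leq \Obj(\ctrls_t) + \langle \nabla\Obj(\ctrls_t),\,\ctrls_{t+1}-\ctrls_t\rangle + \tfrac{\smooth_\Obj^\set}{2}\|\ctrls_{t+1}-\ctrls_t\|_2^2
\]
with the variational characterization of the projection, $\langle \ctrls_t -\stepsize\nabla\Obj(\ctrls_t)-\ctrls_{t+1},\,\ctrls_t -\ctrls_{t+1}\rangle \leq 0$, gives the one-step decrease $\Obj(\ctrls_{t+1}) \leq \Obj(\ctrls_t) - \stepsize(1 - \stepsize\smooth_\Obj^\set/2)\|G_\stepsize(\ctrls_t)\|_2^2$. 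With $\stepsize = 1/\smooth_\Obj^\set$ the bracket equals $1/(2\smooth_\Obj^\set)$; telescoping from $t=0$ to $T-1$ and using $\Obj(\ctrls_T)\geq \Obj^*$ yields
\[
\min_{0\leq t<T}\|G_\stepsize(\ctrls_t)\|_2^2 \leq \frac{2\smooth_\Obj^\set(\Obj(\ctrls_0)-\Obj^*)}{T},
\]
so $T = \bigO\bigl(\smooth_\Obj^\set(\Obj(\ctrls_0)-\Obj^*)/\varepsilon^2\bigr)$ iterations suffice.

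For the stochastic case (ii), I would replace $\nabla\Obj(\ctrls_t)$ by the unbiased estimate $\widehat\nabla\Obj(\ctrls_t)$ in the iterate and reason in conditional expectation. The smoothness descent lemma still applies along the realized step; expanding $\widehat\nabla\Obj = \nabla\Obj + (\widehat\nabla\Obj - \nabla\Obj)$ inside the inner product produces a cross term that vanishes in expectation by unbiasedness and a noise term controlled by $\stepsize^2\sigma^2$. Choosing $\stepsize = 1/(2\smooth_\Obj^\set)$ so that the noise term is absorbed by half of the descent term, one obtains in conditional expectation $\Expect[\Obj(\ctrls_{t+1})\mid \ctrls_t] \leq \Obj(\ctrls_t) - \tfrac{1}{4\smooth_\Obj^\set}\|G_\stepsize(\ctrls_t)\|_2^2 + \tfrac{\sigma^2}{4\smooth_\Obj^\set}$. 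Telescoping and averaging (or picking a uniformly random iterate) gives $\Expect[\|G_\stepsize(\ctrls_{t^*})\|_2^2] = \bigO(\smooth_\Obj^\set(\Obj(\ctrls_0)-\Obj^*)/T) + \sigma^2$, which with $T = \bigO(\smooth_\Obj^\set(\Obj(\ctrls_0)-\Obj^*)/\varepsilon^2)$ produces an $(\varepsilon+\sigma)$-stationary point, the additive $\sigma$ absorbing the gap between the true and stochastic gradient mappings via nonexpansiveness of $P_\set$ and Jensen's inequality.

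The main obstacle is the bookkeeping in the stochastic case: one must tune $\stepsize$ so that $\stepsize^2\sigma^2$ is dominated by the descent gain, and be precise when converting a bound on $\Expect\|G_\stepsize(\ctrls_{t^*})\|_2^2$ into a $(\varepsilon+\sigma)$-stationarity guarantee, since the gradient mapping is defined in terms of $\nabla\Obj$ but only $\widehat\nabla\Obj$ is accessible to the algorithm. Both steps are standard once one commits to the gradient-mapping viewpoint, so beyond them the statement transcribes the Ghadimi-Lan analysis and I would only verify that the hypotheses (smoothness of $\Obj$ on the closed convex set $\set$, unbiasedness, bounded variance) align with theirs and appeal to the cited result for the final constants.
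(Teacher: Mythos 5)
The paper does not prove this theorem: it is imported verbatim from Ghadimi, Lan and Zhang (the bracketed citation \emph{is} the proof), and nothing in the appendix revisits it. Your sketch reconstructs the standard argument behind the cited result, and the deterministic half is correct as written: the gradient-mapping identity $\ctrls_{t+1}-\ctrls_t=-\stepsize G_\stepsize(\ctrls_t)$, the descent lemma, the variational inequality of the projection, and the telescoping all go through and give the claimed $\bigO(\smooth_\Obj^\set(\Obj(\ctrls_0)-\Obj^*)/\varepsilon^2)$ bound.

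The one step that would fail as stated is in the stochastic half: the cross term $\langle\widehat\nabla\Obj(\ctrls_t)-\nabla\Obj(\ctrls_t),\,\ctrls_{t+1}-\ctrls_t\rangle$ does \emph{not} vanish in expectation on a constrained problem, because $\ctrls_{t+1}=P_\set(\ctrls_t-\stepsize\widehat\nabla\Obj(\ctrls_t))$ is a nonlinear function of the same noise realization, so unbiasedness of $\widehat\nabla\Obj$ does not decouple the two factors. (It does vanish when $\set=\reals^\dimvar$, which is presumably the case you had in mind.) The repair is the one Ghadimi--Lan use: bound the cross term by Cauchy--Schwarz and Young's inequality, $\langle\delta_t,\ctrls_{t+1}-\ctrls_t\rangle\leq\tfrac{1}{2c}\|\delta_t\|_2^2+\tfrac{c}{2}\|\ctrls_{t+1}-\ctrls_t\|_2^2$, absorb the second piece into the descent gain by the choice $\stepsize=(2\smooth_\Obj^\set)^{-1}$, and take expectations only then; this is exactly what produces the additive $\sigma$ in the $(\varepsilon+\sigma)$-stationarity guarantee rather than a vanishing term. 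You flag this issue yourself in your closing paragraph and your final bound has the right shape, so the gap is localized and fixable, but the phrase ``vanishes in expectation by unbiasedness'' should be replaced by this Young-inequality absorption for the argument to be sound in the projected setting.
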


\paragraph{Remarks}
\begin{enumerate}
	\item Since a gradient descent is monotonically decreasing, a gradient descent applied to the unconstrained problem converges to an $\varepsilon$-stationary point in at most \[
	\bigO\left(\frac{\smooth_\Obj^{S_0} (\Obj(\ctrls_0)- \Obj^*)}{\epsilon^2}\right)
	\] 
	iterations, where $S_0=\{\ctrls\in \reals^p: \Obj(\ctrls) \leq \Obj(\ctrls_0)\}$ is the initial sub-level set.
	\item Tighter rates of convergence can be obtained for the finite-sum problem (ii) by using variance reduction methods and by varying mini-batch sizes \citep{li2018simple}. They would then depend on the smoothness constants of the objective or the maximal smoothness of the components on $\set$, i.e., $\max_{i=1, \ldots, n}\smooth_{\obj_i\circ \chainoutput_i +\reg}^\set$.
\end{enumerate}

The smoothness of the objectives $\Obj$ defined in Theorem~\ref{thm:conv_grad} can be derived from the smoothness properties of their components. 
\begin{restatable}{proposition}{smoothobj}
	Consider a closed convex set $\set \subset\reals^p$, $\chainoutput \in \classfunc^\set_{\bound_\chainoutput^\set, \lip_\chainoutput^\set, \smooth_\chainoutput^\set}$, $\reg \in \classfunc_{\smooth_\reg}$ and $\obj \in \classfunc_{\lip_\obj, \smooth_\obj}$ with $\ell_\obj = +\infty$ if $\obj$ is not Lipschitz-continuous. The smoothness of $\Obj = \obj\circ\chainoutput +\reg$ on $\set$ is bounded as
	\[
	\smooth_\Obj^\set \leq \smooth_\chainoutput^\set \tilde \ell_\obj^\set + \left(\ell_\chainoutput^\set\right)^2\smooth_\obj + \smooth_\reg,
	\]
	where $\tilde \ell_\obj^\set = \min \{\ell_\obj, \min_{z \in \chainoutput(\set)}\|\nabla \obj(z)\|_2 + \smooth_\obj \lip_\chainoutput^\set\diam^\set\}$, where $\diam^\set = \sup_{x,y \in \set} \|x-y\|_2$.
\end{restatable}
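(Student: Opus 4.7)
The plan is to decompose $L_F^S$ by the triangle inequality applied to the gradient difference. Since $\Obj = \obj\circ\chainoutput + \reg$ and gradients add, I will start from
\[
\|\nabla\Obj(x)-\nabla\Obj(y)\|_2 \leq \|\nabla(\obj\circ\chainoutput)(x)-\nabla(\obj\circ\chainoutput)(y)\|_2 + \|\nabla\reg(x)-\nabla\reg(y)\|_2,
\]
so the $\smooth_\reg$ term appears immediately from the definition, and it remains to show that $\smooth_{\obj\circ\chainoutput}^\set \leq \smooth_\chainoutput^\set \tilde\ell_\obj^\set + (\ell_\chainoutput^\set)^2 \smooth_\obj$.

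For the composition, I would use the chain rule $\nabla(\obj\circ\chainoutput)(x) = \nabla\chainoutput(x)\nabla\obj(\chainoutput(x))$ and apply the standard add-and-subtract trick:
\[
\nabla\chainoutput(x)\nabla\obj(\chainoutput(x)) - \nabla\chainoutput(y)\nabla\obj(\chainoutput(y)) = \bigl(\nabla\chainoutput(x)-\nabla\chainoutput(y)\bigr)\nabla\obj(\chainoutput(x)) + \nabla\chainoutput(y)\bigl(\nabla\obj(\chainoutput(x))-\nabla\obj(\chainoutput(y))\bigr).
\]
The second summand is bounded in $\|\cdot\|_2$ by $\ell_\chainoutput^\set \cdot \smooth_\obj \cdot \|\chainoutput(x)-\chainoutput(y)\|_2 \leq \ell_\chainoutput^\set \cdot \smooth_\obj \cdot \ell_\chainoutput^\set \|x-y\|_2$, which produces the $(\ell_\chainoutput^\set)^2\smooth_\obj$ contribution. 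The first summand is bounded by $\smooth_\chainoutput^\set \|x-y\|_2 \cdot \|\nabla\obj(\chainoutput(x))\|_2$, so the task reduces to uniformly bounding $\|\nabla\obj(\chainoutput(x))\|_2$ for $x\in\set$ by $\tilde\ell_\obj^\set$.

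The key step, where the definition of $\tilde\ell_\obj^\set$ becomes transparent, is establishing the two competing bounds on $\|\nabla\obj(\chainoutput(x))\|_2$. The first is immediate: if $\obj$ is $\ell_\obj$-Lipschitz then $\|\nabla\obj(z)\|_2\leq \ell_\obj$ for all $z$, giving the $\ell_\obj$ branch. The second uses only smoothness: for any reference point $z = \chainoutput(y)$ with $y\in\set$, the $\smooth_\obj$-Lipschitz continuity of $\nabla\obj$ gives
\[
\|\nabla\obj(\chainoutput(x))\|_2 \leq \|\nabla\obj(z)\|_2 + \smooth_\obj \|\chainoutput(x)-z\|_2 \leq \|\nabla\obj(z)\|_2 + \smooth_\obj \ell_\chainoutput^\set \|x-y\|_2 \leq \|\nabla\obj(z)\|_2 + \smooth_\obj \ell_\chainoutput^\set \diam^\set,
\]
and minimizing the right-hand side over $z\in\chainoutput(\set)$ gives the second branch. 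Taking the minimum of the two yields the uniform bound $\|\nabla\obj(\chainoutput(x))\|_2 \leq \tilde\ell_\obj^\set$, and combining with the two contributions above closes the proof.

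I do not expect any genuine obstacle here; the argument is a textbook composition-smoothness computation. The only point that requires care is to keep track of which norm is used where (operator norm on Jacobians, Euclidean on vectors) so that the inequality $\|\nabla\chainoutput(x)-\nabla\chainoutput(y)\|_{\opnormidx} \leq \smooth_\chainoutput^\set\|x-y\|_2$ is invoked consistently with the definition of $\smooth_\chainoutput^\set$, and to justify that when $\obj$ fails to be Lipschitz one must take $\ell_\obj = +\infty$ so that the $\min$ defining $\tilde\ell_\obj^\set$ automatically selects the second branch.
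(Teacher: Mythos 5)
Your proof is correct and matches the paper's argument in substance: the paper writes the same two-term decomposition at the level of the Hessian, $\nabla^2(\obj\circ\chainoutput) = \nabla^2\chainoutput[\cdot,\cdot,\nabla\obj\circ\chainoutput] + \nabla\chainoutput\,\nabla^2\obj\,\nabla\chainoutput^\top$, bounds the tensor and operator norms of each term, and obtains $\tilde\ell_\obj^\set$ by exactly the same two-branch bound on $\|\nabla\obj(\chainoutput(\ctrls))\|_2$. Your first-order add-and-subtract version is precisely the variant the paper itself notes ("same results can be obtained by considering differences of gradients"), with the minor advantage of not assuming twice differentiability.
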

What remain to characterize are the smoothness properties of a chain of computations.

\subsection{Smoothness estimates}

We present the smoothness computations for a deep network. Generic estimations of the smoothness properties of a chain of computation are presented in  Appendix~\ref{sec:smooth_proofs}.
The propositions below give \emph{upper bounds} on the smoothness constants of the function achieved through chain-composition. For a trivial composition such as $f\circ f^{-1}$, the upper bound is clearly loose. The upper bounds we present here are informative for non-trivial architectures.

The estimation is done by a forward pass on the network, as illustrated in Fig.~\ref{fig:smoothness}. The reasoning is based on the following lemma. 
\begin{restatable}{lemma}{smoothgen}\label{lem:smooth}
	Consider a chain $\chain$ of $\horizon$  computations $\dyn_t \in \classfunc_{\lip_{\dyn_t}, \smooth_ {\dyn_t}}$ initialized at some $\state_0 \in \reals^{\dimstate_0}$.
	\begin{enumerate}[nosep, leftmargin=*]
		\item[(i)] We have $\lip_{\chain_{\horizon, \state_0}} \leq \lip_\horizon$, where
		\begin{flalign*}
			\lip_{0}  = 0, \hspace{1ex} 	\lip_{t} = \lip_{\dyn_t} + \lip_{t-1} \lip_{\dyn_t},  \hspace{1ex}   \mbox{for $t \in \{1, \ldots, \horizon\}$.}
		\end{flalign*}
		\item[(ii)] We have $\smooth_{\chain_{\horizon, \state_0}} \leq \smooth_\horizon$, where
		\begin{align*}
			\smooth_{0}  = 0,  \hspace{1ex}  \smooth_t = \smooth_{t-1}\lip_{\dyn_t} + \smooth_{\dyn_t}(1+\lip_{t-1})^2,  \hspace{1ex}   \mbox{for $t \in \{1, \ldots, \horizon\}$.}
		\end{align*}
	\end{enumerate}
\end{restatable}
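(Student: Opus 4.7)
The plan is to prove both bounds by induction on $t$, exploiting the one-step recursion $\chain_{t, \state_0}(\ctrls) = \dyn_t(\chain_{t-1, \state_0}(\ctrls), \ctrl_t)$. The base case $t=0$ is immediate since $\chain_{0, \state_0}(\ctrls) = \state_0$ is constant in $\ctrls$, giving Lipschitz constant $0$ and smoothness constant $0$. For the inductive step it is convenient to view $\chain_{t, \state_0}$ as the composition $\dyn_t \circ \psi_t$ where $\psi_t : \ctrls \mapsto (\chain_{t-1, \state_0}(\ctrls); \ctrl_t)$ bundles the state input and the new parameter block into a single argument for $\dyn_t$.

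For part (i), I would bound the Lipschitz constant of $\psi_t$ using orthogonality of the two blocks:
\begin{align*}
\|\psi_t(\ctrls) - \psi_t(\ctrls')\|_2^2 = \|\chain_{t-1, \state_0}(\ctrls) - \chain_{t-1, \state_0}(\ctrls')\|_2^2 + \|\ctrl_t - \ctrl_t'\|_2^2 \leq (\lip_{t-1}^2 + 1)\|\ctrls - \ctrls'\|_2^2,
\end{align*}
and then apply the subadditive bound $\sqrt{\lip_{t-1}^2 + 1} \leq 1 + \lip_{t-1}$ together with the $\lip_{\dyn_t}$-Lipschitz property of $\dyn_t$. This yields $\lip_{\chain_{t, \state_0}} \leq \lip_{\dyn_t}(1 + \lip_{t-1})$, which is exactly $\lip_t$.

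For part (ii), I would use the standard composition rule for smoothness constants: if $h = f \circ g$ with $f$ being $\lip_f$-Lipschitz and $\smooth_f$-smooth, and $g$ being $\lip_g$-Lipschitz and $\smooth_g$-smooth, then by writing
\begin{align*}
\nabla h(\ctrls) - \nabla h(\ctrls') = \nabla g(\ctrls)\bigl(\nabla f(g(\ctrls)) - \nabla f(g(\ctrls'))\bigr) + \bigl(\nabla g(\ctrls) - \nabla g(\ctrls')\bigr)\nabla f(g(\ctrls'))
\end{align*}
and bounding each term in operator norm, one gets $\smooth_h \leq \lip_g^2 \smooth_f + \lip_f \smooth_g$. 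Applied to $h = \chain_{t, \state_0} = \dyn_t \circ \psi_t$, this becomes $\smooth_{\chain_{t, \state_0}} \leq \lip_{\psi_t}^2 \smooth_{\dyn_t} + \lip_{\dyn_t}\smooth_{\psi_t}$. The first factor is bounded by $(1+\lip_{t-1})^2$ as above. For $\smooth_{\psi_t}$, note that $\psi_t$ stacks $\chain_{t-1, \state_0}$ with the linear (hence $0$-smooth) map $\ctrls \mapsto \ctrl_t$, so stacking the Jacobians and taking the operator norm gives $\smooth_{\psi_t} \leq \smooth_{t-1}$ by the inductive hypothesis. Combining yields $\smooth_{\chain_{t, \state_0}} \leq \smooth_{\dyn_t}(1+\lip_{t-1})^2 + \lip_{\dyn_t}\smooth_{t-1} = \smooth_t$.

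The only real subtlety is the bookkeeping for the bundled map $\psi_t$ — namely, being careful that (a) Lipschitz and smoothness of $\dyn_t$ are defined with respect to the joint Euclidean norm on its two arguments, and (b) when stacking $\chain_{t-1, \state_0}$ with the identity in $\ctrl_t$, the operator-norm bound on the Jacobian picks up the $+1$ and the smoothness is unchanged. Once these bundling estimates are in place, both recursions fall out from the inductive hypothesis and the one-step composition rules.
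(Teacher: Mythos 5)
Your proof is correct, and it follows the same one-step induction on $\chain_{t,\state_0}(\ctrls)=\dyn_t(\chain_{t-1,\state_0}(\ctrls),\ctrl_t)$ as the paper; the only difference is bookkeeping. The paper invokes its explicit derivative recursion (Lemma on $\nabla\chain_t$ and the five-term expansion of $\nabla^2\chain_t$) and bounds each block $\nabla_\state\dyn_t$, $\nabla_\ctrl\dyn_t$, $\nabla^2_{\state\state}\dyn_t$, $\nabla^2_{\state\ctrl}\dyn_t$, $\nabla^2_{\ctrl\ctrl}\dyn_t$ separately by the joint constants $\lip_{\dyn_t}$, $\smooth_{\dyn_t}$, so that $(1+\lip_{t-1})^2$ appears as the factored sum $\lip_{t-1}^2+2\lip_{t-1}+1$. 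You instead bundle the two arguments into the stacked map $\psi_t$ and apply the generic composition rule $\smooth_{f\circ g}\leq\lip_g^2\smooth_f+\lip_f\smooth_g$ once, using $\lip_{\psi_t}\leq\sqrt{1+\lip_{t-1}^2}\leq 1+\lip_{t-1}$ and $\smooth_{\psi_t}\leq\smooth_{t-1}$ (the linear block cancels in the Jacobian difference). The two routes produce identical recursions; yours avoids writing out the Hessian expansion at the cost of the bundling estimates, and in fact shows the slightly sharper intermediate bound $\lip_t\leq\lip_{\dyn_t}\sqrt{1+\lip_{t-1}^2}$ before relaxing it to match the statement. Your identification of the subtlety — that $\lip_{\dyn_t}$ and $\smooth_{\dyn_t}$ must be understood with respect to the joint Euclidean norm on $(\state,\ctrl)$ — is exactly the convention the paper uses.
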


In the case of deep networks, the computations are not Lipschitz continuous due to the presence of bi-affine functions. Yet, provided that inputs of computations are bounded and that we have access to the smoothness of the computations, we can have an estimate of the Lipschitz-continuity of the computations restricted to these bounded sets.

\begin{corollary}
	Consider a chain $\chain$ of $\horizon$ of computations $\dyn_t \in \classfunc_{\bound_{\dyn_t}, \lip_{\dyn_t}, \smooth_ {\dyn_t}}$ initialized at some $\state_0 \in \reals^{\dimstate_0}$ and consider  $\set = \bigotimes_{t=1}^ \horizon B_{\setparam_t}(\reals^{\dimctrl_t}) $. Then the smoothness of the output of the chain $\chain_{\horizon, \state_0}$ on $\set$, can be estimated as in Lemma~\ref{lem:smooth} by replacing $\lip_{\dyn_t}$ with $\tilde \lip_{\dyn_t}$ defined by
	\begin{align*}
		\tilde \lip_{\dyn_t} & = \min\{\lip_{\dyn_t}, \smooth_{\dyn_t} (\bound_{t-1} + \setparam_t) + \|\nabla \dyn_t(0,0)\|_{2, 2}\},\\
		\bound_t & =  \min\{ \bound_{\dyn_t}, \tilde \lip_{\dyn_t} (\bound_{t-1} + \setparam_t) + \|\dyn_t(0,0)\|_2 \},
	\end{align*}
	for $t\in \{1, \ldots, \horizon\}$, with $\bound_0 = \|\state_0\|_2$.
\end{corollary}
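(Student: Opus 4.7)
The plan is to combine Lemma~\ref{lem:smooth} with a localization argument: on the bounded set $\set$, the chain visits only a bounded portion of each layer's domain, and on that region $\dyn_t$ admits a local Lipschitz constant that can be read off from its smoothness constant $\smooth_{\dyn_t}$ together with the size of its inputs. The remainder of the proof then reduces to re-running the recursion of Lemma~\ref{lem:smooth} with the local constants $\tilde\lip_{\dyn_t}$ in place of the global $\lip_{\dyn_t}$.

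First, I would derive the local Lipschitz bound. For any input pair $(z, u)$ with $\|z\|_2 \leq \bound_{t-1}$ and $\|u\|_2 \leq \setparam_t$, the elementary inequality $\sqrt{a^2+b^2} \leq a+b$ gives $\|(z, u)\|_2 \leq \bound_{t-1} + \setparam_t$. The $\smooth_{\dyn_t}$-smoothness of $\dyn_t$ applied between $(z, u)$ and $(0, 0)$ then yields
\[
\|\nabla \dyn_t(z, u)\|_{\opnormidx} \leq \|\nabla \dyn_t(0, 0)\|_{\opnormidx} + \smooth_{\dyn_t}(\bound_{t-1} + \setparam_t),
\]
while the global bound $\lip_{\dyn_t}$ applies unconditionally (and may be $+\infty$). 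Taking the minimum recovers $\tilde\lip_{\dyn_t}$. The mean value inequality applied between $(z, u)$ and $(0, 0)$ over the convex product of balls then gives $\|\dyn_t(z, u)\|_2 \leq \|\dyn_t(0, 0)\|_2 + \tilde\lip_{\dyn_t}(\bound_{t-1} + \setparam_t)$; taking the minimum with $\bound_{\dyn_t}$ gives the claimed recursion for $\bound_t$.

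Second, I would show by induction on $t$ that for every $\ctrls \in \set$ one has $\|\chain_{t, \state_0}(\ctrls)\|_2 \leq \bound_t$. The base case is $\bound_0 = \|\state_0\|_2$. For the inductive step, the previous paragraph applies to $(\state_{t-1}, \ctrl_t)$ since, by induction, $\|\state_{t-1}\|_2 \leq \bound_{t-1}$ and $\|\ctrl_t\|_2 \leq \setparam_t$ follows from $\ctrls \in \set$. Third, I would invoke a localized form of Lemma~\ref{lem:smooth}. Because $\set$ is convex, any segment between two points of $\set$ stays in $\set$, and by the second step the induced intermediate states stay in the balls on which $\tilde\lip_{\dyn_t}$ is valid. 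Hence every invocation of a Lipschitz constant of $\dyn_t$ in the proof of the lemma takes place at an input pair in the accessible region, and the same recursion defining $\lip_t$ and $\smooth_t$ goes through with $\lip_{\dyn_t}$ replaced by $\tilde\lip_{\dyn_t}$.

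The main obstacle is the third step: one must verify that the proof of Lemma~\ref{lem:smooth} never needs the Lipschitz or smoothness constant of $\dyn_t$ outside the accessible region. This is essentially a bookkeeping task, tracing each chain-rule expansion and each triangle inequality used to derive the recursion and checking that the points at which $\nabla \dyn_t$ or $\nabla^2 \dyn_t$ are evaluated are always of the form $(\chain_{t-1, \state_0}(\ctrls), \ctrl_t)$ for $\ctrls \in \set$, or lie on segments between two such points. Given the recursive structure of Def.~\ref{def:chain} and the convexity of $\set$ as a product of Euclidean balls, this check is routine but is the one place where care is required.
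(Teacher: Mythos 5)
Your proposal is correct and follows essentially the same route as the paper, which obtains the local Lipschitz constants exactly as you do (this is Fact~\ref{lem:refined_smooth} and Eq.~\eqref{eq:smooth_refined_dyn} in the appendix) and then feeds them into the recursion of Lemma~\ref{lem:smooth}, whose proof only ever evaluates $\nabla \dyn_t$ at points of the form $(\chain_{t-1,\state_0}(\ctrls),\ctrl_t)$ with $\ctrls\in\set$, so the bookkeeping you flag in your third step goes through. The induction establishing $\|\chain_{t,\state_0}(\ctrls)\|_2\leq \bound_t$ and the use of $\lip_f^\set=\sup_{x\in\set}\|\nabla f(x)\|_{\opnormidx}$ on the convex set $\set$ are precisely the ingredients the paper leaves implicit.
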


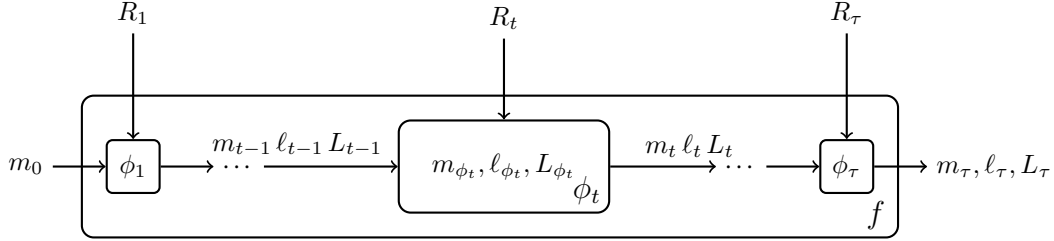
\begin{figure}

\begin{figure}[H]
	\begin{center}
		\begin{tikzpicture}
			\node (bound0) {$\bound_0$};
			\node[module] (dyn1) [right = 2em of bound0] {$\dyn_1$};
			\node (setparam1) [above=4em of dyn1] {$\setparam_1$};
			
			\node (dots1) [right=2em of dyn1]{\dots};
			
			\node (smoothl) [right=6em of dots1]{$\bound_{\dyn_t}, \lip_{\dyn_t}, \smooth_{\dyn_t}$};
			\node [bigmodule] (layerl) [fit=(smoothl)] {};
			\node [above] at (layerl.south east) {\hspace{-2em} \large $\dyn_t$};
			
			\node (setparaml) [above=4em of smoothl] {$\setparam_t$};
			
			\node (dots2) [right=4em of layerl] {\dots};
			
			\node[module] (dynt) [right=2em of dots2] {$\dyn_\horizon$};
			\node (setparamt) [above=4em of dynt] {$\setparam_\horizon$};
			
			\node (output) [right=2em of dynt] {$\bound_\horizon, \lip_\horizon, \smooth_\horizon$};
			
			\draw [->, thick] (bound0) -- (dyn1);
			\draw [->, thick] (dyn1) -- (dots1); 
			\draw [->, thick] (dots1) -- (layerl) node[near start, above]{$\bound_{t-1}\, \lip_{t-1}\, \smooth_{t-1}$};
			\draw [->, thick] (layerl) -- (dots2) node[near end, above]{$\bound_t\,  \lip_t\, \smooth_t$};
			\draw [->, thick] (dots2) -- (dynt);
			\draw [->, thick] (dynt) -- (output);

			\draw [->, thick] (setparam1) -- (dyn1);
			\draw [->, thick] (setparaml) -- (layerl);
			\draw [->, thick] (setparamt) -- (dynt);
			
			\node  [bigmodule, fit=(dyn1) (layerl) (dynt), minimum height = 5em] (chain){};
			\node [above] at (chain.south east) {\hspace{-2em} \large $\chain$};
		\end{tikzpicture}
	\end{center}
	\caption{Smoothness estimates computations. \label{fig:smoothness}}
\end{figure}

\vfill

\begin{algorithm}[H]\caption{Automatic smoothness computations \label{algo:smooth_comput}}
	\begin{algorithmic}[1]
		\State{{\bf Inputs:}
			\begin{enumerate}
				\item Chain of computations $\chain$ defined by $\dyn_t = \nonlin_t\circ \biaffine_t$ for $t \in\{1, \ldots, \horizon\}$  with  $\nonlin_t=  \nonlin_{t, \nbcomp_t} \circ \ldots \circ \nonlin_{t, 1}$ 
				\item Smoothness properties $\smooth_{\biaffine_t}, \lipp_{\biaffine_t}^\ctrl, \lipp_{\biaffine_t}^\state$ of the biaffine function $\biaffine_t \in \mathcal{B}_{\smooth_{\biaffine_t}, \lipp_{\biaffine_t}^\ctrl, \lipp_{\biaffine_t}^\state}$ 
				\item Smoothness properties $ \bound_{\nonlin_{t, i}}, \lip_{\nonlin_{t, i}}, \smooth_{\nonlin_{t, i}}$ of the nonlinear functions $\nonlin_{t, i} \in \classfunc_{\bound_{\nonlin_{t, i}}, \lip_{\nonlin_{t, i}}, \smooth_{\nonlin_{t, i}}}$
				\item Initial point $\state_0$
				\item Bounds $\setparam_t$ on the parameters
		\end{enumerate}}
		
		\State{Initialize $\bound_0 =\|\state_0\|_2$, $\lip_0 = 0$, $\smooth_0=0$}
		\For{$t=1, \ldots, \horizon$}
		\State{$\lip_{t, 0}^ \state = \smooth_{\biaffine_t} \setparam_t + \lipp^\state_{\biaffine_t}, \hspace{1em} \lip_{t, 0}^ \ctrl = \smooth_{\biaffine_{t}} \bound_{t-1}+ \lipp^\ctrl_{\biaffine_t}, \hspace{1em} \lip_{t, 0}^0 = 1$}
		\State{$\bound_{t, 0} = \lip_{t, 0}^ \state \bound_{t-1} +  \lip_{t, 0}^ \ctrl  \setparam_t + \|\biaffine_t(0,0)\|_2$}
		\State{$\smooth_{t, 0}=0$}
		\For{$j=1, \ldots, \nbcomp_t$}
		\State{$\tilde \lip_{\nonlin_{t, j}} = \min \{\lip_{\nonlin_{t, j}}, \ \|\nabla \nonlin_{t, j} (0) \|_2 + \smooth_{\nonlin_{t, j}} \bound_{t, j-1} \}$ \label{line:lip_bound}}
		\State{$\bound_{t, j} = \min\{\bound_{\nonlin_{t, j}},\  \| \nonlin_{t, j}(0)\|_2 + \tilde \lip_{\nonlin_{t, j}}\bound_{t, j-1}\}$\label{line:bound}}
		\State{${ \lip_{t, j}^0 =\tilde \lip_{\nonlin_{t, j}}\lip_{t, j-1}^0}$}
		\State{$\smooth_{t, j} =  \smooth_{t, j-1}\lip_{\nonlin_t, j} +  \smooth_{\nonlin_{t, j}}(\lip_{t, j-1}^ 0) ^ 2$}
		\EndFor
		\State{$	\bound_t =\bound_{t, \nbcomp_t}$}
		\State{$	\lip_t =  \lip_{t, 0}^ \state \lip_{t, \nbcomp_t}^0 \lip_{t-1} {+}  \lip_{t, 0}^\ctrl \lip_{t, \nbcomp_t}^0$}
		\State{$ 	\smooth_t  = 
			\smooth_{t-1} \lip_{t, 0}^ \state \lip_{t, \nbcomp_t}^0
			{+} (\smooth_{\biaffine_t}\setparam_t 
			{+} \lipp_{\biaffine_t})^ 2 \smooth_{t, \nbcomp_t} \lip_{t-1}^ 2 \newline
			\phantom{\hspace{35pt}}		 {+} 2\left( (\smooth_{\biaffine_t} \bound_{t-1} 
			{+} \lipp_{\biaffine_t}^ \ctrl)(\smooth_{\biaffine_t}\setparam_{t} 
			{+} \lipp_{\biaffine_t}^ \state ) \smooth_{t, \nbcomp_t}
			{+} \smooth_{\biaffine_t} \lip_{t, \nbcomp_t}^0\right) \lip_{t-1}\newline
			 	\phantom{\hspace{35pt}}	{+} (\smooth_{\biaffine_t}\bound_{t-1} {+} \lipp_{\biaffine_t}^\ctrl)^ 2 \smooth_{t, \nbcomp_t}$\label{line:smooth}}
		\EndFor
		\State{ 
			{\bf Output:}
			$\bound_\horizon$, $\lip_\horizon$, $\smooth_\horizon$ 
		}
	\end{algorithmic}
\end{algorithm}
\end{figure}
We specialize the result to deep networks.
We denote by $\mathcal{B}_{\smooth, \lipp^{\ctrl}, \lipp^{ \state}}$ the set of $\smooth$-smooth bi-affine functions $b$ such that $\|\nabla_\ctrl \biaffine(0, 0)\|_{2, 2} = \lipp^\ctrl$, $\|\nabla_\state \biaffine (0, 0)\|_{2, 2} = \lipp^\state$ , i.e., functions of the form
\[
\biaffine(\state, \ctrl) = \bilinear(\state, \ctrl) + \bilinear^\ctrl(\ctrl) + \bilinear^\state(\state) +\linearcste,
\]
with $\bilinear$  bilinear and $\smooth$-smooth, $\bilinear^\ctrl$, $\bilinear^\state$ linear and $\lipp^\ctrl$, $\lipp^\state$ Lipschitz continuous respectively and  $\linearcste$  a constant vector.

\begin{restatable}{proposition}{smoothdeep}\label{prop:smoothdeep}
	Consider a chain $\chain$ of $\horizon$ computations whose layers $\dyn_{t}$ are defined by
	\[
	\dyn_{t}( \state_{t-1}, \ctrl_t) = \nonlin_t(\biaffine_t( \state_{t-1}, \ctrl_t)),
	\]
	for $t\in\{1,\ldots, \horizon\}$, where $\biaffine_t \in \mathcal{B}_{\smooth_{\biaffine_t}, \lipp_{\biaffine_t}^\ctrl, \lipp_{\biaffine_t}^\state}$, 
	and $\nonlin_t$ is decomposed as
	\[
	\nonlin_t =  \nonlin_{t, \nbcomp_t} \circ \ldots \circ \nonlin_{t, 1},
	\]
	with $\nonlin_{t, i} \in \classfunc_{\bound_{\nonlin_{t, i}}, \lip_{\nonlin_{t, i}}, \smooth_{\nonlin_{t, i}}}$.
	Consider  $\set = \bigotimes_{t=1}^ \horizon B_{\setparam_t}(\reals^{\dimctrl_t}) $.
	The outputs $\bound_\horizon$, $\lip_\horizon$ and $\smooth_\horizon$ of Algo.~\ref{algo:smooth_comput} satisfy $\bound_{\chain_{\horizon, \state_0}}^\set \leq \bound_\horizon$, $\lip_{\chain_{\horizon, \state_0}}^\set \leq \lip_\horizon$, $\smooth_{\chain_{\horizon, \state_0}}^\set \leq \smooth_\horizon$.
\end{restatable}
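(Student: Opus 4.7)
The plan is to combine two nested applications of Lemma~\ref{lem:smooth} and its corollary: an \emph{inner pass} through the components $\biaffine_t, \nonlin_{t,1}, \ldots, \nonlin_{t,\nbcomp_t}$ that make up a single layer $\dyn_t$, and an \emph{outer pass} across the layers $t = 1, \ldots, \horizon$. Algorithm~\ref{algo:smooth_comput} is exactly the realisation of this nested recursion, so the proof is a line-by-line verification with induction on $t$, the inductive hypothesis being that $\bound_{t-1}, \lip_{t-1}, \smooth_{t-1}$ upper-bound the corresponding constants of $\chain_{t-1,\state_0}$ on $\set$.

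\textbf{Inner pass.} Fix $t$ and assume the inductive bound $\bound_{t-1}$ on $\|\chain_{t-1,\state_0}(\ctrls)\|_2$ over $\set$. Restrict $\biaffine_t$ to the box $B_{\bound_{t-1}}(\reals^{\dimstate_{t-1}}) \times B_{\setparam_t}(\reals^{\dimctrl_t})$. From $\biaffine_t(\state, \ctrl) = \bilinear_t(\state, \ctrl) + \bilinear^\ctrl_t(\ctrl) + \bilinear^\state_t(\state) + \linearcste_t$ with $\bilinear_t$ bilinear and $\smooth_{\biaffine_t}$-smooth, the partial Jacobians $\nabla_\state \biaffine_t(\cdot, \ctrl)$ and $\nabla_\ctrl \biaffine_t(\state, \cdot)$ are bounded on the box by $\smooth_{\biaffine_t}\setparam_t + \lipp^\state_{\biaffine_t}$ and $\smooth_{\biaffine_t}\bound_{t-1} + \lipp^\ctrl_{\biaffine_t}$ respectively; these are the quantities $\lip^\state_{t,0}$ and $\lip^\ctrl_{t,0}$. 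Integrating along the two coordinate directions from $(0,0)$ yields $\bound_{t,0}$ as a bound on $\|\biaffine_t(\state, \ctrl)\|_2$ over the box. Next I view the stack $\nonlin_{t,\nbcomp_t} \circ \cdots \circ \nonlin_{t,1}$ as a parameter-free chain of computations acting on the output of $\biaffine_t$, with running input bound $\bound_{t,j-1}$ and initial Lipschitz multiplier $\lip^0_{t,0} = 1$. Applying the corollary of Lemma~\ref{lem:smooth} iteratively refines each $\lip_{\nonlin_{t,j}}$ to the smoothness-based estimate $\tilde \lip_{\nonlin_{t,j}} = \min\{\lip_{\nonlin_{t,j}}, \|\nabla \nonlin_{t,j}(0)\|_2 + \smooth_{\nonlin_{t,j}} \bound_{t,j-1}\}$ (line~\ref{line:lip_bound}), updates $\bound_{t,j}$ (line~\ref{line:bound}), accumulates the Lipschitz product $\lip^0_{t,j} = \tilde \lip_{\nonlin_{t,j}}\lip^0_{t,j-1}$, and produces the composition smoothness $\smooth_{t,j} = \smooth_{t,j-1}\tilde \lip_{\nonlin_{t,j}} + \smooth_{\nonlin_{t,j}}(\lip^0_{t,j-1})^2$ via the chain rule of Lemma~\ref{lem:smooth}(ii) specialised to parameter-free chains.

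\textbf{Outer pass.} After the inner pass, the layer $\dyn_t = \nonlin_t \circ \biaffine_t$ has known output bound $\bound_{t,\nbcomp_t}$, and its Jacobian restricted to the box is controlled in $\state$ and $\ctrl$ slots by $\lip^0_{t,\nbcomp_t}\lip^\state_{t,0}$ and $\lip^0_{t,\nbcomp_t}\lip^\ctrl_{t,0}$. Composing $\dyn_t$ with the already-controlled map $\chain_{t-1,\state_0}$ and applying the corollary of Lemma~\ref{lem:smooth} one more time produces the outer recursions for $\lip_t$ and $\smooth_t$ written in the algorithm. Unrolling the recursions over $t = 1, \ldots, \horizon$ gives $\bound_{\chain_{\horizon,\state_0}}^\set \leq \bound_\horizon$, $\lip_{\chain_{\horizon,\state_0}}^\set \leq \lip_\horizon$, and $\smooth_{\chain_{\horizon,\state_0}}^\set \leq \smooth_\horizon$, which is the claim.

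\textbf{Main obstacle.} The delicate computation is the update of $\smooth_t$ on line~\ref{line:smooth}. Because $\biaffine_t$ has a nonzero mixed second derivative (with operator norm at most $\smooth_{\biaffine_t}$), the full Hessian of $\dyn_t(\chain_{t-1,\state_0}(\ctrls_{1:t-1}), \ctrl_t)$ contains three qualitatively different contributions when expanded via the chain and product rules: a purely inherited piece from $\smooth_{t-1}$ pushed forward through the layer (Lipschitz factor $\lip^\state_{t,0}\lip^0_{t,\nbcomp_t}$), a pure parameter-slot piece proportional to $(\lip^\ctrl_{t,0})^2 \smooth_{t,\nbcomp_t}$, and a cross term linear in $\lip_{t-1}$ that contains both the mixed-derivative factor $\smooth_{\biaffine_t}\lip^0_{t,\nbcomp_t}$ and the product of the two partial Lipschitz constants. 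Arranging these contributions into the three summands of line~\ref{line:smooth} with the correct factor of $2$ on the cross term is the part of the argument that requires careful block-by-block expansion of $\nabla^2 (\dyn_t\circ (\chain_{t-1,\state_0}, \mathrm{id}))$; once it is performed, the rest of the proof is mechanical bookkeeping against the algorithm's recursions.
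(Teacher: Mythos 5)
Your route is essentially the paper's: the published proof likewise splits into an inner composition step (Lemma~\ref{lem:nonlin_comp_bounds}, which produces the running bounds $\bound_{t,j}$, the refined constants $\tilde\lip_{\nonlin_{t,j}}$ and the accumulated products $\lip^0_{t,j}$, $\smooth_{t,j}$ for the stack $\nonlin_{t,\nbcomp_t}\circ\cdots\circ\nonlin_{t,1}$) and an outer chain step. The only labeling discrepancy is that the outer step is not Lemma~\ref{lem:smooth} and its corollary, which track a single joint Lipschitz constant of $\dyn_t$, but the bivariate refinement Lemma~\ref{lem:smooth_gen}, which keeps $\lip^\state_{\dyn_t}$, $\lip^\ctrl_{\dyn_t}$ and the second-order blocks $\smooth^{\state\state}_{\dyn_t},\smooth^{\state\ctrl}_{\dyn_t},\smooth^{\ctrl\state}_{\dyn_t},\smooth^{\ctrl\ctrl}_{\dyn_t}$ separate---which is in substance what your outer pass describes, so this is a citation issue rather than a conceptual one.

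The one concrete slip is in your accounting of the smoothness update: line~\ref{line:smooth} has four summands, not three, and your enumeration of the Hessian contributions omits the pure state-slot block $\nabla^2_{\state\state}\dyn_t[\nabla\chain_{t-1,\state_0}(\ctrls)^\top,\nabla\chain_{t-1,\state_0}(\ctrls)^\top,\cdot]$, which is bounded by $\smooth^{\state\state}_{\dyn_t}\lip_{t-1}^2\leq(\smooth_{\biaffine_t}\setparam_t+\lipp^\state_{\biaffine_t})^2\smooth_{t,\nbcomp_t}\lip_{t-1}^2$ and is the only term quadratic in $\lip_{t-1}$. Dropping it yields a quantity strictly smaller than the true bound on $\|\nabla^2\chain_{t,\state_0}\|_{\tensnormidx}$, so the resulting inequality would be false in general and would not reproduce the algorithm's recursion. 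The block-by-block expansion you defer to (Eq.~\eqref{eq:hess_decomp} in Lemma~\ref{lem:derivatives_chain}) produces exactly five pieces---inherited, $\state\state$, $\state\ctrl$, $\ctrl\state$, $\ctrl\ctrl$---so carrying it out as you propose recovers the missing term and closes the gap.
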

The proof of the above lemma is the consequence of simple technical lemmas provided in Appendix~\ref{sec:smooth_proofs}.
Note that the smoothness of the chain with respect to its input given a fixed set of parameters can also easily be estimated by a similar method; see Corollary~\ref{cor:smooth_state} in  Appendix~\ref{sec:smooth_proofs}.

The smoothness properties of a chain of composition around a given point follows then directly as stated in the following corollary. 
\begin{restatable}{corollary}{smoothball}
	Consider a chain $\chain$ of $\horizon$ computations as defined in Prop.~\ref{prop:smoothdeep} and $\ctrls^* = (\ctrl_1^*; \ldots, \ctrl_\horizon^*) \in \reals^\dimvar$. The smoothness properties of $\chain$ on $\set' = \{\ctrls= (\ctrl_1; \ldots; \ctrl_\horizon) \in \reals^\dimvar: \ \forall t \in\{1,\ldots, \horizon\}, \ \|\ctrl_t - \ctrl_t^*\|\leq \setparam_t'\}$ are given as in Prop.~\ref{prop:smoothdeep} by considering \
	\begin{align*}
		R_t' \quad & \mbox{in place of} \quad R_t, \\
		\lipp_{\bilinear^\state_t} + \smooth_{\bilinear_t}\|\ctrl_t^*\|_{\normidx} \quad &  \mbox{in place of} \quad \lipp_{\bilinear^\state_t}, \\
		\|\linearcste_t\|_{\normidx} + \lipp_{\bilinear^\ctrl_t}\|\ctrl_t^*\|_{\normidx} \quad & \mbox{in place of} \quad \|\linearcste_t\|_{\normidx}.
	\end{align*}
\end{restatable}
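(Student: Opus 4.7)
The plan is to reduce the corollary to Proposition~\ref{prop:smoothdeep} by an elementary translation of variables. Let $\tilde \ctrl_t := \ctrl_t - \ctrl_t^*$. The set $\set'$ is exactly the bijective image of $\tilde \set := \bigotimes_{t=1}^\horizon B_{\setparam_t'}(\reals^{\dimctrl_t})$ under the translation $\tilde \ctrls \mapsto \ctrls^* + \tilde \ctrls$. Defining the shifted chain $\tilde\chain_{\horizon,\state_0}(\tilde \ctrls) := \chain_{\horizon,\state_0}(\ctrls^* + \tilde \ctrls)$, the bound, Lipschitz and smoothness constants are invariant under translation of the argument, so the quantities $\bound^{\set'}_{\chain_{\horizon,\state_0}}$, $\lip^{\set'}_{\chain_{\horizon,\state_0}}$, $\smooth^{\set'}_{\chain_{\horizon,\state_0}}$ coincide with $\bound^{\tilde \set}_{\tilde\chain_{\horizon,\state_0}}$, $\lip^{\tilde \set}_{\tilde\chain_{\horizon,\state_0}}$, $\smooth^{\tilde \set}_{\tilde\chain_{\horizon,\state_0}}$. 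It therefore suffices to apply Proposition~\ref{prop:smoothdeep} to $\tilde\chain$ on the centered set $\tilde\set$ and read off the resulting bounds.

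The key computation is to exhibit the shifted layers $\tilde\dyn_t(\state_{t-1}, \tilde\ctrl_t) := \dyn_t(\state_{t-1}, \ctrl_t^* + \tilde\ctrl_t) = \nonlin_t(\tilde\biaffine_t(\state_{t-1}, \tilde\ctrl_t))$ in the biaffine form $\mathcal{B}_{\cdot,\cdot,\cdot}$ required by Proposition~\ref{prop:smoothdeep}. Using bilinearity of $\bilinear_t$ and linearity of $\bilinear_t^\ctrl$,
\begin{align*}
\tilde\biaffine_t(\state_{t-1}, \tilde\ctrl_t)
&= \bilinear_t(\state_{t-1}, \tilde\ctrl_t) + \bilinear_t^\ctrl(\tilde\ctrl_t) \\
&\quad + \bigl[\bilinear_t^\state(\state_{t-1}) + \bilinear_t(\state_{t-1}, \ctrl_t^*)\bigr] + \bigl[\linearcste_t + \bilinear_t^\ctrl(\ctrl_t^*)\bigr].
\end{align*}
This displays $\tilde\biaffine_t$ as a biaffine function with the same bilinear part $\bilinear_t$ (hence same smoothness $\smooth_{\bilinear_t}$) and the same linear-in-$\ctrl$ part $\bilinear_t^\ctrl$ (hence same Lipschitz constant $\lipp_{\bilinear_t^\ctrl}$), but with a new linear-in-$\state$ part $\bilinear_t^\state(\cdot) + \bilinear_t(\cdot, \ctrl_t^*)$ and a new constant $\linearcste_t + \bilinear_t^\ctrl(\ctrl_t^*)$.

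It remains to bound the operator norms of these two new pieces. Since $\bilinear_t$ is a bilinear form of smoothness $\smooth_{\bilinear_t}$, the partial slice $\state \mapsto \bilinear_t(\state, \ctrl_t^*)$ is linear of operator norm at most $\smooth_{\bilinear_t}\|\ctrl_t^*\|_{\normidx}$; combined with the triangle inequality this gives $\|\bilinear_t^\state + \bilinear_t(\cdot, \ctrl_t^*)\|_{\opnormidx} \le \lipp_{\bilinear_t^\state} + \smooth_{\bilinear_t}\|\ctrl_t^*\|_{\normidx}$. Similarly, $\|\linearcste_t + \bilinear_t^\ctrl(\ctrl_t^*)\|_{\normidx} \le \|\linearcste_t\|_{\normidx} + \lipp_{\bilinear_t^\ctrl}\|\ctrl_t^*\|_{\normidx}$. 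Feeding these updated quantities together with the radii $\setparam_t'$ into Algorithm~\ref{algo:smooth_comput} (equivalently, Proposition~\ref{prop:smoothdeep}) applied to $\tilde\chain$ on $\tilde\set$ yields exactly the three substitutions stated in the corollary.

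The only mildly delicate point is the identification $\|\bilinear_t(\cdot,\ctrl_t^*)\|_{\opnormidx} \le \smooth_{\bilinear_t}\|\ctrl_t^*\|_{\normidx}$, which amounts to noting that the smoothness constant of the biaffine function $\biaffine_t$ used in Algorithm~\ref{algo:smooth_comput} is precisely the full-tensor operator norm of the underlying bilinear form $\bilinear_t$, and that this norm dominates each of its partial contractions. Apart from this bookkeeping, there is no new smoothness analysis: the result is a transparent change of coordinates left invariant by the recursion that defines $\bound_\horizon, \lip_\horizon, \smooth_\horizon$.
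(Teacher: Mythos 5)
Your proposal is correct and follows essentially the same route as the paper's own proof: a translation of the parameter variable, after which the shifted chain is recognized as a chain with the same non-linearities and bi-affine maps whose state-linear part absorbs $\bilinear_t(\cdot,\ctrl_t^*)$ and whose constant absorbs $\bilinear_t^\ctrl(\ctrl_t^*)$. Your explicit bounding of the operator norms of the two new pieces via the tensor norm of $\bilinear_t$ is a detail the paper leaves implicit, but it is exactly what justifies the stated substitutions.
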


\section{Application}\label{sec:appli}
We apply our framework to assess the smoothness properties of the Visual Geometry Group (VGG) deep network used for image classification~\citep{simonyan2014very}. 

\subsection{VGG network}
The VGG Network is a benchmark network for image classification with deep  networks. The objective is to classify images among $1000$ classes. Its architecture is composed of 16 layers described in Appendix~\ref{sec:net_examples}.  
We consider in the following smoothness properties for mini-batches with size $\batchsize=128$, i.e., by concatenating $\batchsize$ chains of computations $\chain^{(i)}$ each defined by a different input. This highlights the impact of the size of the mini-batch for batch-normalization.

\paragraph{Smoothness computations}
To compute the Lipschitz-continuity and smoothness parameters, we recall the list of Lipschitz continuity and smoothness constants of each layer of interest. For the bilinear and linear operations we denote by $\tensornorm$ the smoothness of the bilinear operation $\bilinear$ and  by $\lip$ the Lipschitz-continuity of the linear operation $\bilinear^\ctrl$. The smoothness constants of interest are
\begin{enumerate}[topsep=1ex,itemsep=-1ex,partopsep=1ex,parsep=1ex, leftmargin=*]
	\item $\lip_{\conv} = \sqrt{\batchsize}\left \lceil \frac{k}{s} \right\rceil$, $\tensornorm_{\conv} = \left \lceil \frac{k}{s} \right\rceil$, where the patch is of size $k\times k$ and the stride is $s$,
	\item $\lip_{\full} = \sqrt{\batchsize}$, $\tensornorm_{\full} = 1$,
	\item $\lip_{\Relu} = 1$, $\smooth_{\Relu}$ not defined,
	\item $\lip_{\softmax} = 2$, $\smooth_{\softmax} =4$,
	\item $\lip_{\maxpool} = 1$, $\smooth_{\maxpool}$ not defined,
	\item $\lip_{\log} = 2$, $\smooth_{\log} = 2$.
\end{enumerate}
A Lipschitz-continuity estimate of this architecture can then be computed using Prop.~\ref{prop:smoothdeep}  on a Cartesian product of balls $\set = \{\var = (\ctrl_1; \ldots; \ctrl_{16}): \|\ctrl_t\|_2 \leq \setparam \}$ for $\setparam =1$ for example.

\subsection{Variations of VGG}
\paragraph{Smooth VGG}
First, the VGG architecture can be made continuously differentiable by considering the soft-plus activation instead of the ReLU activation and average pooling instead of the max-pooling operation. As shown in
Appendix~\ref{sec:smooth_list}, we have
\begin{enumerate}[topsep=1ex,itemsep=-1ex,partopsep=1ex,parsep=1ex, leftmargin=*]
	\item $\lip_{\avgpool} = 1$, $\smooth_{\avgpool} = 0$,
	\item $\lip_{\softplus} = 1$, $\smooth_{\softplus} = 1/4$.
\end{enumerate}
Denoting $\lip_{\operatorname{VGG}}$ and $\lip_{\operatorname{VGG-smooth}}$ the Lipschitz-continuity estimates of the original VGG network and the modified original network on a Cartesian product of balls $\set = \{\ctrls = (\ctrl_1; \ldots; \ctrl_{16}): \|\ctrl_t\|_2 \leq 1 \}$ with $\|x\|_2 = 1$, we get using Prop.~\ref{prop:smoothdeep},
$
\frac{|\lip_{\operatorname{VGG}} - \lip_{\operatorname{VGG-smooth}}|}{\lip_{\operatorname{VGG}}} \leq 10^{-4} .
$

\paragraph{Batch-normalization effect}
We can also compare the smoothness properties of the smoothed network with the same network modified by adding the batch-normalization layer for $\batchsize$ inputs and $\regbatchnorm$ normalization parameter at each convolutional layer. As shown in
Appendix~\ref{sec:smooth_list}, the  batch-normalization satisfies
\begin{enumerate}[topsep=1ex,itemsep=-1ex,partopsep=1ex,parsep=1ex, leftmargin=*]
	\item $\bound_{\batchnorm}{= }\diminput \batchsize$, $\lip_{\batchnorm}{ = }2\regbatchnorm^{-1/2}$,  $\smooth_{\batchnorm} {=} 2\batchsize^{-1/2} \regbatchnorm^{-1}$.
\end{enumerate}
Denoting $\lip_{\operatorname{VGG-smooth}}$, $\smooth_{\operatorname{VGG-smooth}}$ and $\lip_{\operatorname{VGG-batch}}$, $\smooth_{\operatorname{VGG-batch}}$ the Lipschitz-continuity and smoothness estimates of the smoothed VGG network with and without batch-normalization respectively on a Cartesian product of balls $\set = \{\ctrls = (\ctrl_1; \ldots; \ctrl_{16}): \|\ctrl_t\|_2 \leq 1 \}$ with $\|x\|_2 = 1$, we get using Prop.~\ref{prop:smoothdeep},
\begin{align*}
\mbox{for} \quad \regbatchnorm=10^{-2}, \qquad  & \begin{array}{cc}
\lip_{\operatorname{VGG-smooth}} & \leq \lip_{\operatorname{VGG-batch}} \\
\smooth_{\operatorname{VGG-smooth}}  &\leq \smooth_{\operatorname{VGG-batch}}
\end{array}\\
\mbox{for} \quad \regbatchnorm=10^2, \qquad  & \begin{array}{cc}
\lip_{\operatorname{VGG-smooth}} & \geq \lip_{\operatorname{VGG-batch}} \\
\smooth_{\operatorname{VGG-smooth}}  &\geq \smooth_{\operatorname{VGG-batch}}
\end{array}
\end{align*}
Intuitively, the batch-norm bounds the output of each layer, mitigating the increase of $\bound_t$ in the computations of the estimates of the smoothness in lines~\ref{line:lip_bound} and~\ref{line:bound} of Algo.~\ref{algo:smooth_comput}. Yet, for a small $\regbatchnorm$, this effect is balanced by the non-smoothness of the batch-norm layer (which for $\regbatchnorm\rightarrow 0$ tends to have an infinite slope around 0).

\paragraph{Acknowledgments}
This work was supported by NSF CCF-1740551, NSF DMS-1839371, the program ``Learning in Machines and Brains'', and faculty research awards.

{\bibliographystyle{abbrv}
\bibliography{autodiff.bib}}

\clearpage
\appendix
\section{Notations}\label{sec:notations}
\subsection{Matrices}
For a matrix $M \in \reals^{d \times n}$, we denote by $\Vect(M)$ the concatenation of the columns of $M$.
We denote
$\|M\|_{\opnormidx} = \sup_{x \neq 0, y \neq 0} \frac{x^\top M y}{\|x\|_2\|y\|_2}$ its norm induced by the Euclidean norm and $\|M\|_F = \sqrt{\sum_{ij}M_{ij}^2}$ its Frobenius norm. 

\subsection{Tensors}
A tensor $\mathcal{A} = (a_{ijk})_{i\in\{1,\ldots, d\}, j\in \{1,\ldots,n\}, k\in \{1,\ldots, p\}} \in \reals^{d \times n \times p}$ is represented as a list of matrices $\mathcal{A} = (A_1,\ldots, A_p)$ where $A_k = (a_{ijk})_{i\in\{1,\ldots, d\}, j\in \{1,\ldots,n\}} \in \reals^{d\times n}$ for $ k\in \{1,\ldots p\}$. 

\paragraph{Tensor-matrix product}Given matrices $P \in \reals^{d \times d'}, Q \in \reals^{n \times n'}, R \in \reals^{p \times p'}$, we denote
\[
\mathcal{A}[P, Q, R] = \left(\sum_{k=1}^{p} R_{k,1}P^\top A_k Q,  \ldots,  \sum_{k=1}^{p} R_{k,p'}P^\top A_k Q \right) \in \reals^{d'\times n'\times p'}
\]
If $P, Q$ or $ R$ are identity matrices, we use the symbol ``$\: \cdot\: $" in place of the identity matrix. For example, we denote $\mathcal{A}[P, Q, \idm_p] = \mathcal{A}[P,Q, \cdot] = \left(P^\top A_1 Q,  \ldots,  P^\top A_p Q \right)$.
\begin{fact}\label{fact:tensormat_rule}
	Let $\mathcal{A} \in \reals^{d\times p\times n}$, $P \in \reals^{d \times d'}, Q \in \reals^{p \times p'}, R \in \reals^{n \times n'}$ and $S \in \reals^{d'\times d''}, T \in \reals^{p' \times p''}, U \in \reals^{ n'\times n''}$. Denote $\mathcal{A}'= \mathcal{A}[P, Q, R] \in \reals^{d'\times p'\times n'}$. Then
	\[
	\mathcal{A}'[S, T, U] = \mathcal{A}[PS, QT, RU] \in \reals^{d''\times p''\times n''}.
	\]
\end{fact}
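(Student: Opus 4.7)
The plan is to verify the identity slice by slice, since both $\mathcal{A}'[S,T,U]$ and $\mathcal{A}[PS, QT, RU]$ are lists of $p''$ (equivalently $n''$) matrices in $\reals^{d''\times p''}$, and two such tensors are equal iff their slices coincide. I would first unpack $\mathcal{A}' = \mathcal{A}[P, Q, R]$ from the definition: the $\ell$-th slice of $\mathcal{A}'$ is $A'_\ell = \sum_{k=1}^{p} R_{k,\ell}\, P^\top A_k Q$. Then I would write down the $\ell''$-th slice of $\mathcal{A}'[S, T, U]$ again from the definition, obtaining $\sum_{\ell} U_{\ell, \ell''}\, S^\top A'_\ell\, T$.

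The main step is then a short manipulation. Substituting the formula for $A'_\ell$ into $\sum_{\ell} U_{\ell, \ell''}\, S^\top A'_\ell\, T$ yields a double sum over $\ell$ and $k$; I would interchange the two sums (finite, so this is just reordering), pull out the scalars $U_{\ell,\ell''}$ and $R_{k,\ell}$, and use associativity of matrix multiplication to factor $S^\top P^\top A_k Q T = (PS)^\top A_k (QT)$. Collecting the inner sum $\sum_\ell R_{k,\ell} U_{\ell,\ell''} = (RU)_{k,\ell''}$ reduces the expression to $\sum_{k=1}^{p} (RU)_{k,\ell''}\, (PS)^\top A_k (QT)$, which is by definition the $\ell''$-th slice of $\mathcal{A}[PS, QT, RU]$. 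Equality on every slice then gives the tensor equality.

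The only things used are: the defining formula for the tensor–matrix product, associativity of matrix multiplication, the identity $(PS)^\top = S^\top P^\top$, and Fubini-style reordering of a finite double sum. There is no real obstacle; the task is purely bookkeeping, and the main care needed is to track which index ranges over which dimension so the composed matrices $PS$, $QT$, $RU$ end up on the correct factor (in particular that $R$ and $U$, which scale slices rather than act by conjugation, multiply in the order $RU$ on the slice index). A one-line alternative proof would write everything in index form as $a''_{i''j''\ell''} = \sum_{i,j,k} (PS)_{i,i''}(QT)_{j,j''}(RU)_{k,\ell''}\, a_{ijk}$, obtained by composing the two index-level definitions and grouping the three independent inner sums, which gives the same result without ever leaving scalar arithmetic.
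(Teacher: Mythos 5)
Your proof is correct. The paper states this as a Fact without providing any proof, and your slice-by-slice verification — substituting the defining formula for $A'_\ell$, reordering the finite double sum, and collecting $\sum_{\ell} R_{k,\ell}U_{\ell,\ell''} = (RU)_{k,\ell''}$ together with $S^\top P^\top = (PS)^\top$ — is precisely the routine computation the authors leave implicit; the only nit is that the number of slices of the composed tensor is $n''$ (following $U \in \reals^{n'\times n''}$) rather than $p''$, a bookkeeping point your parenthetical already acknowledges and which does not affect the argument.
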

\paragraph{Flat tensors}
If $P, Q$ or $R$ are vectors we consider the flatten object. In particular, for $x\in \reals^d, y\in \reals^n$, we denote
\[
\mathcal{A}[x, y, \cdot] =  \left(\begin{matrix}
	x^\top A_1 y\\ \vdots \\ x^\top A_py
\end{matrix}
\right) \in \reals^p,
\]
rather than having $\mathcal{A}[x, y, \cdot] \in \reals^{1 \times 1\times p}$.
Similarly, for $z\in \reals^p$, we have
\[
\mathcal{A}[\cdot, \cdot, z] = \sum_{k=1}^p z_kA_k \in \reals^{d\times n}.
\]
\paragraph{Transpose}
For a tensor $\mathcal{A} =(A_1, \ldots, A_p)\in \reals^{d, n, p}$ we denote $\mathcal{A}^t = (A_1^\top, \ldots, A_p^\top)\in \reals^{n, d, p}$.
\paragraph{Outer product} We denote the outer product of three vectors $x\in \reals^d, y\in \reals^n, z \in \reals^p$ as $x \boxtimes y \boxtimes z \in \reals^{d\times n\times p}$ such that 
\[
(x \boxtimes y \boxtimes z)_{ijk} =x_iy_jz_k.
\]
\paragraph{Tensor norm}
We define the norm of a tensor $\mathcal{A}$ induced by the Euclidean norm as follows.
\begin{definition}\label{def:norm_tensor}
	The norm of a tensor $\mathcal{A}$ induced by the Euclidean norm is defined as
	\begin{equation}\label{eq:norm_tensor}
		\|\mathcal{A}\|_{\tensnormidx} = \sup_{x \neq 0, y \neq 0, z \neq 0} \frac{\mathcal{A}[x,y,z]}{\|x\|_2 \|y\|_2 \|z\|_2}.
	\end{equation}
\end{definition}
\begin{fact}\label{fact:tensor_norm}
	The tensor norm satisfies the following properties, for a given tensor $\mathcal{A} \in \reals^{d\times n\times p}$,
	\begin{enumerate}
		\item $\|\mathcal{A}\|_{\tensnormidx} = \|\mathcal{A}^t\|_{\tensnormidx}$,
		\item $\|\mathcal{A}[P,Q,R]\|_{\tensnormidx} \leq  \|\mathcal{A}\|_{\tensnormidx} \|P\|_{\opnormidx}\|Q\|_{\opnormidx}\|R\|_{\opnormidx}$ for $P,Q,R$ with appropriate sizes,
		\item $\|\mathcal{A}\|_{\tensnormidx} = \sup_{z\neq 0} \frac{\|\sum_{k=1}^{p} z_k A_k \|_{\opnormidx}}{\|z\|_2}$.
	\end{enumerate} 
\end{fact}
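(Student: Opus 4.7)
The plan is to derive all three identities directly from the definition of $\|\cdot\|_{\tensnormidx}$ in Eq.~\eqref{eq:norm_tensor} together with the basic identity
\[
\mathcal{A}[x,y,z] = x^\top \Bigl(\sum_{k=1}^{p} z_k A_k\Bigr) y,
\]
which follows immediately from writing $\mathcal{A}=(A_1,\ldots,A_p)$. No deep argument is required; the content is bookkeeping about trilinear forms.

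For property (3), I would substitute this identity into the definition and split the supremum into an outer sup over $z$ and an inner sup over $(x,y)$. Fixing $z$, the inner supremum $\sup_{x,y\neq 0}(x^\top M y)/(\|x\|_2\|y\|_2)$ with $M=\sum_k z_k A_k$ is by definition $\|M\|_{\opnormidx}$, which gives (3) directly. Property (1) is then a one-line computation: $\mathcal{A}^t[x,y,z]=\sum_k z_k\, x^\top A_k^\top y=\sum_k z_k\, y^\top A_k x=\mathcal{A}[y,x,z]$, and the supremum over $(x,y,z)$ is invariant under the swap $(x,y)\leftrightarrow(y,x)$ because the denominator $\|x\|_2\|y\|_2\|z\|_2$ is symmetric in $x$ and $y$.

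For property (2), I would first establish the intermediate identity $(\mathcal{A}[P,Q,R])[x,y,z] = \mathcal{A}[Px,Qy,Rz]$ by expanding the slice description of $\mathcal{A}[P,Q,R]$ given right after the tensor-matrix product definition and regrouping the double sum, namely $\sum_{j,k} z_j R_{k,j}(Px)^\top A_k(Qy)=\sum_k (Rz)_k (Px)^\top A_k (Qy)$. This is in essence Fact~\ref{fact:tensormat_rule} applied with $x,y,z$ viewed as degenerate single-column matrices, so the honest work is purely a dimension check. The desired bound then follows from a single chain of inequalities, $|\mathcal{A}[Px,Qy,Rz]|\leq \|\mathcal{A}\|_{\tensnormidx}\|Px\|_2\|Qy\|_2\|Rz\|_2\leq \|\mathcal{A}\|_{\tensnormidx}\|P\|_{\opnormidx}\|Q\|_{\opnormidx}\|R\|_{\opnormidx}\|x\|_2\|y\|_2\|z\|_2$, after which dividing through by $\|x\|_2\|y\|_2\|z\|_2$ and taking the supremum over nonzero $(x,y,z)$ yields the claim. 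The closest thing to an obstacle is keeping the order of the slice expansion straight when applying Fact~\ref{fact:tensormat_rule}; there is no substantive difficulty beyond that.
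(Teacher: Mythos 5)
Your proof is correct. The paper states this as a \emph{Fact} and provides no proof of its own, so there is nothing to compare against; your argument --- reducing everything to the identity $\mathcal{A}[x,y,z]=x^\top(\sum_k z_k A_k)y$, splitting the supremum for item (3), swapping $x$ and $y$ for item (1), and using $(\mathcal{A}[P,Q,R])[x,y,z]=\mathcal{A}[Px,Qy,Rz]$ for item (2) --- is the standard verification the authors implicitly rely on, and each step checks out against the paper's definitions of the tensor-matrix product and the induced norms.
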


\subsection{Gradients}
For a multivariate function $f :\reals^d \mapsto \reals^n$, composed of  $n$ real functions $f^{(j)}$ for $j\in \{1, \ldots n\}$, we denote $\nabla f( x) = (\nabla f^{(1)}( x), \ldots, \nabla f^{(n)}( x))\in \reals^{d \times n}$, that is  the transpose of its Jacobian on $ x$, $\nabla f( x) = (\frac{\partial f^{(j)} }{\partial x_i}( x))_{\substack{i\in \{1, \ldots, d\}, j\in \{1, \ldots, n\}}} \in \reals^{d \times n}$. 
We represent its 2nd order information by a tensor $\nabla^2 f( x) = (\nabla^2 f^{(1)}( x), \ldots, \nabla^2 f^{(n)}( x))\in \reals^{d \times d \times n}$. 

\begin{fact}\label{fact:lip_smooth}
	We have for $f: \reals^d \rightarrow \reals^n$, twice differentiable, and $\set \subset \dom f$ convex,
	\begin{align*}
		\lip_f^\set  = \sup_{\substack{x, y \in \set \\ x\neq y}} \frac{\|f(x)-f(y)\|_2}{\|x-y\|_2} = \sup_{x\in  \set}\|\nabla f(x)\|_{\opnormidx}, \quad  
		\smooth_f^\set  = \sup_{\substack{x, y \in \set \\ x\neq y}} \frac{\|\nabla f(x)-\nabla f(y)\|_{\opnormidx}}{\|x-y\|_2}  = \sup_{x \in  \set} \|\nabla^2 f(x)\|_{\tensnormidx}.
	\end{align*}
	where $\|\nabla f(x)\|_{\opnormidx}$ denotes the operator norm of $\nabla f(x)$ and $\|\nabla^2 f(x)\|_{\tensnormidx}$ denotes the tensor norm of $\nabla^2 f(x)$ both with respect to the Euclidean norm.
\end{fact}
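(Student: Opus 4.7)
The plan is to prove each identity by a two-sided inequality, using the fundamental theorem of calculus in one direction and a limiting (directional-derivative) argument in the other. Both identities follow the same template; the second just substitutes $\nabla f$ for $f$ and requires the tensor-norm characterization from Fact~\ref{fact:tensor_norm}.

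For the Lipschitz identity, I would first show $\sup_{x\in\set}\|\nabla f(x)\|_{\opnormidx}\le \lip_f^\set$ by a directional-derivative argument: for $x\in\set$ and a direction $v\in\reals^d$, pick $t>0$ small enough that $x+tv\in\set$ (using that $\set$ is a neighborhood of $x$ inside its own set, or handle boundary by density since $\set$ is convex). Then $\nabla f(x)^\top v = \lim_{t\to 0}(f(x+tv)-f(x))/t$, so $\|\nabla f(x)^\top v\|_2\le \lip_f^\set\|v\|_2$; taking $\sup$ over $v$ gives the bound. For the reverse inequality, I would use that $\set$ is convex to write, for $x,y\in\set$,
\[
f(y)-f(x) = \int_0^1 \nabla f(x+t(y-x))^\top(y-x)\,dt,
\]
bound the integrand by $\|\nabla f(x+t(y-x))\|_{\opnormidx}\|y-x\|_2$, and take the sup of the integrand over the line segment (which is contained in $\set$ by convexity) to conclude $\|f(y)-f(x)\|_2\le \sup_{z\in\set}\|\nabla f(z)\|_{\opnormidx}\,\|y-x\|_2$.

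For the smoothness identity, the same strategy applies but I need to translate between operator norms of $\nabla f(y)-\nabla f(x)\in\reals^{d\times n}$ and the tensor norm of $\nabla^2 f$. Using the definition of tensor-matrix and tensor-vector products from the excerpt, for $u\in\reals^d,\ v\in\reals^n$, componentwise application of the fundamental theorem of calculus to each partial $\partial_i f^{(j)}$ gives
\[
u^\top(\nabla f(y)-\nabla f(x))v \;=\; \int_0^1 \nabla^2 f(x+t(y-x))\bigl[u,\,y-x,\,v\bigr]\,dt,
\]
so by Definition~\ref{def:norm_tensor} the integrand is bounded by $\|\nabla^2 f(x+t(y-x))\|_{\tensnormidx}\|u\|_2\|v\|_2\|y-x\|_2$; taking $\sup$ over unit $u,v$ yields $\|\nabla f(y)-\nabla f(x)\|_{\opnormidx}\le \sup_{z\in\set}\|\nabla^2 f(z)\|_{\tensnormidx}\|y-x\|_2$. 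The reverse direction uses the differential expansion $\nabla f(x+tw)-\nabla f(x) = tM(w)+o(t)$ where $M(w)\in\reals^{d\times n}$ has entry $M(w)_{ij} = (\nabla^2 f^{(j)}(x)w)_i$; hence $u^\top M(w)v=\nabla^2 f(x)[u,w,v]$, and by item~3 of Fact~\ref{fact:tensor_norm} combined with symmetry of Hessians (the tensor is invariant under swapping the first two indices, so $\|\nabla^2 f(x)\|_{\tensnormidx}=\sup_{w\ne 0}\|M(w)\|_{\opnormidx}/\|w\|_2$), passing to the limit $t\to 0$ in the ratio $\|\nabla f(x+tw)-\nabla f(x)\|_{\opnormidx}/t$ gives $\|M(w)\|_{\opnormidx}\le\smooth_f^\set\|w\|_2$, so $\|\nabla^2 f(x)\|_{\tensnormidx}\le\smooth_f^\set$ pointwise in $\set$.

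The only genuine subtlety, and the step I expect to spend the most care on, is the correspondence between the matrix $M(w)$ arising as the directional derivative of $\nabla f$ and the tensor slice $\nabla^2 f(x)[\cdot,w,\cdot]$; this is a bookkeeping matter about which index of the $d\times d\times n$ tensor is contracted with $w$, and it relies on equality of mixed partials so that the tensor is symmetric in its first two slots. Everything else is routine: convexity of $\set$ ensures line segments stay in $\set$ so the fundamental-theorem arguments are valid, and the passages between $\sup$'s over points and over pairs are immediate from the definitions.
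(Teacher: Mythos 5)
Your proposal is correct and follows essentially the same route as the paper: the paper proves the two upper bounds $\lip_f^\set \le \sup_{x\in\set}\|\nabla f(x)\|_{\opnormidx}$ and $\smooth_f^\set \le \sup_{x\in\set}\|\nabla^2 f(x)\|_{\tensnormidx}$ by exactly your fundamental-theorem-of-calculus integral representations, and dispatches the reverse inequalities with the one-line remark that they ``come from the definitions of the gradient and the Hessian.'' Your directional-derivative limiting argument, including the index bookkeeping identifying $M(w)$ with the slice $\nabla^2 f(x)[\cdot,w,\cdot]$, is precisely the content of that omitted remark, so you are supplying detail the paper elides rather than taking a different path.
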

\begin{proof}
	We have for $x,y \in \set$,
	\begin{align*}
		\|f(x) - f(y)\|_2 = \|\int_{0}^{1}\nabla f(x + t(y-x))^\top (y-x) dt\|_2 \leq \sup_{x \in \set}\|\nabla f(x)\|_{\opnormidx} \|x-y\|_2, \\
		\|\nabla f(x) - \nabla f(y)\|_{\opnormidx} = \|\int_{0}^{1}\nabla^2f(x+t(y-x))[y-x, \cdot, \cdot] dt\|_{\opnormidx} \leq \sup_{x \in \set} \|\nabla^2 f(x)\|_{\tensnormidx}\|x-y\|_2,
	\end{align*}
	which gives $\lip_f^\set \leq \sup_{x\in  \set}\|\nabla f(x)\|_{\opnormidx}$ and $\smooth_f^\set \leq \sup_{x \in  \set} \|\nabla^2 f(x)\|_{\tensnormidx}$. The equalities come from the definitions of the gradient and the Hessian. 
\end{proof}

For a real function, $f: \reals^d \times \reals^p \mapsto \reals$, whose value is denoted $f(x,y)$, we decompose its gradient $\nabla f( x,  y) \in \reals^{d +p}$ on $( x,  y) \in \reals^d \times \reals^p$ as 
\[
\nabla f( x,  y) =\left(
\begin{matrix}
	\nabla_x f( x,  y) \\
	\nabla_y f( x,  y)
\end{matrix}\right) \qquad \mbox{with} \qquad \nabla_x f(x,y) \in \reals^d, \quad \nabla_y f(x,y) \in \reals^p.
\]
Similarly we decompose its Hessian $\nabla f( x,  y) \in \reals^{(d +p)\times(d +p) }$ on blocks  that correspond to the variables $(x,y)$ as follows
\begin{gather*}
	\nabla^2 f( x,  y) =\left(
	\begin{matrix}
		\nabla_{xx} f( x,  y) & \nabla_{xy} f(x,y)\\
		\nabla_{yx} f( x,  y) & \nabla_{yy} f(x,y) 
	\end{matrix}\right) 
	\\
	\qquad \mbox{with} \qquad \nabla_{xx} f(x,y) \in \reals^{d\times d}, \quad \nabla_{yy} f(x,y) \in \reals^{p\times p}, \quad  \nabla_{xy} f(x,y) = \nabla_{yx} f( x,  y)^\top \in \reals^{d\times p}.
\end{gather*}
Given a function $f: \reals^{d + p} \mapsto \reals^{n}$ and $( x,  y) \in \reals^d \times \reals^p$, we denote $\nabla_x f( x, y ) = (\nabla_x f^{(1)}( x,  y), \ldots, \nabla_x f^{(n)}( x,  y))\in \reals^{d \times n}$ and  we define similarly $\nabla_y f( x,  y)\in \reals^{p \times n}$. 

For its second order information we define  $\nabla_{xx} f( x,  y) = (\nabla_{xx} f^{(1)}( x,  y), \ldots, \nabla_{xx} f^{(n)}( x,  y))$, similarly for $\nabla_{xx} f( x,  y)$.
Dimension of these definitions are
\begin{gather*}
	\nabla_x f( x,  y) \in \reals^{d \times n}, \quad \nabla_y f( x,  y) \in \reals^{p \times n}, \\
	\nabla_{xx} f( x,  y) \in \reals^{d \times d \times n}, \quad \nabla_{yy} f( x,  y) \in \reals^{p \times p \times n},\\
	\nabla_{xy} f( x,  y) \in \reals^{d \times p \times n}, \quad \nabla_{yx} f( x,  y) \in \reals^{p \times d \times n}.
\end{gather*}

\subsection{Matrix functions}
For a differentiable matrix-valued multivariate function $g: \reals^d \rightarrow \reals^{p \times n}$ such that $g(x) = (g_{j, k}(x)_{1\leq j \leq p, 1\leq k \leq n})$, we denote its first order information as a tensor
\[
\nabla g(x) = \left(\frac{\partial g_{j, k}(x)}{\partial x_i}\right)_{1\leq i \leq d, 1\leq j \leq p, 1\leq k \leq n} \in \reals^{d \times p \times n}.
\]
This notation is consistent with previous ones, i.e., for $f: \reals^{d + p} \mapsto \reals^{n}$ and $g(y) = \nabla_y f( x,  y) \in \reals^{p \times n}$, then $\nabla g(y) =  \nabla_{xy}^2 f( x,  y) \in \reals^{d \times p \times n}$.
From previous definitions, we have the following fact.
\begin{fact}\label{fact:matgrad}
	For a differentiable matrix-valued multivariate function $g: \reals^d \rightarrow \reals^{p \times n}$, $A \in \reals^{p' \times p}, B \in \reals^{n \times n'}$, denoting $h(x) = Ag(x)B \in \reals^{p'\times n'}$, we have 
	\[\nabla h(x) = \nabla g(x)[\cdot, A^\top, B] \in \reals^{d \times p' \times n'}. \]
\end{fact}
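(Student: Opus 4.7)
The plan is to verify this entrywise using the chain rule and then match the result against the tensor-matrix product convention introduced just above. Since both $h$ and the bracket notation are defined purely by linear operations on the slices of $\nabla g(x)$, no analytical input beyond differentiability of $g$ is needed; the whole proof is bookkeeping.

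First I would write $h$ componentwise: for indices $\ell' \in \{1,\ldots,p'\}$ and $j \in \{1,\ldots,n'\}$,
\[
h_{\ell', j}(x) \;=\; \sum_{\ell=1}^{p}\sum_{m=1}^{n} A_{\ell',\ell}\, g_{\ell,m}(x)\, B_{m,j}.
\]
By linearity of partial differentiation, for any $i \in \{1,\ldots,d\}$,
\[
(\nabla h(x))_{i,\ell',j} \;=\; \frac{\partial h_{\ell',j}}{\partial x_i}(x) \;=\; \sum_{\ell,m} A_{\ell',\ell}\, (\nabla g(x))_{i,\ell,m}\, B_{m,j}.
\]

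Next I would unpack the right-hand side using the tensor-matrix product from the paragraph defining $\mathcal{A}[P,Q,R]$. Viewing $\nabla g(x) \in \reals^{d\times p\times n}$ as the list of $n$ matrices $M_m \in \reals^{d\times p}$ with $(M_m)_{i,\ell} = (\nabla g(x))_{i,\ell,m}$, applying the convention with $P = \idm_d$, $Q = A^\top \in \reals^{p\times p'}$ and $R = B \in \reals^{n\times n'}$ produces a tensor in $\reals^{d\times p'\times n'}$ whose $j$-th slice equals
\[
N_j \;=\; \sum_{m=1}^{n} B_{m,j}\, \idm_d^\top M_m A^\top \;=\; \sum_{m=1}^{n} B_{m,j}\, M_m A^\top.
\]
Expanding the matrix product entry by entry yields
\[
(N_j)_{i,\ell'} \;=\; \sum_{m=1}^{n} B_{m,j} \sum_{\ell=1}^{p} (M_m)_{i,\ell}\, (A^\top)_{\ell,\ell'} \;=\; \sum_{\ell,m} A_{\ell',\ell}\, (\nabla g(x))_{i,\ell,m}\, B_{m,j},
\]
which is exactly the expression obtained for $(\nabla h(x))_{i,\ell',j}$.

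Since this holds for all $(i,\ell',j)$, we conclude $\nabla h(x) = \nabla g(x)[\,\cdot\,, A^\top, B]$. There is no genuine obstacle; the only subtlety is being careful about which axis of $\nabla g(x)$ corresponds to which factor in the bracket notation, and the convention fixed above Fact~\ref{fact:tensormat_rule} makes this unambiguous.
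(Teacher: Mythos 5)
Your proof is correct: the entrywise computation matches the paper's tensor-matrix product convention exactly, and the index bookkeeping (third axis of $\nabla g(x)$ indexing the list of $d\times p$ slices, hence paired with $R=B$, while the second axis pairs with $Q=A^\top$) is handled properly. The paper states this fact without proof, asserting it follows ``from previous definitions,'' and your verification is precisely the routine calculation being left implicit, so there is nothing to compare beyond noting that you have supplied the omitted details faithfully.
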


\subsection{Bilinear functions}\label{ssec:bilinear}
\begin{definition}
	A function $\bilinear: \reals^d \times \reals^n \rightarrow \reals^p$ is bilinear if it is represented by a tensor $\mathcal{B} \in \reals^{d \times n \times p}$ such that for any $x \in \reals^d, y\in \reals^n$, 
	\[
	\bilinear(x,y) = \mathcal{B}[x,y, \cdot].
	\]
\end{definition}
The gradient of a bilinear function $\bilinear$ represented by a tensor $\mathcal{B} \in \reals^{d \times n \times p}$ at a point $x, y$ is given by 
\begin{equation}\label{eq:bilinear_grad}
	\nabla_x \bilinear(x,y) = \mathcal{B}[\cdot, y, \cdot] \in \reals^{d \times p}, \qquad \nabla_y \bilinear(x,y)=\mathcal{B}[x, \cdot, \cdot] \in \reals^{n\times p}.
\end{equation}
The Hessian of the bilinear function is given 
\begin{equation}\label{eq:bilinear_hess}
	\nabla^2_{xx} \bilinear(x, y) = 0, \quad \nabla^2_{yy} \bilinear(x, y) = 0, \quad \nabla^2_{xy} \bilinear(x, y) = \mathcal{B}, \quad \nabla^2_{yx} \bilinear(x, y) = \mathcal{B}^t.
\end{equation}
A bilinear function is not Lipschitz continuous as can be seen from Eq.~\eqref{eq:bilinear_grad}. It is  smooth w.r.t. the Euclidean norm with a smoothness constant given by the tensor norm of $\mathcal{B}$ as shown in the following proposition.

\begin{lemma}\label{lem:tensnorm}
	The smoothness of a bilinear function $\bilinear$ defined by a tensor $\mathcal{B}$ is upper bounded as $L_\bilinear \leq \|\mathcal{B}\|_{\tensnormidx}$.
\end{lemma}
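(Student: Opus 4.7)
The plan is to invoke Fact~\ref{fact:lip_smooth}, which reduces the claim to showing that the Hessian tensor of $\bilinear$ has tensor norm at most $\|\mathcal{B}\|_{\tensnormidx}$. Since $\bilinear$ is bilinear, its Hessian is independent of the evaluation point, and by Eq.~\eqref{eq:bilinear_hess} it is the block tensor $\mathcal{H} \in \reals^{(d+n)\times(d+n)\times p}$ with vanishing diagonal blocks and off-diagonal blocks $\mathcal{B}$ and $\mathcal{B}^t$.

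First, I would write out the action of $\mathcal{H}$ explicitly using the block decomposition:
\[
\mathcal{H}[(u;v),(u';v'),z] = u^\top \mathcal{B}[\cdot,\cdot,z]\, v' + (u')^\top \mathcal{B}[\cdot,\cdot,z]\, v,
\]
where I have used that $(\mathcal{B}^t)[\cdot,\cdot,z] = (\mathcal{B}[\cdot,\cdot,z])^\top$ by definition of the transpose of a tensor. Next, bound each of the two bilinear forms by the operator norm $\|\mathcal{B}[\cdot,\cdot,z]\|_{\opnormidx}$ times the Euclidean norms of the relevant factors, and apply Fact~\ref{fact:tensor_norm}(3) to obtain $\|\mathcal{B}[\cdot,\cdot,z]\|_{\opnormidx} \leq \|\mathcal{B}\|_{\tensnormidx}\|z\|_2$. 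This yields
\[
\mathcal{H}[(u;v),(u';v'),z] \leq \|\mathcal{B}\|_{\tensnormidx}\,\|z\|_2\,\bigl(\|u\|_2\|v'\|_2 + \|u'\|_2\|v\|_2\bigr).
\]

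The critical step is to apply the Cauchy--Schwarz inequality to the two two-dimensional vectors $(\|u\|_2,\|v\|_2)$ and $(\|v'\|_2,\|u'\|_2)$, yielding the tight estimate $\|u\|_2\|v'\|_2 + \|u'\|_2\|v\|_2 \leq \sqrt{\|u\|_2^2+\|v\|_2^2}\,\sqrt{\|u'\|_2^2+\|v'\|_2^2} = \|(u;v)\|_2\|(u';v')\|_2$. Dividing through and taking the supremum over nonzero $(u;v),(u';v'),z$ gives $\|\mathcal{H}\|_{\tensnormidx} \leq \|\mathcal{B}\|_{\tensnormidx}$, and since $\nabla^2\bilinear$ is constantly equal to $\mathcal{H}$, Fact~\ref{fact:lip_smooth} delivers $L_\bilinear \leq \|\mathcal{B}\|_{\tensnormidx}$.

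The only real subtlety is the constant: a naive triangle-inequality bound that pairs $\|u\|_2$ with $\|u'\|_2$ (and $\|v\|_2$ with $\|v'\|_2$) and then uses $\|u\|_2,\|v\|_2 \leq \|(u;v)\|_2$ would yield a factor of $2$ rather than $1$. The correct pairing in Cauchy--Schwarz — matching $u$ with $v$ and $v'$ with $u'$ so that the two-dimensional norms collapse exactly to $\|(u;v)\|_2$ and $\|(u';v')\|_2$ — is what produces the sharp constant asserted in the lemma.
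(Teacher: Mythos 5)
Your argument is correct, but it takes a genuinely different route from the paper's. The paper also reduces to bounding the (constant) Hessian tensor via Fact~\ref{fact:lip_smooth} and Fact~\ref{fact:tensor_norm}(3), but then it invokes the spectral structure of the symmetric dilation matrix $\left(\begin{smallmatrix} 0 & B \\ B^\top & 0\end{smallmatrix}\right)$ with $B = \sum_k z_k B_k$ --- whose eigenvalues are $\pm$ the singular values of $B$ (citing Horn and Johnson, Theorem 7.3.3) --- to conclude that $\|\sum_k z_k \tilde B_k\|_{\opnormidx} = \|\sum_k z_k B_k\|_{\opnormidx}$ \emph{exactly}, which gives $\|\nabla^2\bilinear(x,y)\|_{\tensnormidx} = \|\mathcal{B}\|_{\tensnormidx}$ with equality. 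You instead expand the trilinear form $\mathcal{H}[(u;v),(u';v'),z] = u^\top \mathcal{B}[\cdot,\cdot,z]v' + (u')^\top\mathcal{B}[\cdot,\cdot,z]v$ and estimate it directly, with the Cauchy--Schwarz pairing of $(\|u\|_2,\|v\|_2)$ against $(\|v'\|_2,\|u'\|_2)$ doing the work of recovering the sharp constant $1$ rather than $2$. Your version is more elementary and self-contained (no external spectral theorem), at the cost of only delivering the inequality $\leq$ rather than the equality the paper obtains; since the lemma claims only an upper bound, this loses nothing, and your closing remark correctly identifies the one place where a careless estimate would degrade the constant.
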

\begin{proof}
	We have 
	\begin{align*}
		\|\nabla^2\bilinear(x,y)\|_{\tensnormidx} & = \sup_{z\neq 0} \frac{\|\sum_{k=1}^{p} z_k \tilde B_k \|_{\opnormidx}}{\|z\|_2},
	\end{align*}
	where $\nabla^2\bilinear(x,y) = (\tilde B_1, \ldots, \tilde B_p)$. We have by Eq.~\eqref{eq:bilinear_hess} that
	$ \sum_{k=1}^{p} z_k \tilde B_k$ is of the form
	\[
	\sum_{k=1}^{p} z_k \tilde B_k  = \left(\begin{matrix}
		0 & \sum_{k=1}^{p} z_k  B_k \\
		\sum_{k=1}^{p} z_k  B_k^\top & 0
	\end{matrix}\right)
	\]
	where $\mathcal{B} = (B_1, \ldots, B_p)$. Therefore  we get $\|\sum_{k=1}^{p} z_k \tilde B_k\|_{\opnormidx} = \|\sum_{k=1}^{p} z_k B_k\|_{\opnormidx}$, see \citep[Theorem 7.3.3.]{horn2012matrix}. Therefore
	\begin{align*}
		\|\nabla^2\bilinear(x,y)\|_2 & = \sup_{z\neq 0} \frac{\|\sum_{k=1}^{p} z_k \tilde B_k \|_{\opnormidx}}{\|z\|_2} =\sup_{z\neq 0} \frac{\|\sum_{k=1}^{p} z_k B_k \|_{\opnormidx}}{\|z\|_2}  = \|\mathcal{B}\|_{\tensnormidx}.
	\end{align*}
\end{proof}

\section{Oracle arithmetic complexity proofs}\label{sec:oracle_proofs}
\subsection{Feasibility of the optimization oracle steps}
The gradient step~\eqref{eq:grad_step} is always feasible for any $\kappa>0$, the Gauss-Newton step~\eqref{eq:gn_step} is feasible for any $\kappa>0$ if $\obj, \reg$ are convex and the Newton step is feasible for any $\kappa>0$ if $\obj\circ \chain$ and $\reg$ are convex. A sufficient condition for the Newton step to be feasible if $\obj\circ \chain$ is not convex but $\obj\circ \chain$ and $\reg$ are smooth is to choose $\kappa < (\smooth_{\obj\circ \chain} + \smooth_\reg)^{-1}$. 
In other words, the step-size must be chosen small enough such that the Newton step is a convex problem. 

\subsection{Optimization oracles as linear quadratic  problems}

\begin{lemma}\label{lem:derivatives_chain}
	Let $\chain$ be a chain of $\horizon$ computations $\dyn_t$. Let $\ctrls =(\ctrl_1; \ldots; \ctrl_\horizon)\in \reals^{\sum_{t=1}^\horizon \dimctrl_t}$, $\state_0 \in \reals^{\dimstate_0}$, denote $\state_t = \chain_t(\state_0, \ctrls)$ and $E_t^\top  = (0_{\dimctrl_t \dimstate_0}, 0_{\dimctrl_t \dimctrl_1}, \ldots, \idm_{\dimctrl_t\dimctrl_t}, \ldots, 0_{\dimctrl_t \dimctrl_\horizon}) \in \reals^{ \dimctrl_t \times (\dimstate_0 + \sum_{t=1}^\horizon \dimctrl_t)}$  such that $E_t^\top (\state_0; \ctrls)  = \ctrl_t$ for $t\in \{1, \ldots, \horizon\}$.
	\begin{enumerate}
		\item If $\dyn_t$ are differentiable, then
		\begin{equation}\label{eq:grad_decomp}
		\nabla \chain_{t}(\state_0, \ctrls) = \nabla \chain_{t-1}(\state_0, \ctrls) \nabla_{\state_{t-1}} \dyn_t(\state_{t-1}, \ctrl_t) + E_t\nabla_{\ctrl_t}  \dyn_t(\state_{t-1}, \ctrl_t) 
		\end{equation}
		with $\nabla  \chain_{0}(\state_0, \ctrls) = E_0$,  with $E_0^\top = (\idm_{\dimstate_0 \dimstate_0}, 0_{\dimstate_0 \dimctrl_1}, \ldots, \ldots, 0_{\dimstate_0 \dimctrl_\horizon}) \in \reals^{ \dimstate_0 \times (\dimstate_0 + \sum_{t=1}^\horizon \dimctrl_t)}$ such that $E_0^\top (\state_0;\ctrls) = \state_0$.
		\item If $\dyn_t$ are twice differentiable,
		\begin{align}
			\nabla^2 \chain_t(\state_0, \ctrls) = \: & \nabla^2 \chain_{t-1}(\state_0, \ctrls)[\cdot, \cdot, \nabla_{\state_{t-1}}\dyn_t(\state_{t-1}, \ctrl_t)] \label{eq:hess_decomp}\\
			& + \nabla^2_{\state_{t-1}\state_{t-1}}\dyn_t(\state_{t-1}, \ctrl_t)[\nabla \chain_{t-1}(\state_0, \ctrls)^\top, \nabla \chain_{t-1}(\state_0, \ctrls)^\top, \cdot ] \nonumber\\
			& + \nabla^2_{\state_{t-1}\ctrl_t}\dyn_t(\state_{t-1}, \ctrl_t)[\nabla \chain_{t-1}(\state_0, \ctrls)^\top, E_t^\top, \cdot ] \nonumber\\
			& + \nabla^2_{\ctrl_t\state_{t-1}}\dyn_t(\state_{t-1}, \ctrl_t)[E_t^\top, \nabla \chain_{t-1}(\state_0, \ctrls)^\top, \cdot ] \nonumber\\
			& + \nabla^2_{\ctrl_t\ctrl_t}\dyn_t(\state_{t-1}, \ctrl_t)[E_t^\top, E_t^\top, \cdot ]\nonumber
		\end{align}
		with $\nabla^2 \chain_0(\state_0, \ctrls) = 0$.
	\end{enumerate}
\end{lemma}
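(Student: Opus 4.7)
The plan is to prove both identities by reducing each $\chain_t$ to the composition $\dyn_t \circ \Phi_t$, where $\Phi_t(\state_0, \ctrls) = (\chain_{t-1}(\state_0, \ctrls);\, \ctrl_t)$, and then applying the chain rule for (i) and the product rule plus a second application of the chain rule for (ii). The base cases are immediate: $\chain_0(\state_0,\ctrls) = \state_0 = E_0^\top(\state_0;\ctrls)$, so $\nabla \chain_0 = E_0$ and $\nabla^2 \chain_0 = 0$.

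For part (i), observe that in the transpose-of-Jacobian convention of Section~\ref{sec:notations}, the gradient of $\Phi_t$ with respect to $(\state_0;\ctrls)$ is the horizontal concatenation $[\nabla \chain_{t-1}(\state_0, \ctrls),\, E_t]$, since $\ctrl_t = E_t^\top(\state_0;\ctrls)$ has constant gradient $E_t$. Writing $\nabla \dyn_t(\state_{t-1}, \ctrl_t)$ in its two blocks $\nabla_{\state_{t-1}}\dyn_t$ and $\nabla_{\ctrl_t}\dyn_t$ and applying the chain rule $\nabla(f\circ g)(x) = \nabla g(x)\,\nabla f(g(x))$ then yields Eq.~\eqref{eq:grad_decomp} by block multiplication.

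For part (ii), I differentiate Eq.~\eqref{eq:grad_decomp} one more time with respect to $(\state_0, \ctrls)$. The term $E_t\,\nabla_{\ctrl_t}\dyn_t(\state_{t-1},\ctrl_t)$ has $E_t$ constant, so Fact~\ref{fact:matgrad} together with the chain rule applied to the dependence of $(\state_{t-1},\ctrl_t)$ on $(\state_0,\ctrls)$ through $\Phi_t$ produces exactly two Hessian contributions, namely $\nabla^2_{\state_{t-1}\ctrl_t}\dyn_t[\nabla \chain_{t-1}^\top, E_t^\top, \cdot]$ and $\nabla^2_{\ctrl_t\ctrl_t}\dyn_t[E_t^\top, E_t^\top, \cdot]$. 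For the first term $\nabla \chain_{t-1}\,\nabla_{\state_{t-1}}\dyn_t$, the product rule gives (a) the derivative of $\nabla \chain_{t-1}$, which is $\nabla^2 \chain_{t-1}$ and contracts against $\nabla_{\state_{t-1}}\dyn_t$ in the third slot to yield $\nabla^2 \chain_{t-1}[\cdot,\cdot,\nabla_{\state_{t-1}}\dyn_t]$, plus (b) the derivative of $\nabla_{\state_{t-1}}\dyn_t$, which via the chain rule through $\Phi_t$ contributes $\nabla^2_{\state_{t-1}\state_{t-1}}\dyn_t[\nabla \chain_{t-1}^\top,\nabla \chain_{t-1}^\top,\cdot]$ and $\nabla^2_{\ctrl_t\state_{t-1}}\dyn_t[E_t^\top,\nabla \chain_{t-1}^\top,\cdot]$. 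Collecting the five contributions reproduces Eq.~\eqref{eq:hess_decomp}.

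The main obstacle is bookkeeping the tensor contractions: deciding which of the three slots in the $[\cdot,\cdot,\cdot]$ notation each contraction lands in, keeping the transpose placements consistent, and verifying that the mixed partials $\nabla^2_{\state_{t-1}\ctrl_t}\dyn_t$ and $\nabla^2_{\ctrl_t\state_{t-1}}\dyn_t$ appear with the correct left/right ordering of $\nabla \chain_{t-1}^\top$ and $E_t^\top$. An entry-wise computation indexing $\partial_l(\nabla\chain_t)_{ij}$ and matching each scalar sum to the tensor identities in Fact~\ref{fact:tensormat_rule} and Fact~\ref{fact:matgrad} disambiguates all such choices and produces the stated formula line-by-line; no regularity issue arises beyond the assumed twice-differentiability of each $\dyn_t$.
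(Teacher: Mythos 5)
Your proof is correct and follows essentially the same route as the paper: write $\chain_t = \dyn_t(\chain_{t-1}(\state_0,\ctrls), E_t^\top(\state_0;\ctrls))$, apply the chain rule in the transposed-Jacobian convention to get Eq.~\eqref{eq:grad_decomp}, and differentiate that identity once more with the product rule to collect the five tensor contributions of Eq.~\eqref{eq:hess_decomp}. The paper's own proof is terser (it asserts the second claim "using the tensor notations"), so your explicit accounting of which slot each contraction lands in is, if anything, a more complete version of the same argument.
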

\begin{proof}
	It follows from the definition of the chain of computations and the notations used for tensors. Precisely we have that
	\[
	\chain_t(\state_0, \ctrls) = \dyn_t(\chain_{t-1}(\state_0, \ctrls), E_t^\top (\state_0;\ctrls)),
	\]
	hence the first result that can be written
	\[
	\nabla \chain_{t}(\state_0, \ctrls) = \nabla \chain_{t-1}(\state_0, \ctrls) \nabla_{\state_{t-1}} \dyn_t(\chain_{t-1}(\state_0, \ctrls), E_t^\top (\state_0;\ctrls)) + E_t\nabla_{\ctrl_t}  \dyn_t(\chain_{t-1}(\state_0, \ctrls), E_t^\top (\state_0;\ctrls)),
	\]
	hence the second result using the tensor notations.
\end{proof}

\linquad*
\begin{proof}
	To reformulate the optimization oracle problems as quadratic problems with linear dynamics we reformulate $\nabla \chainoutput(\currctrls)^\top \auxctrls $ as a linear chain of compositions and $\nabla^2 \chainoutput(\currctrls)[\auxctrls, \auxctrls, \nabla\obj(\chainoutput(\currctrls))]$ as a quadratic on the linear trajectory defined by the gradient and the parameters using Lemma~\ref{lem:derivatives_chain}.
	Precisely, for $\ctrls = (\ctrl_1,\;\ldots;\ctrl_\horizon)$ and  $\auxctrls = (\auxctrl_1;\ldots;\auxctrl_\horizon)$, denoting  $\chain_{\state_0}(\ctrls) = (\state_1; \ldots;\state_\horizon)$ and $\nabla \chain_{\state_0}(\ctrls)^\top \auxctrls= (\auxstate_1; \ldots; \auxstate_\horizon)$ with $\nabla \chain_{\state_0, t}(\ctrls)^\top \auxctrls = \auxstate_t$, we have from~\eqref{eq:grad_decomp}
	\begin{flalign} 
		 \auxstate_t &  = \nabla_{\state_{t-1}} \dyn_t(\state_{t-1}, \ctrl_t)^\top \auxstate_{t-1} + \nabla_{\ctrl_t} \dyn_t(\state_{t-1}, \ctrl_t)^\top \auxctrl_t, 
		\quad \mbox{for} \quad t\in \{1, \ldots, \horizon\} \label{eq:linear_chain_grad}\\
		\auxstate_0 & = 0,  \nonumber
	\end{flalign}
	and		$\nabla \chainoutput(\ctrls)^\top\auxctrls = \nabla \chain_{\state_0, \horizon}(\ctrls)^\top \auxctrls  = \auxstate_\horizon $.
	For the second order derivatives, from~\eqref{eq:hess_decomp}, we have  for $\auxctrls = (\auxctrl_1;\ldots;\auxctrl_\horizon)$ and $\lambda_t \in \reals^{\dimstate_t}$,
	\begin{align*}
		\frac{1}{2}\nabla^2 \chain_{\state_0, t}(\ctrls) [\auxctrls, \auxctrls, \lambda_t] = & 
		 \nabla^2 \chain_{\state_0, t-1}(\ctrls)[\auxctrls, \auxctrls, \nabla_{\state_{t-1}}\dyn_t(\state_{t-1}, \ctrl_t)\lambda_t] \\
		 & +
		  \frac{1}{2}\nabla^2_{\state_{t-1}\state_{t-1}}\dyn_t(\state_{t-1}, \ctrl_t)[\auxstate_{t-1}, \auxstate_{t-1}, \costate_t] \\
		&  + \nabla^2_{\state_{t-1}\ctrl_t}\dyn_t(\state_{t-1}, \ctrl_t)[\auxstate_{t-1},\auxctrl_t, \costate_t ] \nonumber\\
		&  + \frac{1}{2}\nabla^2_{\ctrl_t\ctrl_t}\dyn_t(\state_{t-1}, \ctrl_t)[\auxctrl_t, \auxctrl_t, \costate_t ] \nonumber
	\end{align*}
	Hence we get
	\begin{align}
		\frac{1}{2}\nabla^2 \chainoutput(\ctrls) [\auxctrls, \auxctrls, \nabla\obj(\chainoutput(\ctrls))] = \sum_{t=1}^\horizon &  \frac{1}{2}\nabla^2_{\state_{t-1}\state_{t-1}}\dyn_t(\state_{t-1}, \ctrl_t)[\auxstate_{t-1}, \auxstate_{t-1}, \costate_t] \label{eq:newton_decomp}\\
		&  + \nabla^2_{\state_{t-1}\ctrl_t}\dyn_t(\state_{t-1}, \ctrl_t)[\auxstate_{t-1},\auxctrl_t, \costate_t ] \nonumber\\
		&  + \frac{1}{2}\nabla^2_{\ctrl_t\ctrl_t}\dyn_t(\state_{t-1}, \ctrl_t)[\auxctrl_t, \auxctrl_t, \costate_t ] \nonumber
	\end{align}
	where $\auxstate_t$ are given in~\eqref{eq:linear_chain_grad} and $\costate_t$ are defined by
	\begin{align*}
		\costate_\horizon =\nabla\obj(\chainoutput(\ctrls)), \quad
		\costate_{t-1} = \nabla_{\state_{t-1}} \dyn_{t}(\state_{t-1}, \ctrl_t) \costate_{t} \qquad \mbox{for} \quad t\in\{1,\ldots, \horizon\}.
	\end{align*}
	The results follow by using the decomposability of $\reg$ and inserting~\eqref{eq:linear_chain_grad} and~\eqref{eq:newton_decomp}.
\end{proof}
We present the resolution of the Newton step by dynamic programming in Algo.~\ref{algo:Newton} whose implementation is justified in Proposition~\ref{prop:newton_step}. Note that the gradient is computed during the first backward pass which can reduce the computations by factorizing those computations. For the Gauss-Newton steps the same dynamic programming approach can be applied, however it is less computationally expansive to use automatic differentiation procedures as presented in Sec.~\ref{sec:oracles}. 
\begin{proposition}\label{prop:newton_step}
	Consider problem~\eqref{eq:lin_quad} and assume it is bounded below, then the cost-to-go functions defined for $t \in \{0,\ldots, \horizon\}$ and $\state_{t} \in \reals^{\dimlatent_t}$ as
	\begin{align}\label{eq:costogo}
	\costogo_t(\state_t) =  \min_{\substack{\auxctrl_{t+1},\ldots, \auxctrl_\horizon \\ \auxstate_t, \ldots, \auxstate_\horizon}} 
	\quad & \sum_{t'=t}^{\horizon} \frac{1}{2}\auxstate_{t'}^\top \PP_{t'} \auxstate_{t'}  + \p_{t'}^\top \auxstate_{t'}
	+\sum_{t'=t+1}^{\horizon}
	\auxstate_{t'-1}^\top \R_{t'} \auxctrl_{t'} + \frac{1}{2}\auxctrl_{t'}^\top \Q_{t'}\auxctrl_{t'} + \q_{t'}^\top\auxctrl_{t'} 
	+\frac{\kappa}{2}\|\auxctrl_{t'}\|_2^2 \\
	\mbox{subject to} \quad & \auxstate_{t'}= \A_{t'} \auxstate_{t'-1} + \B_{t'} \auxctrl_{t'} \qquad \mbox{for} \quad  t' \in \{t+1,\ldots,\horizon\}, \nonumber\\
	& \auxstate_t = \state_t, \nonumber
	\end{align}
	where $\PP_0 =0$, $\p_0 = 0$, are quadratics of the form 
	\begin{align}\label{eq:quad_costogo}
		\costogo_t(\state_t) = \frac{1}{2}\state_t^\top \CC_t \state_t + \cc_t^\top\state_t + \cste,
	\end{align}
	where $\CC_t, \cc_t$ are defined recursively in line~\ref{line:costogo} Algo.~\ref{algo:Newton} with $\CC_t = \CC_t^\top$ and $\cste$ is a constant.
	The solution of~\eqref{eq:lin_quad} is given by, starting from $\auxstate_0 = 0$, 
	\begin{align*}
		\auxctrl_t^*  = \K_t\auxstate_{t-1} + \kk_t\qquad
		\auxstate_t = \A_t \auxstate_{t-1} + \B_t \auxctrl_t^*,
	\end{align*}
	where $\K_t$ and $\kk_t$ are defined in line~\ref{line:gains} of Algo.~\ref{algo:Newton}.
\end{proposition}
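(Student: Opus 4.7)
The plan is to proceed by backward induction on $t$, invoking Bellman's principle~\eqref{eq:bellman} applied to problem~\eqref{eq:lin_quad}. First I would settle the base case: since no dynamics or controls remain at $t=\horizon$ and the only constraint is $\auxstate_\horizon = \state_\horizon$, directly $\costogo_\horizon(\state_\horizon) = \tfrac{1}{2}\state_\horizon^\top \PP_\horizon \state_\horizon + \p_\horizon^\top \state_\horizon$, so the recursion is initialized with $\CC_\horizon = \PP_\horizon$ and $\cc_\horizon = \p_\horizon$ (symmetric, since $\PP_\horizon$ is a Hessian).

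For the inductive step, assume $\costogo_{t+1}(\state_{t+1}) = \tfrac{1}{2}\state_{t+1}^\top \CC_{t+1}\state_{t+1} + \cc_{t+1}^\top \state_{t+1} + \cste$ with $\CC_{t+1}$ symmetric. Bellman's principle yields
\begin{align*}
\costogo_t(\state_t) = \tfrac{1}{2}\state_t^\top \PP_t \state_t + \p_t^\top \state_t + \min_{\auxctrl_{t+1}} \Bigl\{ & \state_t^\top \R_{t+1}\auxctrl_{t+1} + \tfrac{1}{2}\auxctrl_{t+1}^\top(\Q_{t+1} + \kappa\idm)\auxctrl_{t+1} + \q_{t+1}^\top \auxctrl_{t+1} \\
& + \costogo_{t+1}(\A_{t+1}\state_t + \B_{t+1}\auxctrl_{t+1})\Bigr\}.
\end{align*}
Substituting the quadratic form for $\costogo_{t+1}$ and expanding, the minimand is a quadratic in $\auxctrl_{t+1}$ with Hessian $H_{t+1} := \Q_{t+1} + \kappa\idm + \B_{t+1}^\top \CC_{t+1}\B_{t+1}$ and a gradient affine in $\state_t$. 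Assuming $H_{t+1}\succ 0$, the unconstrained minimizer is $\auxctrl_{t+1}^* = \K_{t+1}\state_t + \kk_{t+1}$ with $\K_{t+1} = -H_{t+1}^{-1}(\R_{t+1}^\top + \B_{t+1}^\top \CC_{t+1}\A_{t+1})$ and $\kk_{t+1} = -H_{t+1}^{-1}(\q_{t+1} + \B_{t+1}^\top \cc_{t+1})$, matching line~\ref{line:gains} of Algo.~\ref{algo:Newton}. Substituting back and collecting terms of degree $2$, $1$, $0$ in $\state_t$ yields the recursion for $\CC_t, \cc_t$ announced in line~\ref{line:costogo}; symmetry of $\CC_t$ is preserved because $\PP_t + \A_{t+1}^\top \CC_{t+1}\A_{t+1}$ is symmetric and the correction $-(\R_{t+1} + \A_{t+1}^\top \CC_{t+1}\B_{t+1})H_{t+1}^{-1}(\R_{t+1}^\top + \B_{t+1}^\top \CC_{t+1}\A_{t+1})$ is manifestly symmetric.

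The main obstacle is justifying invertibility of $H_{t+1}$ at every stage. The hypothesis that~\eqref{eq:lin_quad} is bounded below is equivalent to its global Hessian being positive semidefinite, and the per-step regularization $\tfrac{\kappa}{2}\|\auxctrl_t\|_2^2$ with $\kappa>0$ makes the dependence on each $\auxctrl_t$ strictly convex. A short Schur-complement argument (applied iteratively backward, using that $\CC_{t+1}\succeq 0$ follows from the same decomposition) then gives $H_{t+1}\succ 0$, so the closed-form minimizer is well-defined and the quadratic form of $\costogo_t$ is established.

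Finally, to obtain the global solution I would unroll: starting from $\auxstate_0 = 0$ and applying the pointwise minimizers $\auxctrl_t^* = \K_t \auxstate_{t-1} + \kk_t$ followed by the dynamics $\auxstate_t = \A_t\auxstate_{t-1} + \B_t \auxctrl_t^*$ realizes the pointwise minimizers of each Bellman subproblem, hence a global minimizer of~\eqref{eq:lin_quad} by the dynamic programming principle. This completes the proof and simultaneously validates Algo.~\ref{algo:Newton}.
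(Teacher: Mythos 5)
Your overall route---backward induction via the Bellman recursion~\eqref{eq:bellman}, a quadratic ansatz for the cost-to-go, a closed-form inner minimizer, and a forward rollout---is exactly the paper's, and your base case, the formulas for $\K_t,\kk_t$, the recursion for $\CC_t,\cc_t$, and the symmetry observation all match line~\ref{line:gains} and line~\ref{line:costogo} of Algo.~\ref{algo:Newton}.

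The gap is in your justification that $H_t := \kappa\id + \Q_t + \B_t^\top\CC_t\B_t \succ 0$. You assert that $\CC_{t+1}\succeq 0$ ``follows from the same decomposition'' and that the term $\tfrac{\kappa}{2}\|\auxctrl_t\|_2^2$ then upgrades this to strict positivity; neither step holds in general. For the Newton oracle the matrices $\PP_{t-1} = \nabla^2_{\state_{t-1}\state_{t-1}}\dyn_t(\state_{t-1},\ctrl_t)[\cdot,\cdot,\costate_t]$, $\R_t$ and the layer contribution to $\Q_t$ are second derivatives of nonconvex maps contracted with costates and are generally indefinite, so the recursion gives no reason for $\CC_t\succeq 0$. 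Moreover, boundedness below of~\eqref{eq:lin_quad} constrains the quadratic only along trajectories emanating from $\auxstate_0=0$; it does not yield positive semidefiniteness of the joint Hessian in $(\state_t,\auxctrl_{t+1},\ldots,\auxctrl_\horizon)$ that a Schur-complement argument would require, since $\state_t$ is a free (possibly unreachable) variable in the definition of $\costogo_t$. Finally, $\kappa>0$ gives strict convexity in $\auxctrl_t$ only if $\Q_t+\B_t^\top\CC_t\B_t$ is already positive semidefinite, which is precisely the point at issue. The paper argues instead by contradiction at each stage: if $H_t\not\succeq 0$, then moving $\auxctrl_t$ along a direction of negative curvature---with optimal continuation, whose cost is by the inductive hypothesis a quadratic in that step length with negative leading coefficient---drives the objective to $-\infty$, contradicting boundedness below; when $H_t\succ 0$ the minimizer is well defined and the induction proceeds. (Both arguments leave aside the degenerate case where $H_t$ is positive semidefinite but singular, which the algorithm handles by increasing $\kappa$.) Replacing your positive-semidefiniteness claim with this contradiction argument makes the rest of your proof go through.
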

\begin{proof}
	The cost-to-go functions satisfy the recursive equation~\eqref{eq:bellman} for $t \in\{1, \ldots, \horizon\}$
	\begin{align*}
	\costogo_{t-1}(\state_{t-1}) = \frac{1}{2}\state_{t-1}^\top \PP_{t-1} \state_{t-1} + \p_{t-1}^\top \state_{t-1} +
	\min_{\auxctrl_{t}\in \reals^{\dimparam_t}} \bigg\{ & \state_{t-1}^\top \R_{t} \auxctrl_{t} 
	+ \frac{1}{2}\auxctrl_{t}^\top \Q_{t} \auxctrl_{t} + \q_{t}^\top \auxctrl_{t} 
	+\frac{\kappa}{2} \|\auxctrl_{t}\|_2^2 \\
	& + \costogo_{t}(\A_{t}\state_{t-1} + \B_{t}\auxctrl_{t} )\bigg\},
	\end{align*}
	starting from $\costogo_\horizon(\state_\horizon) = \frac{1}{2}\state_\horizon^\top \PP_\horizon \state_\horizon + \p_\horizon^\top \state_\horizon$ so we get $\CC_\horizon = P_\horizon$ and $\cc_\horizon = p_\horizon$.
	Assume that the cost-to-go function $\costogo_{t}$ has the form~\eqref{eq:quad_costogo}  for $t\in \{1,\ldots, \horizon\}$ then the recursive equation~\eqref{eq:bellman} reads
	\begin{align*}
	\costogo_{t-1}(\state_{t-1}) & = \frac{1}{2}\state_{t-1}^\top (\PP_{t-1} + \A_{t}^\top \CC_t \A_t)\state_{t-1} + (\A_t^\top \cc_t+  \p_{t-1})^\top \state_{t-1} \\
	& \hspace{1em} +  \min_{\auxctrl_t \in \reals^{\dimparam_t}} \Big\{\auxctrl_t^\top(\R_t^\top \state_{t-1} + \q_t + \B_t^\top(\CC_t \A_t \state_{t-1} +  \cc_t)) 
	+ \frac{1}{2}\auxctrl_t^\top (\kappa\id  + \Q_t +  \B_t^\top \CC_t\B_t)\auxctrl_t\Big\}.
	\end{align*}
	If $\kappa \id + \Q_t + \B_t^\top C_t \B_t \not\succeq 0$, then the minimization problem is unbounded below and so is the original objective. If $\kappa \id + \Q_t + \B_t^\top C_t \B_t \succ 0$, then the minimization gives us $\costogo_{t-1}$ as a quadratic and the corresponding minimizer $\auxctrl_{t}^*(\state_{t-1})$ for a given $\state_{t-1}$, i.e.
	\begin{align*}
		\CC_{t-1} & = \PP_{t-1} +  \A_t^\top \CC_t \A_t - (\R_t + \A_t^\top \CC_t \B_t)(\kappa\id+ \Q_t  + \B_t^\top C_t \B_t)^{-1}( \R_t^\top + \B_t^\top\CC_t \A_t), \\
		\cc_{t-1} &  =  \A_t^\top \cc_t+  \p_{t-1} - (\R_t + \A_t^\top \CC_t \B_t)(\kappa\id+ \Q_t  + \B_t^\top C_t \B_t)^{-1}(\q_t + \B_t^\top \cc_t), \\
		\auxctrl_{t}^*(\state_{t-1}) &  = -(\kappa\id + \Q_t + \B_t^\top C_t \B_t)^{-1} ((\R_t^\top + \B_t^\top \CC_t \A_t )\state_{t-1} + \q_t + \B_t^\top \cc_t).
	\end{align*}
	The solution of~\eqref{eq:lin_quad} is given by computing $\costogo_0(0)$ which amounts to compute, starting from $\auxstate_0 = 0$, 
	\begin{align*}
	\auxctrl_t^* &  = \argmin_{\auxctrl\in \reals^{\dimparam_t}} \left\{\frac{1}{2}\auxctrl^\top \Q_t \auxctrl + \q_t^\top \auxctrl 
	+ \auxstate_{t-1}^\top \R_t \auxctrl
	+ \costogo_{t +1}(\A_t \auxstate_{t-1} + \B_t \auxctrl) \right\}= \auxctrl_{t}^*(\state_{t-1}),\\
	\auxstate_t & = \A_t \auxstate_{t-1} + \B_t \auxctrl_t^* .
	\end{align*}
\end{proof}

\begin{algorithm}
	\caption{Newton oracle by dynamic programming \label{algo:Newton}}
	\begin{algorithmic}[1]
		\State{\textbf{Inputs:} Chain of computations $\chain$ defined by $\dyn_t$, objective $\obj$, regularization $\reg$, regularization for the step $\kappa$, current weights $\currctrls = (\ctrl_1;\ldots;\ctrl_\horizon)$}
		\State{\textbf{Forward pass:}}
		\For{$t=1, \ldots, \horizon$}
		\State{Compute $\state_t = \dyn_t(\state_{t-1}, \ctrl_t)$}
		\State{Store $\A_t = \nabla_{\state_{t-1}} \dyn_t(\state_{t-1}, \ctrl_t)^\top$, $\B_t = \nabla_{\ctrl_t} \dyn_t(\state_{t-1}, \ctrl_t)^\top$ \newline 
		and  $\nabla_{\ctrl_t \ctrl_t}^2 \dyn_t(\state_{t-1}, \ctrl_t)$, $\nabla_{\ctrl_t \state_{t-1}}^2 \dyn_t(\state_{t-1}, \ctrl_t)$,  $\nabla_{\state_{t-1} \state_{t-1}}^2 \dyn_t(\state_{t-1}, \ctrl_t)$}
		\EndFor
		\State{\textbf{1st Backward pass:}}
		\State{Initialize $\costate_\horizon = \nabla \obj(\state_\horizon) $, $\PP_\horizon = \nabla^2 \obj(\state_\horizon)$, $\p_\horizon = \nabla \obj(\state_\horizon)$ }
		\For{$t=\horizon,\ldots, 1$}
			\State{Compute 
				\begin{align*}
					\PP_{t-1} &  = \nabla^2_{\state_{t-1} \state_{t-1}} \dyn_t(\state_{t-1}, \ctrl_t)[\cdot, \cdot, \costate_t]
					\qquad & \p_{t-1} & = 0 \\
					\Q_t & = \nabla^2_{\ctrl_t \ctrl_t} \dyn_t(\state_{t-1}, \ctrl_t)[\cdot, \cdot, \costate_t] + \nabla^2 \reg_t(\ctrl_t)
					\qquad & \q_t &  = \nabla \reg_t (\ctrl_t) \\
					\R_t &  = 
					\nabla^2_{\state_{t-1} \ctrl_t} \dyn_t(\state_{t-1}, \ctrl_t)[\cdot, \cdot, \costate_t] &
				\end{align*}
 			}
			\State{Compute $\costate_{t-1} = \nabla_{\state_{t-1}}\dyn_t(\state_{t-1}, \ctrl_t) \costate_t$}
		\EndFor
		\State{\textbf{2nd Backward pass:}}
		\State{Initialize $\CC_\horizon = \PP_\horizon$, $\cc_\horizon = \p_\horizon$, $\texttt{feasible}=\texttt{True}$}
		\For{$t = \horizon,\ldots, 1$}
			\If{$\kappa\id+ \Q_t  + \B_t^\top C_t \B_t \not \succ 0$}
			\State{$\texttt{feasible} = \texttt{False}$}
			\State{\textbf{break}}
			\EndIf
			\State{Compute \label{line:costogo}
			\begin{align*} 
				\CC_{t-1} & = \PP_{t-1} +  \A_t^\top \CC_t \A_t - (\R_t + \A_t^\top \CC_t \B_t)(\kappa\id+ \Q_t  + \B_t^\top C_t \B_t)^{-1}( \R_t^\top + \B_t^\top\CC_t \A_t) \\
				\cc_{t-1} &  =  \A_t^\top \cc_t+  \p_{t-1} - (\R_t + \A_t^\top \CC_t \B_t)(\kappa\id+ \Q_t  + \B_t^\top C_t \B_t)^{-1}(\q_t + \B_t^\top \cc_t)
			\end{align*}}\
			\State{Store  \label{line:gains}
				\begin{align*}
					\K_t  = -(\kappa\id + \Q_t + \B_t^\top C_t \B_t)^{-1} (\R_t^\top + \B_t^\top \CC_t \A_t ) \qquad
					\kk_t  =-(\kappa\id + \Q_t + \B_t^\top C_t \B_t)^{-1} (\q_t + \B_t^\top \cc_t)
				\end{align*}}
		\EndFor
		\If{$\texttt{feasible} = \texttt{False}$}
		\State{Re-do 2nd backward pass with  $\kappa := 2\cdot\kappa$}
		\EndIf
		\State{\textbf{Rollout:}}
		\State{Initialize $\auxstate_0 =0$}
		\For{$t=1,\ldots, \horizon$}
		\State{\begin{align*}
				\auxctrl_t^*  = \K_t \auxstate_{t-1} + \kk_t, \qquad 
				\auxstate_t  = \A_t \auxstate_{t-1} + \B_t \auxctrl_t
			\end{align*}}
		\EndFor
		\State{\textbf{Output:} $ (\auxctrl_1^*; \ldots; \auxctrl_\horizon^*)$}
	\end{algorithmic}
\end{algorithm}
Finally we present the derivation of a gradient step, i.e., gradient back-propagation, as a dynamic programming procedure, which gives the forward-backward algorithm presented in Sec.~\ref{sec:oracles} by taking $\reg=0$, $\kappa=-1$.
\begin{proposition}\label{prop:grad_step}
	Consider the gradient step~\eqref{eq:grad_step} as formulated in~\eqref{eq:lin_quad} with $\kappa=1/\stepsize$. The cost-to-go functions defined as in~\eqref{eq:costogo} are linear of the form 
	\begin{equation}\label{eq:costogo_grad}
	\costogo_t(\state_t) = \costate_t^\top \state_t + \cste,
	\end{equation}
	where 
	\begin{align*}
	\costate_\horizon & = \nabla \obj(\chainoutput(\ctrls)) \\
	\costate_{t-1} &  = \nabla_{\state_{t-1}}  \dyn_t(\state_{t-1}, \ctrl_t) \costate_t
	\qquad \mbox{for}\: t \in \{1, \ldots, \horizon\}
	\end{align*} and the solution of the step is given by
	\[
	\auxctrl_t^* = -\stepsize (\nabla \reg(\ctrl_t) + \nabla_{\ctrl_t}\dyn_t(\state_{t-1}, \ctrl_t) \costate_t).
	\]
\end{proposition}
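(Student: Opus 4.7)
The plan is to follow the same dynamic-programming template used in the proof of Proposition~\ref{prop:newton_step}, but exploit the drastic simplifications that arise from Proposition~\ref{prop:lin_quad} in the gradient case: namely $\PP_t = 0$ for all $t$, $\R_t = 0$, $\Q_t = 0$, $\q_t = \nabla \reg_t(\ctrl_t)$, and $\p_\horizon = \nabla \obj(\chainoutput(\ctrls))$ with $\p_t = 0$ otherwise. The regularizer $\kappa\|\auxctrl_t\|_2^2/2$ becomes $\|\auxctrl_t\|_2^2/(2\stepsize)$. So problem~\eqref{eq:lin_quad} reduces to
\begin{align*}
\min_{\auxstate_0=0,\, \auxctrl_1,\ldots,\auxctrl_\horizon} \quad & \p_\horizon^\top \auxstate_\horizon + \sum_{t=1}^\horizon \q_t^\top \auxctrl_t + \frac{1}{2\stepsize}\|\auxctrl_t\|_2^2 \\
\text{subject to}\quad & \auxstate_t = \A_t \auxstate_{t-1} + \B_t \auxctrl_t,\quad t \in \{1,\ldots,\horizon\}.
\end{align*}

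Next, I would set up the Bellman recursion~\eqref{eq:bellman} for the cost-to-go $\costogo_t$ defined in~\eqref{eq:costogo} and prove~\eqref{eq:costogo_grad} by backward induction on $t$. The base case is $\costogo_\horizon(\auxstate_\horizon) = \p_\horizon^\top \auxstate_\horizon$, which matches the claimed form with $\costate_\horizon = \p_\horizon = \nabla \obj(\chainoutput(\ctrls))$. For the inductive step, assume $\costogo_t(\auxstate_t) = \costate_t^\top \auxstate_t + \cste$. Substituting into the Bellman equation (with $\p_{t-1}=0$ for $t-1 \neq \horizon$) gives
\begin{align*}
\costogo_{t-1}(\auxstate_{t-1}) &= \min_{\auxctrl_t \in \reals^{\dimctrl_t}} \left\{ \q_t^\top \auxctrl_t + \tfrac{1}{2\stepsize}\|\auxctrl_t\|_2^2 + \costate_t^\top (\A_t \auxstate_{t-1} + \B_t \auxctrl_t)\right\} + \cste \\
&= (\A_t^\top \costate_t)^\top \auxstate_{t-1} + \min_{\auxctrl_t} \left\{(\q_t + \B_t^\top \costate_t)^\top \auxctrl_t + \tfrac{1}{2\stepsize}\|\auxctrl_t\|_2^2\right\} + \cste.
\end{align*}
The inner minimization is an unconstrained quadratic in $\auxctrl_t$ with positive definite Hessian $\stepsize^{-1}\idm$, whose minimizer is
\begin{equation*}
\auxctrl_t^* = -\stepsize\,(\q_t + \B_t^\top \costate_t) = -\stepsize\,\bigl(\nabla \reg_t(\ctrl_t) + \nabla_{\ctrl_t} \dyn_t(\state_{t-1}, \ctrl_t)\costate_t\bigr),
\end{equation*}
which is the claimed control law. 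Moreover, the optimal value is a constant in $\auxstate_{t-1}$, so $\costogo_{t-1}(\auxstate_{t-1}) = \costate_{t-1}^\top \auxstate_{t-1} + \cste$ with $\costate_{t-1} = \A_t^\top \costate_t = \nabla_{\state_{t-1}}\dyn_t(\state_{t-1},\ctrl_t)\costate_t$, completing the induction and matching the recursion stated in the proposition.

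Finally, the solution of~\eqref{eq:lin_quad} is recovered by the forward rollout from $\auxstate_0 = 0$: at each step the optimal $\auxctrl_t^*$ depends on $\auxstate_{t-1}$ only through the already-computed fixed trajectory $(\state_{t-1})_{t=1}^\horizon$ (appearing via $\A_t, \B_t, \q_t$), not through $\auxstate_{t-1}$ itself, so the rollout is trivial and the formula for $\auxctrl_t^*$ above is the final answer. The only subtle point is verifying that, unlike in Prop.~\ref{prop:newton_step} where feasibility requires a positive-definiteness condition on $\kappa\idm + \Q_t + \B_t^\top \CC_t \B_t$, here the absence of $\Q_t, \CC_t, \R_t$ terms reduces this condition to $\stepsize^{-1}\idm \succ 0$, which holds for any $\stepsize > 0$; this is the closest thing to a nontrivial step but it is immediate.
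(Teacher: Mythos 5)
Your proposal is correct and follows essentially the same route as the paper's own proof: a backward induction on the Bellman recursion showing the cost-to-go is linear, with the inner minimization over $\auxctrl_t$ solved in closed form to yield $\auxctrl_t^* = -\stepsize(\q_t + \B_t^\top \costate_t)$ and the costate recursion $\costate_{t-1} = \A_t^\top \costate_t$. The additional remarks on the trivial rollout and unconditional feasibility are accurate but not needed beyond what the paper states.
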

\begin{proof}
	The cost-to-go function defined in~\eqref{eq:costogo} for a gradient step reads for $t=\horizon$, $\costogo_\horizon(\state_\horizon) = \p_\horizon^\top \state_\horizon$, so we get Eq.~\eqref{eq:costogo_grad} for $t=\horizon$ with $\costate_\horizon = \nabla \obj(\chainoutput(\ctrls))$. Assume that the cost-to-go function has the form~\eqref{eq:costogo_grad} for $t\in \{1,\ldots, \horizon\}$, then the recursive equation~\eqref{eq:bellman} reads
	\begin{align*}
		\costogo_{t-1}(\state_{t-1}) =  \min_{\auxctrl_t \in \reals^{\dimparam_t}} \left\{ \auxctrl_t^\top \q_t +  \costate_t^\top (\A_t \state_{t-1} + \B_t \auxctrl_{t}) + \frac{1}{2\stepsize} \|\auxctrl_t \|_2^2\right\}
	\end{align*}
	So we get that $\costogo_{t-1}$ is a linear function defined by $
	\costate_{t-1} = \A_t^\top \costate_t 
	$
	and that the optimal corresponding parameter is independent of $\state_{t-1}$ and reads 
	\[
	\auxctrl_t^* = -\stepsize (\q_t + \B_t^\top \costate_t).
	\] 
	Plugging the values of $\A_t, \B_t, \q_t$ into the solutions give the results.
\end{proof}

\subsection{Detailed complexities of forward and backward passes}
\label{def:sparsity}

\begin{definition}[Sparsity of the operations]
	We define the \emph{sparsity $\spars_\bilinear$ of a bilinear operation $\bilinear$} as the number of non-zero elements in its corresponding tensor.
	
	We define the \emph{sparsity $\spars_\activ$ of a function $\activ$} as the sparsity of its gradient, i.e., the maximal number of its non-zero elements for any inputs.
\end{definition}

The sparsity of a bilinear operation amounts to the number of multiplications needed to compute $\mathcal{B}[x,y,z]$,  $\mathcal{B}[\cdot, y, z]$, $\mathcal{B}[x, \cdot, z]$ or $ \mathcal{B}[x,y,\cdot]$, which gives us the sparsity of the two bilinear operations studied in this paper.
\begin{fact}
	For a matrix-product as in~\eqref{eq:fully_connected}, we have $\spars_{\bilinear} = \batchsize \tilde \diminput \diminput$. For a convolution as in~\eqref{eq:conv}, we have $\spars_{\bilinear} = \batchsize \nbpatch \nbfilter \dimfilter$.
\end{fact}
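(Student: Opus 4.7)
The plan is to read off the sparsity directly from the explicit tensor representation of each bilinear form, after incorporating the mini-batch of size $\batchsize$ by stacking $\batchsize$ independent copies of the single-sample layer. Batching does not introduce any cross-sample bilinear coupling, so the full-batch tensor is block diagonal in the batch index and its number of non-zeros is exactly $\batchsize$ times that of a single sample. This reduces both cases to counting non-zero monomials in the single-sample bilinear $\bilinear$.

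For the fully connected layer~\eqref{eq:fully_connected}, each output coordinate indexed by (sample $i$, output feature $j$) reads $\sum_{k=1}^{\diminput} W_{k,j}\,\latent^{(i)}_k$, giving exactly one bilinear monomial $\state^{(i)}_k W_{k,j}$ per triple $(i,j,k)$ with $i\in\{1,\ldots,\batchsize\}$, $j\in\{1,\ldots,\tilde \diminput\}$, $k\in\{1,\ldots,\diminput\}$. The intercept $\offset$ contributes only to the linear term $\bilinear^\ctrl$ and does not enter $\bilinear$. Hence the tensor of $\bilinear$ has $\spars_\bilinear = \batchsize\, \tilde \diminput\, \diminput$ non-zero entries.

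For the convolutional layer~\eqref{eq:conv}, the key observation is that each patch extractor $\patch_k \in \reals^{\dimfilter \times \diminput}$ is itself very sparse: each of its $\dimfilter$ rows is a canonical basis vector picking out a single coordinate of $\latent$. Thus the output coordinate indexed by (sample $i$, filter $j$, patch $k$) reduces to $\sum_{\ell=1}^{\dimfilter} w_{j,\ell}\,\latent^{(i)}_{m(\ell,k)}$ for a well-defined index map $m(\ell,k)$, contributing exactly $\dimfilter$ non-zero tensor entries. Summing over $i\in\{1,\ldots,\batchsize\}$, $j\in\{1,\ldots,\nbfilter\}$ and $k\in\{1,\ldots,\nbpatch\}$ yields $\spars_\bilinear = \batchsize\, \nbpatch\, \nbfilter\, \dimfilter$.

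There is no substantial obstacle: once $\bilinear$ is expressed in the canonical basis coming from the vectorization, the result is a direct combinatorial count. The only points to watch are conventions, namely that Definition~\ref{def:sparsity} counts non-zeros of the three-mode tensor of $\bilinear$ alone, so that the linear and constant parts $\bilinear^\ctrl, \bilinear^\state, \linearcste$ of the biaffine decomposition~\eqref{eq:biaffine} are explicitly excluded, and that the patch-selection matrices $\patch_k$ share the same sparsity pattern as in any standard convolution implementation, so no position is double-counted across $\ell$.
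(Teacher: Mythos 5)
Your count is correct and matches the paper's (essentially unproved) justification: the paper simply remarks that $\spars_\bilinear$ is the number of multiplications and that $\patch_k\latent$ is treated as a coordinate extraction rather than a matrix-vector product, which is exactly the observation you formalize by noting each row of $\patch_k$ is a canonical basis vector. Your explicit handling of the batch index as a block-diagonal stacking and your check that no tensor entry is double-counted are the right details to make the Fact rigorous.
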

We considered $\Pi_k\Latent_{t-1}$ as the extraction of coordinates and not a matrix-vector product.
Note that the sparsity of the bilinear operation defines also the number of multiplications needed to compute gradient vector products like $\nabla_{\state_{t-1}} \bilinear_t(\state_{t-1}, \ctrl_t) \costate_{t+1}$ or $\nabla_{\ctrl_{t}} \bilinear_t(\state_{t-1}, \ctrl_t)\costate_{t+1}$  for $\costate_{t+1} \in \reals^{\dimlatent_{t}}$.

The sparsity of a function $f \in \reals^d \rightarrow \reals^n$ naturally gives the number of multiplications needed to compute gradient-vector products $\nabla f(x) \lambda$ for any $x\in \reals^d$, $\lambda \in \reals^n$. For element-wise activation functions as in~\eqref{eq:element_wise}, we have $\spars_{\activ} = \batchsize \diminput$, where we consider the input of the activation function to be $z = \Vect(Z)$ for $Z \in \reals^{\batchsize \times \diminput}$. Note that the sparsity of an activation function as defined here does not directly give the cost of computing it, neither its gradient. 

\paragraph{Forward-backward detailed complexity}
We present in the next proposition the cost of computing only the backward pass to compute the whole gradient.
The cost of computing the function and the gradients  of the layers in the forward pass can be further detailed using the sparsity of the bilinear operation and the cost of computing the activation function and its derivatives. The detailed complexities given in Sec.~\ref{sec:oracles} follow.
\begin{restatable}{proposition}{oraclecplx}
	Consider a chain $\chain$ of $\horizon$ layers as defined in Def.~\ref{def:chain} whose layers $\dyn_{t}$ are defined by $\nonlin_t, \biaffine_t$ as in~\eqref{eq:deep_layer}. 
	Then the cost of the backward pass defined in Algo.~\ref{algo:backward_lin} is of the order of
	\[
	\bigO\left(\sum_{t=1}^{\horizon}\spars_{\nonlin_t} + 2\spars_{\bilinear_{t}} + \spars_{\bilinear^\ctrl_t} + \spars_{\bilinear^\state_t}\right)
	\]
	elementary operations.
\end{restatable}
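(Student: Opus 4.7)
The plan is to decompose each step of the backward loop in Algo.~\ref{algo:backward_lin} via the chain rule across the factorization $\dyn_t = \nonlin_t \circ \biaffine_t$ and the additive decomposition of $\biaffine_t$ in~\eqref{eq:biaffine}, then count each gradient-vector product using the sparsities introduced in Definition~\ref{def:sparsity}. Once the per-step count is established, summing over $t$ gives the claim.

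First I would unfold one iteration of the backward loop. At step $t$, the algorithm must produce $\costate_{t-1} = \nabla_{\state_{t-1}}\dyn_t(\state_{t-1},\ctrl_t)\costate_t$ and $g_t = \nabla_{\ctrl_t}\dyn_t(\state_{t-1},\ctrl_t)\costate_t$. Since $\dyn_t = \nonlin_t \circ \biaffine_t$, the chain rule gives
\[
\nabla_{\state_{t-1}}\dyn_t(\state_{t-1},\ctrl_t)\costate_t = \nabla_{\state_{t-1}}\biaffine_t(\state_{t-1},\ctrl_t)\bigl(\nabla\nonlin_t(\biaffine_t(\state_{t-1},\ctrl_t))\costate_t\bigr),
\]
and likewise for the $\ctrl_t$-derivative. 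So I first compute the intermediate slope $\tilde\costate_t := \nabla\nonlin_t(\biaffine_t(\state_{t-1},\ctrl_t))\costate_t$ \emph{once}. By the definition of the sparsity of a function, this costs $\bigO(\spars_{\nonlin_t})$ multiplications.

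Next I would pushed $\tilde\costate_t$ through the biaffine layer. Using~\eqref{eq:biaffine},
\[
\nabla_{\state_{t-1}}\biaffine_t(\state_{t-1},\ctrl_t) = \nabla_{\state_{t-1}}\bilinear_t(\state_{t-1},\ctrl_t) + \nabla\bilinear_t^\state,\qquad
\nabla_{\ctrl_t}\biaffine_t(\state_{t-1},\ctrl_t) = \nabla_{\ctrl_t}\bilinear_t(\state_{t-1},\ctrl_t) + \nabla\bilinear_t^\ctrl.
\]
Multiplying each of these four operators by $\tilde\costate_t$ costs, by the sparsity definition of a bilinear operation and the analogous count for the linear parts, respectively $\spars_{\bilinear_t}$, $\spars_{\bilinear_t^\state}$, $\spars_{\bilinear_t}$, and $\spars_{\bilinear_t^\ctrl}$. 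Summing these with the cost of computing $\tilde\costate_t$ gives a per-iteration bound $\bigO(\spars_{\nonlin_t} + 2\spars_{\bilinear_t} + \spars_{\bilinear_t^\ctrl} + \spars_{\bilinear_t^\state})$, and summation over $t=1,\dots,\horizon$ yields the announced estimate.

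The only point that requires care (and is essentially the ``obstacle'') is justifying that the sparsity of the bilinear tensor really bounds the cost of \emph{both} partial gradient-vector products $\nabla_{\state_{t-1}}\bilinear_t(\state_{t-1},\ctrl_t)\tilde\costate_t$ and $\nabla_{\ctrl_t}\bilinear_t(\state_{t-1},\ctrl_t)\tilde\costate_t$, which is why the factor $2$ appears in front of $\spars_{\bilinear_t}$; this follows because by~\eqref{eq:bilinear_grad} both slices $\mathcal{B}[\cdot,\ctrl_t,\cdot]\tilde\costate_t$ and $\mathcal{B}[\state_{t-1},\cdot,\cdot]\tilde\costate_t$ use only the non-zero entries of $\mathcal{B}$, each such entry contributing one multiplication per slice. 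The remaining accounting is bookkeeping and the overall bound follows.
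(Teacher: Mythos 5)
Your proposal is correct and follows essentially the same route as the paper's proof: apply the chain rule through $\dyn_t = \nonlin_t\circ\biaffine_t$, compute the intermediate slope $\nabla\nonlin_t(\biaffine_t(\state_{t-1},\ctrl_t))\costate_t$ once at cost $\spars_{\nonlin_t}$, and then charge each of the two bilinear gradient-vector products $\spars_{\bilinear_t}$ and the two linear parts $\spars_{\bilinear^\state_t}$, $\spars_{\bilinear^\ctrl_t}$, evaluating right to left. Your remark justifying the factor $2$ via the two slices $\mathcal{B}[\cdot,\ctrl_t,\cdot]$ and $\mathcal{B}[\state_{t-1},\cdot,\cdot]$ is exactly the content of the paper's sparsity definition for bilinear operations, so nothing is missing.
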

\begin{proof}
	If the chain of layers has the form~\eqref{eq:deep_layer}, the gradient vector products during the backward pass read
	\begin{align*}
	\nabla_{\state_{t-1}} \dyn_{t}(\state_{t-1}, \ctrl_t)\costate_{t+1} & = 
	\nabla_{\state_{t-1}} \biaffine_t(\state_{t-1}, \ctrl_t)
	\nabla \nonlin_t(\intervar_t)\costate_{t+1} = 
	(\mathcal{B}_t[ \cdot, \ctrl_t, \cdot] + \nabla \linearlatent_t(\state_{t-1})) \nabla \nonlin_t(\intervar_t) \costate_{t+1}, \\
	\nabla_{\ctrl_{t}} \dyn_{t}(\state_{t-1}, \ctrl_t)\costate_{t+1} & = 
	\nabla_{\ctrl_t} \biaffine_t(\state_{t-1}, \ctrl_t)\nabla \nonlin_t(\intervar_t) \costate_{t+1} =   (\mathcal{B}_t[\state_{t-1}, \cdot,  \cdot]  + \nabla \bilinear^\ctrl_t(\ctrl_t))\nabla \nonlin_t(\intervar_t) \costate_{t+1},
	\end{align*}
	where $\intervar_t =\biaffine_t(\state_{t-1}, \ctrl_t)$.
	The definitions of the sparsity of bilinear or general operations give the result by looking at each operation starting from the right.
\end{proof}

\subsection{Gauss-Newton by axutomatic differentiation}
Derivatives of the gradient vector product can then be computed themselves by back-propagation as recalled in the following lemma.
\begin{lemma}[{\citep[Lemma 3.4]{roulet2019iterativeArxiv}}]\label{lem:auto_diff_grad_prod}
	Consider  a differentiable chain of composition $\chain$ and an input $\state_0 \in \reals^{\dimstate_0}$ such that $\chainoutput = \chain_{\state_0, \horizon}: \reals^{\sum_{t=1}^\horizon \dimctrl_t} \rightarrow \reals^{\dimstate_{\horizon}}$. Given a variable $\ctrls \in \reals^{\sum_{t=1}^\horizon \dimctrl_t}$ and
	a decomposable differentiable function $g:\reals^{\sum_{t=1}^\horizon \dimctrl_t} \rightarrow\reals$ such that $g(\ctrls) = \sum_{t=1}^{\horizon} g_t(\ctrl_t)$ for $\ctrls = (\ctrl_1;\ldots;\ctrl_\horizon)$, computing the derivative of $\dualvar \rightarrow g(\nabla \chainoutput(\ctrls)\dualvar)$ requires two calls to an automatic-differentiation procedure.
\end{lemma}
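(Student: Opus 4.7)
The plan is to realize $h: \dualvar \mapsto g(\nabla \chainoutput(\ctrls) \dualvar)$ as a chain of computations whose input is $\dualvar$ and whose output is the scalar $h(\dualvar)$, and then invoke the autodiff oracle on this chain to obtain $\nabla h(\dualvar)$. Since $\ctrls$ is fixed, the construction of this chain is made possible by a preliminary autodiff call on $\chain$ itself, yielding two calls in total.

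First, invoke $\operatorname{Autodiff}(\chain, \ctrls)$ (Algo.~\ref{algo:auto_diff}) at the input $\state_0$. This performs a forward pass of $\chain$ and stores the Jacobians $\nabla\dyn_t(\state_{t-1}, \ctrl_t)$ for $t = 1, \ldots, \horizon$; from the perspective of $\dualvar$ these quantities are constants. By Algo.~\ref{algo:backward_lin}, the map $\dualvar \mapsto \nabla\chainoutput(\ctrls)\dualvar$ is thereby realized by the linear recursion $\costate_\horizon = \dualvar$, $\costate_{t-1} = \nabla_{\state_{t-1}}\dyn_t(\state_{t-1}, \ctrl_t)\costate_t$, with side-outputs $w_t = \nabla_{\ctrl_t}\dyn_t(\state_{t-1}, \ctrl_t)\costate_t$ and concatenated output $w = (w_1;\ldots;w_\horizon)$.

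Next, I would exploit the decomposability of $g$: since $g(w) = \sum_{t=1}^\horizon g_t(w_t)$, the scalar $h(\dualvar)$ can be appended to the backward recursion above as a differentiable head, yielding a chain of computations $\tilde\chain$ in the sense of Def.~\ref{def:chain} whose input is $\dualvar$, whose layers are the linear maps $\costate_t \mapsto \costate_{t-1}$ with fixed coefficients stored during the first autodiff call, and whose final output is the scalar $h(\dualvar)$. A second invocation of $\operatorname{Autodiff}$, now on $\tilde\chain$ at the input $\dualvar$, returns $\nabla h(\dualvar) \in \reals^{\dimstate_\horizon}$ via its backward pass. By the chain rule, this gradient equals $\nabla \chainoutput(\ctrls)^\top \nabla g(w)$, matching the analytic expression one would obtain by hand.

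The main obstacle is mostly conceptual rather than computational: one must recognize that the ``inner'' backward pass of $\chain$ driven by $\dualvar$ is itself a valid chain of differentiable operations whose parameters have already been supplied by the first autodiff call, so that applying autodiff to this second chain returns exactly $\nabla h(\dualvar)$ without any explicit transpose-Jacobian manipulation. Once this identification is in place, checking that $\tilde\chain$ satisfies Def.~\ref{def:chain} is routine (linear layers are $C^\infty$ and $g_t$ is differentiable by hypothesis), and the count of exactly two autodiff calls follows immediately from the structure of Algo.~\ref{algo:auto_diff}.
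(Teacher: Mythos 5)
The paper does not actually prove this lemma: it is imported verbatim by citation from \citep[Lemma 3.4]{roulet2019iterativeArxiv}, so there is no in-paper proof to compare against. Your argument is correct and is exactly the standard ``differentiate through the backward pass'' construction that the cited result rests on: the first call to $\operatorname{Autodiff}$ fixes the Jacobians $\nabla\dyn_t(\state_{t-1},\ctrl_t)$, the map $\dualvar\mapsto\nabla\chainoutput(\currctrls)\dualvar$ then becomes a linear chain in $\dualvar$ with those Jacobians as constants, and composing with the decomposable head $g$ gives a scalar-valued chain whose gradient $\nabla\chainoutput(\currctrls)^\top\nabla g(w)$ is delivered by one further forward--backward sweep; your chain-rule identity and the dimension bookkeeping under the paper's transposed-Jacobian convention both check out. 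The only wrinkle worth flagging is that the oracle of Def.~3.1 and Algo.~\ref{algo:auto_diff} differentiates a chain with respect to its \emph{parameters}, whereas in your $\tilde\chain$ the quantity of interest $\dualvar$ sits in the \emph{input} slot; this is harmless because the backward pass of Algo.~\ref{algo:backward_lin} also produces $\costate_0$, i.e.\ the derivative with respect to the input, (or equivalently you can declare $\dualvar$ to be the parameter of the first layer $\costate_\horizon=\dualvar$), but you should say which convention you are using so that the count of exactly two calls is unambiguous.
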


\gaussnewtonautodiff*
\begin{proof}
	The first and second claims follow from standard duality computations applied to~\eqref{eq:gn_step}, they require convexity of $\obj$ and $\reg$. The third claim comes from the fact that~\eqref{eq:dual_gn_step} is a quadratic convex problem that can be solved in at most $\dimstate_{\horizon}$ iterations of a conjugate gradient descent. Each iteration requires to compute the gradient of $\dualvar\rightarrow  (\qua_\reg^\ctrls + \kappa \|\cdot\|_2^ 2/2)^\star(- \nabla \chainoutput(\ctrls) \dualvar )$ which requires two calls to an automatic differentiation procedure by Lemma~\ref{lem:auto_diff_grad_prod} and using that $\reg^*$ is also decomposable. A last call to an automatic differentiation procedure is needed to compute $\nabla \chainoutput(\ctrls) \dualvar^*$. 
	The costs of computing $\nabla (\qua_\obj^{\chainoutput(\ctrls)})^ \star(\dualvar)$ for $\dualvar\in \reals^{\dimstate_{\horizon}}$ and $\nabla  (\qua_\reg^\ctrls + \kappa \|\cdot\|_2^ 2/2)^\star(\ctrls)$ for $\ctrls \in \reals^{\sum_{t=1}^\horizon \dimctrl_t}$ are ignored since they do not involve a chain of compositions and are assumed to be easily accessible.
\end{proof}

\section{Smoothness computations}\label{sec:smooth_proofs}
\subsection{Elementary operations}
\paragraph{Univariate functions}
\begin{lemma}\label{lem:smooth_elt_wise}
	Let $\alpha_i \in \mathcal{C}_{\lip_i, \smooth_i}$ for $i=1, \ldots, n$. Denote $\lip = (\lip_i)_{i=1}^n$, $\smooth = (\smooth_i)_{i=1}^ n$.
	\begin{enumerate}
		\item Assume $\alpha_i: \reals^{\dimin_i} \rightarrow \reals^{\dimout_i}$,  then 
		\[
		a: \begin{cases}
			\reals^{\sum_{i=1}^ n \dimin_i} & \rightarrow \reals^{\sum_{i=1}^{n} \dimout_i}\\
			x=(x_1;\ldots;x_n) &  \rightarrow  (\alpha_1(x_1); \ldots; \alpha_n(x_n))
		\end{cases} \qquad 	
		\]
		is $\|\lip\|_2$-Lipschitz continuous and $\|\smooth\|_{\infty}$-smooth.
\item Assume $\alpha_i : \reals^{\dimin_i} \rightarrow \reals^{\dimout}$, then 
\[
	a:
\begin{cases}
	\reals^{\sum_{i=1}^ n \dimin_i} & \rightarrow \reals^{\dimout}\\
	x=(x_1;\ldots;x_n) &  \rightarrow  \sum_{i=1}^n \alpha_i(x_i)
\end{cases}
\]
		is $\|\lip\|_2$-Lipschitz continuous and $\|\smooth\|_{\infty}$-smooth.
		\item Assume $a_i: \reals^{\dimin} \rightarrow \reals^{\dimout_i}$, then 
		\[
		a: \begin{cases}
			\reals^{\dimin} & \rightarrow \reals^{\sum_{i=1}^{n} \dimout_i}\\
			x &  \rightarrow  (\alpha_1(x); \ldots; \alpha_n(x))
		\end{cases} \qquad
		\]
		is  $\|\lip\|_2$-Lipschitz continuous and $\|\smooth\|_2$-smooth. 
		\item Assume  $\alpha_i : \reals^{\dimin} \rightarrow \reals^{\dimout}$, then
		\[
			a:
		\begin{cases}
			\reals^{\dimin} & \rightarrow \reals^{\dimout}\\
			x&  \rightarrow  \sum_{i=1}^n \alpha_i(x)
		\end{cases}
		\]
		is  $\|\lip\|_1$-Lipschitz continuous and $\|\smooth\|_1$-smooth.
	\end{enumerate}
\end{lemma}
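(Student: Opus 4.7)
The plan is to apply Fact~\ref{fact:lip_smooth}, which identifies $\lip_f^\set = \sup_x \|\nabla f(x)\|_{\opnormidx}$ and $\smooth_f^\set = \sup_x \|\nabla^2 f(x)\|_{\tensnormidx}$. In each of the four cases, the Jacobian of $a$ has an explicit block structure dictated by which inputs/outputs are shared or concatenated, and the proof reduces to bounding the operator norm of this block matrix (for Lipschitz) and of the difference $\nabla a(x)-\nabla a(y)$ (for smoothness), using the assumed bounds $\lip_i$ on $\|\nabla \alpha_i\|_{\opnormidx}$ and $\smooth_i$ on $\|\nabla \alpha_i(x_i)-\nabla \alpha_i(y_i)\|_{\opnormidx}/\|x_i-y_i\|_2$.

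Concretely, in case~1 the Jacobian is block-diagonal with diagonal blocks $\nabla \alpha_i(x_i)$; the operator norm of a block-diagonal matrix equals the maximum of block operator norms, which gives $\|\lip\|_\infty \leq \|\lip\|_2$ and, applied to the gradient difference, $\|\smooth\|_\infty$. In case~2, $\nabla a(x)$ is the vertical stacking of the blocks $\nabla \alpha_i(x_i) \in \reals^{\dimin_i \times \dimout}$: writing $u = (u_1;\ldots;u_n)$ and expanding $u^\top \nabla a(x)\, v = \sum_i u_i^\top \nabla \alpha_i(x_i)\, v$, one Cauchy-Schwarz on $\sum_i \lip_i\|u_i\|_2$ gives the $\|\lip\|_2$ bound; for smoothness the same expansion leads to $\sum_i \lip_i \|x_i-y_i\|_2\|u_i\|_2\|v\|_2$, and a second Cauchy-Schwarz of the form $\sum_i L_i^2\|x_i-y_i\|_2^2 \leq \|\smooth\|_\infty^2 \|x-y\|_2^2$ yields $\|\smooth\|_\infty$.

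Cases~3 and~4 are symmetric. In case~3, $\nabla a(x)$ is the horizontal concatenation of the blocks $\nabla \alpha_i(x)$, so $u^\top \nabla a(x) v = \sum_i u^\top \nabla \alpha_i(x) v_i$; bounding by $\sum_i \lip_i \|u\|_2\|v_i\|_2$ and applying Cauchy-Schwarz on $\sum_i \lip_i \|v_i\|_2$ gives $\|\lip\|_2$; the identical argument on the gradient difference with $L_i$ in place of $\lip_i$ gives $\|\smooth\|_2$. In case~4, $\nabla a(x) = \sum_i \nabla \alpha_i(x)$ and the triangle inequality for the operator norm directly yields $\|\lip\|_1$ and, applied to $\nabla a(x)-\nabla a(y)$, $\|\smooth\|_1$.

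The main obstacle is bookkeeping: choosing the correct grouping for Cauchy-Schwarz so that the power of the norm of $\lip$ (or $\smooth$) that emerges matches the claimed $\ell_2$, $\ell_\infty$, or $\ell_1$ norm. Once the block decomposition of $\nabla a(x)$ is identified in each case, the computation is routine. No result beyond Fact~\ref{fact:lip_smooth} and Cauchy-Schwarz is needed.
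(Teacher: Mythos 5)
Your proposal is correct and follows essentially the same route as the paper: identify the block structure of $\nabla a$ in each case (block-diagonal, vertically stacked, horizontally concatenated, summed) and bound its operator norm, and that of the gradient difference, via the given constants and Cauchy--Schwarz; your bilinear-form formulation $u^\top \nabla a(x) v$ is just an equivalent way of writing the paper's bound on $\|\nabla a(x) z\|_2$. The only cosmetic issue is a notational slip in the case-2 smoothness step where you write $\lip_i$ for what should be the gradient-Lipschitz constant $\smooth_i$, but the subsequent display makes the intent clear and the final bounds all match (and in case 1 you even note the tighter $\|\lip\|_\infty$ estimate that the paper relaxes to $\|\lip\|_2$).
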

\begin{proof}
	
	\begin{enumerate}
		\item
	We have for $x = (x_1;\ldots;x_n) \in \reals^{\sum_{i=1}^ n \dimin_i}$ and
	$z =(z_1; \ldots; z_n) \in \reals^{\sum_{i=1}^{n} \dimout_i}$, 
	\begin{align*}
		\|\nabla a(x) z \|_2 = \|\sum_{i=1}^{n} \nabla \alpha_i(x_i)z_i\|_2 \leq \sum_{i=1}^{n} \|z_i\|_2 \|\nabla \alpha_i(x_i)\|_{2, 2} \leq \|z\|_2 \sqrt{\sum_{i=1}^n \lip_i^2},
	\end{align*}
	which gives an upper bound on the Lipschitz-continuity of $a$.
	For  $x= (x_1;\ldots;x_n), y = (y_1;\ldots;y_n) \in \reals^{\sum_{i=1}^ n \dimin_i}$, we have
	\begin{align*}
		\|(\nabla a(x) {-} \nabla a(y))z\|_2 \leq \sum_{i=1}^{n} \|z_i\|_2 \|\nabla \alpha_i(x_i) {-} \nabla \alpha_i(y_i)\|_{2, 2} \leq \sum_{i=1}^{n} \|z_i\|_2\|x_i{-} y_i\|_2 \smooth_i \leq  \|z\|_2 \|x{-}y\|_2\max_{i\in\{1, \ldots, n\}} \smooth_i .
	\end{align*}
	Hence $\|\nabla a(x) - \nabla a(y)\|_{2, 2} \leq  \|x-y\|_2\max_{i\in\{1, \ldots, n\}} \smooth_i $ which gives an upper bound on the smoothness of $a$.
		
\item 	We have for $x = (x_1;\ldots;x_n) \in \reals^{\sum_{i=1}^ n \dimin_i}$ and $z \in \reals^\dimout$,
	\[
	\|\nabla a(x)z\|_2^ 2 = \sum_{i=1}^{n}\| \nabla \alpha_i(x_i) z\|_2^ 2 \leq \sum_{i=1}^ n \lip_i^ 2 \|z\|_2^ 2,
	\]
	which gives the Lipschitz-continuity parameter. Similarly we have for  $x= (x_1;\ldots;x_n), y = (y_1;\ldots;y_n) \in \reals^{\sum_{i=1}^ n \dimin_i}$,
	\[
	\|(\nabla a(x)  -\nabla a(y)) z\|_2^ 2 =  \sum_{i=1}^{n}\| (\nabla \alpha_i(x_i)- \nabla \alpha_i(y_i)) z\|_2^ 2 \leq \sum_{i=1}^{n} \smooth_i^ 2\|x_i-y_i\|_2^ 2 \|z\|_2^ 2\leq  \max_{i\in \{1, \ldots, n\}} \smooth_i^ 2\|x-y\|_2^ 2 \|z\|_2^ 2,
	\]
	which gives  the smoothness constant of $a$.

\item 
	The bound on the Lipschitz-continuity parameter follows from the same argument as in 1. For the smoothness parameter, we have for
	$x, y \in \reals^ \dimin$ and $z =(z_1; \ldots; z_n) \in \reals^{\sum_{i=1}^{n} \dimout_i}$, 
	\[
	\|(\nabla a(x) - \nabla \nonlin(y))z\|_2 \leq \sum_{i=1}^{n} \|z_i\|_2 \|\nabla \alpha_i(x) - \nabla \alpha_i(y)\|_{2, 2} \leq \sum_{i=1}^{n} \|z_i\|_2\|x- y\|_2 \smooth_i \leq  \|z\|_2 \|x-y\|_2 \sqrt{\sum_{i=1}^n \smooth_i^2}.
	\]
	Hence the result as in 1. 
	\item Clear by linearity of the gradient and triangular inequality.
\end{enumerate}
\end{proof}

\paragraph{Bilinear functions}
\begin{lemma}\label{lem:bilin_smooth_elt_wise}
	Consider $s\times t$ bilinear functions $\bilinear_{i + (j-1)s}:\reals^{d_i} \times \reals^{p_j} \rightarrow \reals^{m_{i + (j-1)s}}$ for $i \in \{1, \ldots s\}, j\in \{1, \ldots t\}$ then
	\[
	\bilinear : \begin{cases}
		\reals^{\sum_{i=1}^s d_i}  \times \reals^{\sum_{j=1}^t p_j} & \rightarrow \reals^{\sum_{k=1}^{st} m_k} \\
		(\state, \ctrl) &\rightarrow (\bilinear_1(\state_1, \ctrl_1); \ldots ;\bilinear_s(\state_s, \ctrl_1); 
		\bilinear_{s+1}(\state_1, \ctrl_2); \ldots ;\bilinear_{st}(\state_s, \ctrl_t) )
	\end{cases}
	\]
	is $\smooth_\bilinear = \max_{k \in \{1, \ldots, st\}} \smooth_{\bilinear_{k}}$ smooth.
\end{lemma}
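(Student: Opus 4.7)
The plan is to represent the stacked function by a single tensor with a block structure and then invoke Lemma~\ref{lem:tensnorm} together with Fact~\ref{fact:tensor_norm} to reduce the smoothness bound to an estimate on the tensor norm. Write $k = i + (j-1)s$ and let $\mathcal{B}_k \in \reals^{d_i \times p_j \times m_k}$ be the tensor representing $\bilinear_k$. Then $\bilinear$ is bilinear on $\reals^{\sum_i d_i} \times \reals^{\sum_j p_j}$ with values in $\reals^{\sum_k m_k}$ and is represented by a tensor $\mathcal{B}$ whose $(i,j)$-block (across the first two axes) and $k$-block (across the third axis) is exactly $\mathcal{B}_k$ when $k = i+(j-1)s$ and is zero otherwise. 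By Lemma~\ref{lem:tensnorm}, it suffices to show $\|\mathcal{B}\|_{\tensnormidx} \leq \max_k \smooth_{\bilinear_k}$.

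By Definition~\ref{def:norm_tensor}, I will bound $\mathcal{B}[x,y,z]$ for arbitrary $x=(x_1;\ldots;x_s) \in \reals^{\sum_i d_i}$, $y=(y_1;\ldots;y_t) \in \reals^{\sum_j p_j}$, and $z=(z_1;\ldots;z_{st}) \in \reals^{\sum_k m_k}$. By the block structure,
\[
\mathcal{B}[x,y,z] = \sum_{i=1}^s \sum_{j=1}^t \mathcal{B}_{i+(j-1)s}[x_i, y_j, z_{i+(j-1)s}].
\]
Each summand is bounded by $\|\mathcal{B}_k\|_{\tensnormidx} \|x_i\|_2 \|y_j\|_2 \|z_k\|_2 \leq \max_k \smooth_{\bilinear_k} \cdot \|x_i\|_2 \|y_j\|_2 \|z_k\|_2$ via Lemma~\ref{lem:tensnorm}. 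The remaining task is the combinatorial inequality
\[
\sum_{i,j} \|x_i\|_2 \|y_j\|_2 \|z_{i+(j-1)s}\|_2 \leq \|x\|_2 \|y\|_2 \|z\|_2,
\]
which I would prove by a double application of Cauchy--Schwarz: first in $j$, yielding $\sum_j \|y_j\|_2 \|z_{i+(j-1)s}\|_2 \leq \|y\|_2 (\sum_j \|z_{i+(j-1)s}\|_2^2)^{1/2}$ for each $i$; then in $i$, using that $\sum_i \sum_j \|z_{i+(j-1)s}\|_2^2 = \|z\|_2^2$. Combining everything and taking the supremum over $x,y,z$ gives $\|\mathcal{B}\|_{\tensnormidx} \leq \max_k \smooth_{\bilinear_k}$, and hence $\smooth_\bilinear \leq \max_k \smooth_{\bilinear_k}$ by Lemma~\ref{lem:tensnorm}.

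The only slightly delicate step is making the block structure of $\mathcal{B}$ precise (which slices of $\mathcal{B}$ coincide with which $\mathcal{B}_k$ and which are zero), since the third axis index $k$ enumerates pairs $(i,j)$ while the first two axes are indexed separately by $i$ and $j$. Once that bookkeeping is in place, the tensor-norm computation and the double Cauchy--Schwarz are routine.
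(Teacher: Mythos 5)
Your proof is correct, but it takes a different route from the paper's. The paper works directly with the value of the stacked map: by Lemma~\ref{lem:tensnorm} (together with Fact~\ref{fact:tensor_norm}) the smoothness of a bilinear map equals $\sup_{\state,\ctrl}\|\bilinear(\state,\ctrl)\|_2/(\|\state\|_2\|\ctrl\|_2)$, and since the output is a concatenation, $\|\bilinear(\state,\ctrl)\|_2^2=\sum_{i,j}\|\bilinear_{i+s(j-1)}(\state_i,\ctrl_j)\|_2^2\leq \max_k\smooth_{\bilinear_k}^2\sum_i\|\state_i\|_2^2\sum_j\|\ctrl_j\|_2^2$ — two lines, no third test vector needed. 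You instead bound the full trilinear form $\mathcal{B}[x,y,z]$, which forces you to carry the slope $z$ through the estimate and to prove the combinatorial inequality $\sum_{i,j}\|x_i\|_2\|y_j\|_2\|z_{i+(j-1)s}\|_2\leq\|x\|_2\|y\|_2\|z\|_2$ by a double Cauchy--Schwarz; that inequality is correct (the key point being that $k=i+(j-1)s$ bijects pairs $(i,j)$ onto $\{1,\dots,st\}$, so $\sum_{i,j}\|z_{i+(j-1)s}\|_2^2=\|z\|_2^2$), so your argument goes through. What the paper's version buys is that the concatenation structure of the output makes the third argument disappear automatically — the sum of squares over output blocks is exactly $\|\bilinear(\state,\ctrl)\|_2^2$ — whereas your version makes the block structure of the representing tensor explicit, which is slightly more bookkeeping but would generalize more readily to situations where the output blocks are combined by something other than concatenation. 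One small point of care: you invoke Lemma~\ref{lem:tensnorm} for the bound $\|\mathcal{B}_k\|_{\tensnormidx}\leq\smooth_{\bilinear_k}$, which is the reverse of the inequality stated there; for bilinear maps the two quantities coincide (the Hessian is constant and its proof shows the norms are equal), and the paper uses the same equality, but you should state that you are using the equality rather than the stated one-sided bound.
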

\begin{proof}
	By Lemma~\ref{lem:tensnorm},  we have that  $\smooth_\bilinear = \sup_{\state, \ctrl} \|\bilinear(\state, \ctrl)\|_2/\|\state\|_2\|\ctrl\|_2$. Now 
	\begin{align*}
			\|\bilinear(\state, \ctrl)\|_2^ 2 & = \sum_{i=1}^{s} \sum_{j=1}^ t \|\bilinear_{i + s(j-1)}(\state_i, \ctrl_j)\|_2^ 2\\
			& \leq  \sum_{i=1}^{s} \sum_{j=1}^ t  \smooth_{\bilinear_{i + s(j-1)}} \|\state_i\|_2^ 2 \|\ctrl_j\|_2^2 \leq \max_{k \in \{1, \ldots, st\}} \smooth_{\bilinear_{k}} \|\state\|_2^ 2 \|\ctrl\|_2^ 2.
	\end{align*}

\end{proof}

\subsection{Compositions}
The smoothness properties of the functions can be derived by bounding appropriately their first and second order information. Even if e.g. the functions are not twice differentiable, the same results would apply by decomposing carefully the terms, we directly use the second order information as it directly gives what we are interested in.

For a smooth function, an upper bound on the Lipschitz continuity of the function on a bounded set can be estimated even if the function is not Lipschitz continuous. Similarly a bound on the function a bounded set can be refined as defined below.

\begin{fact}\label{lem:refined_smooth}
	For a function $f \in \classfunc_{\bound_f, \lip_f, \smooth_f}$ and $R>0$. Denoting $B_R = \{x \in \dom f: \|x\|_2\leq R\}$, we have that
	\begin{align*}
		\lip_f^{B_R} &\leq \lip_f(R) := \min\{\lip_f, \|\nabla f(0)\|_{2, 2} + R\smooth_f\}, \\
		\bound_f^{B_R} &\leq \bound_f(R) := \min\{\bound_f, \|f(0)\|_2 + R\lip_f(R)\}
	\end{align*}
\end{fact}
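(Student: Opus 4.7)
The plan is to derive both inequalities by invoking Fact~\ref{fact:lip_smooth}, which identifies $\lip_f^\set$ with $\sup_{x \in \set} \|\nabla f(x)\|_{\opnormidx}$ and $\smooth_f^\set$ with $\sup_{x \in \set} \|\nabla^2 f(x)\|_{\tensnormidx}$ for twice differentiable $f$, together with the standard mean-value/integral reconstruction of $f$ and $\nabla f$ from their derivatives. The $\min$ on the right-hand side reflects that each bound competes with the global one, which always applies on the subset $B_R \subset \dom f$.

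For the first inequality, I would fix $x \in B_R$ and write $\nabla f(x) = \nabla f(0) + \int_0^1 \nabla^2 f(tx)[x,\cdot,\cdot]\, dt$, so that
\[
\|\nabla f(x)\|_{\opnormidx} \leq \|\nabla f(0)\|_{\opnormidx} + \smooth_f \|x\|_2 \leq \|\nabla f(0)\|_{\opnormidx} + R\,\smooth_f,
\]
using $\|\nabla^2 f(\cdot)\|_{\tensnormidx} \leq \smooth_f$ on $\dom f$. Taking the supremum over $x \in B_R$ and combining with the trivial bound $\lip_f^{B_R} \leq \lip_f$ yields $\lip_f^{B_R} \leq \lip_f(R)$. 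If $f$ is not twice differentiable, the same conclusion follows by the definition of $\smooth_f$ as a Lipschitz constant of the gradient, $\|\nabla f(x) - \nabla f(0)\|_{\opnormidx} \leq \smooth_f \|x\|_2$.

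For the second inequality, I would similarly write $f(x) = f(0) + \int_0^1 \nabla f(tx)^\top x\, dt$ for $x \in B_R$. Since the segment $[0,x]$ lies inside $B_R$ (it is a Euclidean ball, hence star-shaped around $0$), every $\nabla f(tx)$ is controlled by $\lip_f^{B_R} \leq \lip_f(R)$, so
\[
\|f(x)\|_2 \leq \|f(0)\|_2 + \lip_f(R)\,\|x\|_2 \leq \|f(0)\|_2 + R\,\lip_f(R).
\]
Together with the trivial bound $\bound_f^{B_R} \leq \bound_f$, this gives the claim.

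The only subtle point, and the sole thing to be careful about, is that the bound on $f(x) - f(0)$ uses $\lip_f(R)$ rather than $\lip_f$, which requires knowing that $\nabla f$ is bounded along the entire segment from $0$ to $x$; this is immediate because $B_R$ is convex and contains $0$. Everything else is a direct application of Fact~\ref{fact:lip_smooth} plus the elementary observation that the minimum of two valid upper bounds is itself a valid upper bound.
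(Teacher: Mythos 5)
Your proof is correct and is precisely the argument the paper intends: the paper states this as a Fact without proof, but your integral representations of $\nabla f$ and $f$ are exactly the technique the paper itself uses to establish Fact~\ref{fact:lip_smooth}, and your observation that the segment $[0,x]$ stays in $B_R$ (so that the refined constant $\lip_f(R)$, not just $\lip_f$, controls $f(x)-f(0)$) is the one point that needed care. The only implicit hypothesis, which the paper also glosses over, is that $\dom f$ is convex and contains $0$ so that $B_R$ is indeed star-shaped about the origin; this holds for all the layer maps considered, which are defined on all of $\reals^d$.
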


For a sequence of compositions we have the following result. 
\begin{lemma}\label{lem:nonlin_comp_bounds}
	Consider
	\[
	\nonlin =  \nonlin_\nbcomp \circ \ldots \circ \nonlin_1
	\]
	with $\nonlin_j \in \classfunc_{\bound_{\nonlin_j}, \lip_{\nonlin_j}, \smooth_{\nonlin_j}}$ for $j \in \{1, \ldots, \nbcomp\}$ and $\nonlin: \reals^{d}\rightarrow \reals^{n}$. Denote $B_R = \{x\in \reals^d: \|x\|_{\normidx}\leq R\}$, and for $j \in \{1,\ldots, \nbcomp\}$, 
	\begin{align*}
		\bound_j & = \bound_{a_j}(\bound_{t-1}), \\
		\lip_j & = \lip_{j-1} \lip_{\nonlin_j}(\bound_{j-1}),\\
		\smooth_j & = \smooth_{\nonlin_j}\lip_{j-1}^ 2 +  \smooth_{j-1}\lip_{\nonlin_j}(\bound_{j-1}),
	\end{align*}
	with $\bound_0 = R, \lip_0= 1, \smooth_0 = 0$.
	We have
	\begin{flalign*}
		\bound_\nonlin^{B_R} \leq \bound_\horizon,   \qquad
		\lip_{\nonlin}^{B_R} \leq \lip_\horizon = \prod_{j=1}^{\nbcomp} \lip_{\nonlin_j}(\bound_{j-1}), \qquad 
		\smooth_{\nonlin}^{B_R} \leq  \smooth_\horizon= \sum_{j=1}^{\nbcomp}\smooth_{\nonlin_j}
		\left(\prod_{i=1}^{j-1} \lip_{\nonlin_i}(\bound_{i-1}) \right)^2
		\left(\prod_{i=j+1}^{\nbcomp}\lip_{\nonlin_i}(\bound_{i-1})\right). 
	\end{flalign*}
\end{lemma}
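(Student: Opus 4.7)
The plan is to proceed by induction on $j$, tracking simultaneously the image bound, Lipschitz constant, and smoothness constant of the partial composition $\tilde \nonlin_j := \nonlin_j \circ \ldots \circ \nonlin_1$ on $B_R$. The inductive hypothesis is that $\tilde\nonlin_j(B_R) \subset B_{\bound_j}$, that $\lip_{\tilde\nonlin_j}^{B_R} \leq \lip_j$, and that $\smooth_{\tilde\nonlin_j}^{B_R} \leq \smooth_j$, which for $j=0$ is the trivial identity map and reduces to $\bound_0 = R$, $\lip_0 = 1$, $\smooth_0 = 0$.

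For the inductive step, the image bound follows from $\|\nonlin_j(y)\|_2 \leq \bound_{\nonlin_j}(\bound_{j-1})$ for $y \in B_{\bound_{j-1}}$, which is exactly Fact~\ref{lem:refined_smooth} (and where the monotonicity of $R \mapsto \bound_{\nonlin_j}(R)$ and $R \mapsto \lip_{\nonlin_j}(R)$ is what licenses instantiating at $\bound_{j-1}$). The Lipschitz step is the chain rule: at any $x \in B_R$,
\begin{equation*}
\nabla \tilde\nonlin_j(x) = \nabla \tilde\nonlin_{j-1}(x)\,\nabla \nonlin_j(\tilde\nonlin_{j-1}(x)),
\end{equation*}
so submultiplicativity of $\|\cdot\|_{\opnormidx}$ combined with Fact~\ref{fact:lip_smooth} applied to $\nonlin_j$ restricted to $B_{\bound_{j-1}}$ gives $\|\nabla \tilde\nonlin_j(x)\|_{\opnormidx} \leq \lip_{j-1}\,\lip_{\nonlin_j}(\bound_{j-1}) = \lip_j$.

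The main technical step, and the one I expect to be the principal obstacle, is the smoothness recursion. Using the tensor form of the second-order chain rule,
\begin{equation*}
\nabla^2 \tilde\nonlin_j(x) = \nabla^2 \tilde\nonlin_{j-1}(x)[\,\cdot\,,\,\cdot\,,\nabla \nonlin_j(\tilde\nonlin_{j-1}(x))] + \nabla^2 \nonlin_j(\tilde\nonlin_{j-1}(x))[\nabla \tilde\nonlin_{j-1}(x)^\top,\nabla \tilde\nonlin_{j-1}(x)^\top,\,\cdot\,],
\end{equation*}
applying the tensor norm bound $\|\mathcal{A}[P,Q,R]\|_{\tensnormidx} \leq \|\mathcal{A}\|_{\tensnormidx}\|P\|_{\opnormidx}\|Q\|_{\opnormidx}\|R\|_{\opnormidx}$ from Fact~\ref{fact:tensor_norm}, together with Fact~\ref{fact:lip_smooth} identifying the tensor norm of the Hessian with the smoothness constant, yields
\begin{equation*}
\|\nabla^2 \tilde\nonlin_j(x)\|_{\tensnormidx} \leq \smooth_{j-1}\,\lip_{\nonlin_j}(\bound_{j-1}) + \smooth_{\nonlin_j}\,\lip_{j-1}^2,
\end{equation*}
which is exactly the recursion defining $\smooth_j$. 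The subtlety is that $\nonlin_j$ need not be globally Lipschitz, so both $\lip_{\nonlin_j}$ and the evaluation point of $\nabla^2 \nonlin_j$ must be taken on the restricted set $B_{\bound_{j-1}}$; the monotonicity of $R \mapsto \lip_{\nonlin_j}(R)$ in Fact~\ref{lem:refined_smooth} is exactly what makes these restrictions cascade cleanly through the induction. If one of the $\nonlin_j$ is only once differentiable, the same argument goes through by replacing the Hessian computation with the corresponding increment bound on $\nabla \tilde\nonlin_j$ via the mean value theorem.

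Finally, the closed-form expressions for $\lip_\nbcomp$ and $\smooth_\nbcomp$ follow by unrolling the recursions: $\lip_\nbcomp$ is immediate as a product, and $\smooth_\nbcomp$ unrolls to a sum where each term $\smooth_{\nonlin_j}$ picks up the squared Lipschitz factor from layers $1,\ldots,j-1$ (from the $\lip_{j-1}^2$ factor propagating once through each subsequent application of the recursion) and a single Lipschitz factor from layers $j+1,\ldots,\nbcomp$ (from the $\lip_{\nonlin_j}(\bound_{j-1})$ multiplier in the recursion), which is exactly the announced expression.
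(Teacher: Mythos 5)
Your proof is correct and follows essentially the same route as the paper: the paper bounds the fully unrolled product formula for $\nabla \nonlin$ and the sum formula for $\nabla^2 \nonlin$ directly, whereas you establish the one-step recursions by induction and then unroll them, but both arguments rest on the identical ingredients (Fact~\ref{lem:refined_smooth} for the cascading bounds, the second-order chain rule as in Lemma~\ref{lem:derivatives_chain}, and the operator/tensor norm submultiplicativity of Facts~\ref{fact:lip_smooth} and~\ref{fact:tensor_norm}). No gap.
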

\begin{proof}
	The bound on the output is a direct iterative application of Fact~\ref{lem:refined_smooth}.
	We have for $x\in \reals^d$, 
	\begin{align*}
		\nabla \nonlin(x)  = \prod_{j=1}^{\nbcomp} g_j(x), \qquad \mbox{where} \quad 
		g_j(x)  = \nabla \nonlin_j(\nonlin_{j-1}\circ \ldots\circ \nonlin_{1} (x))  \quad \mbox{for} \ j \in \{1, \ldots, \nbcomp\}.
	\end{align*}
	We have 
	\[
	\sup_{x\in \reals^d: \|x\|_{\normidx}\leq R} \|g_j(x)\|_{\opnormidx} \leq \min\{\lip_{\nonlin_j}, \|\nabla \nonlin_j(0)\|_{\opnormidx} + \smooth_{\nonlin_j}\bound_{\nonlin_{j-1} \circ \ldots \circ \nonlin_1}^{B_R} \}.
	\]
	Therefore
	\[
	\lip_{\nonlin}^{B_R} \leq \prod_{j=1}^{\nbcomp}\lip_{\nonlin_j}(\bound_{j-1}).
	\]
	We have for $x\in \reals^d$, 
	\begin{align*}
		\nabla^2 \nonlin(x) = \sum_{j=1}^{\nbcomp} \nabla^2 \nonlin_j(x)\left[
		\left(\prod_{i=1}^{j-1}g_{i}(x)\right)^\top,
		\left(\prod_{i=1}^{j-1}g_{i}(x)\right)^\top, \prod_{i=j+1}^{\nbcomp}g_{i}(x)\right].
	\end{align*}
	Therefore
	\[
	\smooth_\nonlin^{B_R} \leq  \sum_{j=1}^{\nbcomp}\smooth_{\nonlin_j}
	\left(\prod_{i=1}^{j-1} \lip_{\nonlin_i}(\bound_{i-1}) \right)^2
	\left(\prod_{i=j+1}^{\nbcomp}\lip_{\nonlin_i}(\bound_{i-1})\right). 
	\]
\end{proof}

Lemma~\ref{lem:nonlin_comp_bounds} can be used to estimate the smoothness of a chain of computations with respect to its input for fixed parameters.  
\begin{corollary}\label{cor:smooth_state}
	Consider a chain $\chain$ of $\horizon$ computations $\dyn_t \in \classfunc_{\bound_{\dyn_t}, \lip_{\dyn_t}, \smooth_ {\dyn_t}}$ with given parameters $\ctrls = (\ctrl_1;\ldots;\ctrl_\horizon)$. Denote $\dyn_{t ,\ctrl_t} = \dyn_t(\cdot, \ctrl_t)$. 
	Denote $B_R = \{x\in \reals^d: \|x\|_{\normidx}\leq R\}$, and for $j \in \{1,\ldots, \nbcomp\}$, 
	\begin{align*}
		\bound_j & =\bound_{\dyn_t(\cdot, \ctrl_t)}(\bound_{t-1}), \\
		\lip_j & = \lip_{j-1} \lip_{\dyn_j(\cdot, \ctrl_j)}(\bound_{j-1}),\\
		\smooth_j & = \smooth_{\dyn_j(\cdot, \ctrl_j)}\lip_{j-1}^ 2 +  \smooth_{j-1}\lip_{\dyn_j(\cdot, \ctrl_j)}(\bound_{j-1}),
	\end{align*}
	with $\bound_0 = R, \lip_0= 1, \smooth_0 = 0$.
	We have
	\begin{gather*}
		\bound_{\chain_{\horizon, \ctrls}}^{B_R} \leq \bound_\horizon,   \quad
		\lip_{\chain_{\horizon, \ctrls}}^{B_R} \leq \lip_\horizon = \prod_{j=1}^{\nbcomp}\lip_{\dyn_j(\cdot, \ctrl_j)}(\bound_{j-1}), \\ 
		\smooth_{\chain_{\horizon, \ctrls}}^{B_R} \leq \smooth_\horizon=  \sum_{j=1}^{\nbcomp}\smooth_{\dyn_j(\cdot, \ctrl_j)}
		\left(\prod_{i=1}^{j-1}\lip_{\dyn_i(\cdot, \ctrl_i)}(\bound_{i-1}) \right)^2
		\left(\prod_{i=j+1}^{\nbcomp}\lip_{\dyn_i(\cdot, \ctrl_i)}(\bound_{i-1})\right). 
	\end{gather*}
\end{corollary}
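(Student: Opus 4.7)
The plan is to reduce Corollary~\ref{cor:smooth_state} to a direct instance of Lemma~\ref{lem:nonlin_comp_bounds}. With the parameters $\ctrls = (\ctrl_1; \ldots; \ctrl_\horizon)$ held fixed, the chain $\chain_{\horizon, \ctrls}$ is by Definition~\ref{def:chain} just the ordinary composition
\[
\chain_{\horizon, \ctrls}(\state_0) \;=\; \dyn_{\horizon, \ctrl_\horizon} \circ \dyn_{\horizon-1, \ctrl_{\horizon-1}} \circ \cdots \circ \dyn_{1, \ctrl_1}(\state_0),
\]
where $\dyn_{t, \ctrl_t} = \dyn_t(\cdot, \ctrl_t)$. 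So the whole statement becomes a statement about a composition of single-variable functions with known smoothness constants $\bound_{\dyn_t(\cdot, \ctrl_t)}, \lip_{\dyn_t(\cdot, \ctrl_t)}, \smooth_{\dyn_t(\cdot, \ctrl_t)}$.

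First I would set $\nonlin_j := \dyn_{j, \ctrl_j}$ and observe that the recursive definitions of $\bound_j, \lip_j, \smooth_j$ in the corollary match verbatim those of Lemma~\ref{lem:nonlin_comp_bounds} once we substitute $\bound_{\nonlin_j} \leftarrow \bound_{\dyn_j(\cdot, \ctrl_j)}$, $\lip_{\nonlin_j} \leftarrow \lip_{\dyn_j(\cdot, \ctrl_j)}$, and $\smooth_{\nonlin_j} \leftarrow \smooth_{\dyn_j(\cdot, \ctrl_j)}$, with the same initialization $\bound_0 = R, \lip_0 = 1, \smooth_0 = 0$. Invoking Lemma~\ref{lem:nonlin_comp_bounds} then yields
\[
\bound_{\chain_{\horizon, \ctrls}}^{B_R} \leq \bound_\horizon, \qquad \lip_{\chain_{\horizon, \ctrls}}^{B_R} \leq \lip_\horizon, \qquad \smooth_{\chain_{\horizon, \ctrls}}^{B_R} \leq \smooth_\horizon,
\]
and the closed-form product/sum expressions for $\lip_\horizon$ and $\smooth_\horizon$ follow by unrolling exactly as in Lemma~\ref{lem:nonlin_comp_bounds}.

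There is essentially no obstacle: the corollary is a relabeling. The only minor point worth checking is that the intermediate image $\dyn_{t-1, \ctrl_{t-1}} \circ \cdots \circ \dyn_{1, \ctrl_1}(B_R)$ lies in a Euclidean ball of radius $\bound_{t-1}$, so that the refined Lipschitz and bound estimates $\lip_{\dyn_t(\cdot, \ctrl_t)}(\bound_{t-1})$ and $\bound_{\dyn_t(\cdot, \ctrl_t)}(\bound_{t-1})$ from Fact~\ref{lem:refined_smooth} are indeed applicable; this is again handled inside the proof of Lemma~\ref{lem:nonlin_comp_bounds} and requires no extra work here. Thus the proof reduces to a single sentence: apply Lemma~\ref{lem:nonlin_comp_bounds} to the composition $\dyn_{\horizon, \ctrl_\horizon} \circ \cdots \circ \dyn_{1, \ctrl_1}$.
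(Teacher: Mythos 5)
Your proof is correct and is exactly the paper's argument: the paper presents this corollary as an immediate instance of Lemma~\ref{lem:nonlin_comp_bounds} applied to the fixed-parameter composition $\dyn_{\horizon,\ctrl_\horizon}\circ\cdots\circ\dyn_{1,\ctrl_1}$, with the recursions matching under the relabeling $\nonlin_j \leftarrow \dyn_j(\cdot,\ctrl_j)$. Your added remark that the intermediate images stay in balls of radius $\bound_{j-1}$ (so the refined estimates of Fact~\ref{lem:refined_smooth} apply) is the only point needing verification, and it is indeed already contained in the lemma's proof.
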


\subsection{Chains of computations}

We have  the following result for smooth and Lipschitz continuous chains of computations.
\smoothgen*
\begin{proof}
	The first claim follows directly from Lemma~\ref{lem:derivatives_chain}. For the second claim we have that~\eqref{eq:hess_decomp} gives 
	\[
	\smooth_{\chain_t} \leq \smooth_{\chain_{t-1} } \lip_{\dyn_t} + \smooth_{\dyn_t} \lip_{\chain_{t-1}}^ 2 + 2 \smooth_{\dyn_t} \lip_{\chain_{t-1}} + \smooth_{\dyn_t},
	\]
	which simplifies to give the result.
\end{proof}
For a bivariate function $\dyn(\state, \ctrl): \reals^\dimstate \times \reals^\dimctrl \rightarrow \reals^\diminter$, we define 
\begin{align*}
	\lip_\dyn^\ctrl = \sup_{\ctrl \in \reals^ \dimctrl, \state\in \reals^ \dimstate} \lip_{\dyn(\state, \ctrl +\cdot)} ,
	\qquad
	\lip_\dyn^\state  = \sup_{\ctrl \in \reals^ \dimctrl, \state\in \reals^ \dimstate} \lip_{\dyn(\state+\cdot, \ctrl)}. 
\end{align*}
Moreover if the function is continuously differentiable, we define
\begin{align*}
	\smooth_\dyn^{\ctrl\ctrl} &  = \sup_{\ctrl \in \reals^ \dimctrl, \state\in \reals^ \dimstate} \lip_{\nabla_\ctrl \dyn(\state, \ctrl +\cdot)}, \qquad
	\smooth_\dyn^{\state\ctrl}   = \sup_{\ctrl \in \reals^ \dimctrl, \state\in \reals^ \dimstate} \lip_{\nabla_\ctrl \dyn( \state+\cdot, \ctrl)}, \\
	\smooth_\dyn^{\ctrl\state} &  = \sup_{\ctrl \in \reals^ \dimctrl, \state\in \reals^ \dimstate} \lip_{\nabla_\state \dyn(\state, \ctrl +\cdot)}, \qquad
	\smooth_\dyn^{\state\state}  = \sup_{\ctrl \in \reals^ \dimctrl, \state\in \reals^ \dimstate} \lip_{\nabla_\state \dyn(\state+\cdot, \ctrl)}.
\end{align*}

For a bivariate continuosuly differentiable  function $\dyn(\state, \ctrl): \reals^\dimctrl \times \reals^\dimstate \rightarrow \reals^\diminter$, we have that
\begin{align*}
	\lip_\dyn^\ctrl = \sup_{\ctrl \in \reals^ \dimctrl, \state\in \reals^ \dimstate} \|\nabla_\ctrl \dyn(\state, \ctrl)\|_{2, 2},
	\qquad 
	\lip_\dyn^\state = \sup_{\ctrl \in \reals^ \dimctrl, \state\in \reals^ \dimstate} \|\nabla_\state \dyn(\state, \ctrl)\|_{2, 2}.
\end{align*}
If the function $\dyn$ is twice continuously differentiable, we have that
\begin{align*}
	\smooth_\dyn^{\ctrl\ctrl} & = \sup_{\ctrl \in \reals^ \dimctrl, \state\in \reals^ \dimstate} \|\nabla^2_{\ctrl\ctrl} \dyn(\state, \ctrl)\|_{2, 2, 2}, \qquad
	\smooth_\dyn^{\state\ctrl} = \sup_{\ctrl \in \reals^ \dimctrl, \state\in \reals^ \dimstate} \|\nabla^2_{\state\ctrl} \dyn(\state, \ctrl)\|_{2, 2, 2},\\
	\smooth_\dyn^{\ctrl\state} &  = \sup_{\ctrl \in \reals^ \dimctrl, \state\in \reals^ \dimstate} \|\nabla^2_{\ctrl\state} \dyn(\state, \ctrl)\|_{2, 2, 2},\qquad
	\smooth_\dyn^{\state\state} = \sup_{\ctrl \in \reals^ \dimctrl, \state\in \reals^ \dimstate} \|\nabla^2_{\ctrl\ctrl} \dyn(\state, \ctrl)\|_{2, 2, 2}.
\end{align*}
Finally for $ R_\state \geq 0, R_\ctrl \geq0$, we have 
\begin{align} \label{eq:smooth_refined_dyn}
	\sup_{(\state, \ctrl )\in B_{R_\state} \times B_{R_\ctrl}} \|\nabla_\ctrl \dyn(\state, \ctrl)\|_{2, 2} & \leq \lip_{\dyn}^\ctrl(R_\state, R_\ctrl) := \min\{\lip_{\dyn}^ \ctrl, 
	\|\nabla_\ctrl \dyn (0, 0)\|_{\opnormidx} 	 
	+ \smooth_{\dyn}^ {\ctrl\ctrl}R_\ctrl 
	+ \smooth_{\dyn}^ {\state\ctrl} R_\state\} \\
	\sup_{(\state, \ctrl )\in B_{R_\state} \times B_{R_\ctrl}} \|\nabla_\state \dyn(\state, \ctrl)\|_{2, 2} &\leq  \lip_{\dyn}^\state(R_\state, R_\ctrl)  := \min\{\lip_{\dyn}^ \state, 
	\|\nabla_\state \dyn (0, 0)\|_{\opnormidx} 	 
	+ \smooth_{\dyn}^ {\state\state}R_\ctrl 
	+ \smooth_{\dyn}^ {\ctrl\state} R_\state\}. \nonumber
\end{align}

We then have the following.
\begin{restatable}{lemma}{smoothgenb}\label{lem:smooth_gen}
	Let $\chain$ be a chain of $\horizon$ computations  $\dyn_t \in \mathcal{C}_{\bound_{\dyn_t}, \lip_{\dyn_t}^\ctrl, \lip_{\dyn_t}^\state,  \smooth_{\dyn_t}^{\ctrl\ctrl}, \smooth_{\dyn_t}^{\state\ctrl},\smooth_{\dyn_t}^{\state\state}}$, initialized at some $\state_0$ such that $\|\state_0\|_2 \leq R_0$. Let $\set = \bigotimes_{t=1}^ \horizon B_{\setparam_t}(\reals^{\dimctrl_t})  = \{\ctrls =(\ctrl_1;\ldots;\ctrl_\horizon)\in \reals^ {\sum_{t=1}^\horizon \dimctrl_t}:  \ctrl_t \in \reals^{\dimctrl_t}, \|\ctrl_t\|_2 \leq \setparam_t\} $. Define for $t \in \{1, \ldots, \horizon\}$, 
	\begin{align*}
		\bound_t & = \min\{\bound_{\dyn_t},  \|\dyn_t(0, 0)\|_2 + \lip_{\dyn_t}^\ctrl(\bound_{t-1}, R_t) R_t + \lip_{\dyn_t}^\state(\bound_{t-1}, R_t) \bound_{t-1}\}, \\
		\lip_t & =  \lip_{\dyn_t}^\ctrl(\bound_{t-1}, R_t) +  \lip_{t-1} \lip_{\dyn_t}^\state(\bound_{t-1}, R_t), \\
		\smooth_t & = \smooth_{t-1} \lip_{\dyn_t}^\state(\bound_{t-1}, R_t)  + \smooth_{\dyn_t}^{\state\state}\lip_{t-1}^2+  (\smooth_{\dyn_t}^{\state\ctrl} + \smooth_{\dyn_t}^{\ctrl\state} )\lip_{t-1}+ \smooth_{\dyn_t}^{\ctrl\ctrl}.
	\end{align*}
	with $\bound_0 = R_0$, $\lip_0 = 0$, $\smooth_0 = 0$.
	We have that 
	\[
	\bound_{\chain_\horizon, \state_0}^\set \leq \bound_\horizon, \quad \lip_{\chain_\horizon, \state_0}^\set \leq \lip_\horizon, \quad  \smooth_{\chain_\horizon, \state_0}^\set \leq \smooth_\horizon.
	\]
\end{restatable}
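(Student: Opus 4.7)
The plan is to prove all three bounds simultaneously by induction on $t \in \{0, 1, \ldots, \horizon\}$, tracking $\|\chain_{t, \state_0}(\ctrls)\|_2 \leq \bound_t$, $\|\nabla \chain_{t, \state_0}(\ctrls)\|_{\opnormidx} \leq \lip_t$, and $\|\nabla^2 \chain_{t, \state_0}(\ctrls)\|_{\tensnormidx} \leq \smooth_t$, uniformly over $\ctrls \in \set$. The base case is immediate because $\chain_{0, \state_0}(\ctrls) = \state_0$ is constant in $\ctrls$, giving $\bound_0 = R_0$, $\lip_0 = 0$, $\smooth_0 = 0$.

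For the bound on the output at step $t$, I would write
\[
\chain_{t, \state_0}(\ctrls) = \dyn_t(0, 0) + [\dyn_t(\state_{t-1}, \ctrl_t) - \dyn_t(0, \ctrl_t)] + [\dyn_t(0, \ctrl_t) - \dyn_t(0, 0)]
\]
with $\state_{t-1} = \chain_{t-1, \state_0}(\ctrls)$, and bound the last two brackets by $\lip_{\dyn_t}^\state(\bound_{t-1}, R_t)\bound_{t-1}$ and $\lip_{\dyn_t}^\ctrl(\bound_{t-1}, R_t) R_t$ using the refined bounds of Eq.~\eqref{eq:smooth_refined_dyn}. Combined with the direct bound $\bound_{\dyn_t}$ and the minimum in the definition of $\bound_t$, this yields $\|\chain_{t, \state_0}(\ctrls)\|_2 \leq \bound_t$.

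For the gradient, I would invoke the decomposition~\eqref{eq:grad_decomp} from Lemma~\ref{lem:derivatives_chain}, namely
$\nabla \chain_{t, \state_0}(\ctrls) = \nabla \chain_{t-1, \state_0}(\ctrls)\nabla_{\state_{t-1}}\dyn_t(\state_{t-1}, \ctrl_t) + E_t \nabla_{\ctrl_t}\dyn_t(\state_{t-1}, \ctrl_t)$. Since $E_t$ is a coordinate selector satisfying $\|E_t\|_{\opnormidx} \leq 1$, submultiplicativity of the operator norm together with the inductive bound on $\lip_{t-1}$ and the refined Lipschitz bounds of Eq.~\eqref{eq:smooth_refined_dyn} give the stated recurrence for $\lip_t$. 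For the Hessian, I would apply Eq.~\eqref{eq:hess_decomp} and bound each of the five tensor terms via Fact~\ref{fact:tensor_norm}: the first contributes $\smooth_{t-1}\lip_{\dyn_t}^\state(\bound_{t-1}, R_t)$, the pure $\state\state$ term contributes $\smooth_{\dyn_t}^{\state\state}\lip_{t-1}^2$, the two mixed terms contribute $(\smooth_{\dyn_t}^{\state\ctrl} + \smooth_{\dyn_t}^{\ctrl\state})\lip_{t-1}$ (using $\|E_t\|_{\opnormidx} \leq 1$), and the pure $\ctrl\ctrl$ term contributes $\smooth_{\dyn_t}^{\ctrl\ctrl}$, summing to the recurrence for $\smooth_t$.

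The calculation is essentially a bivariate extension of Lemma~\ref{lem:smooth}, so no new ideas are required. The main obstacle is bookkeeping: making sure each contraction in the Hessian decomposition is paired with the correct one of the four second-order constants $\smooth_{\dyn_t}^{\state\state}, \smooth_{\dyn_t}^{\state\ctrl}, \smooth_{\dyn_t}^{\ctrl\state}, \smooth_{\dyn_t}^{\ctrl\ctrl}$, and noting that only the propagated Hessian term $\nabla^2\chain_{t-1}[\cdot, \cdot, \nabla_\state \dyn_t]$ benefits from the refined Lipschitz constant on $B_{\bound_{t-1}} \times B_{R_t}$, whereas the remaining four terms are contracted against $E_t$ or $\nabla \chain_{t-1}$ and therefore pick up only the global second-order constants of $\dyn_t$.
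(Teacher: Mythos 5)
Your proposal is correct and follows essentially the same route as the paper, whose proof is simply a one-line appeal to the derivative recursions of Lemma~\ref{lem:derivatives_chain} (Eqs.~\eqref{eq:grad_decomp} and~\eqref{eq:hess_decomp}) combined with the refined Lipschitz constants of Eq.~\eqref{eq:smooth_refined_dyn}; you have merely written out the induction and the term-by-term tensor-norm bounds explicitly, and each of your five Hessian contributions is paired with the correct constant.
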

\begin{proof}
	The result directly follows from  Lemma~\ref{lem:derivatives_chain}, with the Lipschitz-continuity constants derived in~\eqref{eq:smooth_refined_dyn}. 
\end{proof}

\begin{proof}
	The proof relies on Lemma~\ref{lem:smooth_gen}, where the smoothness of the  inner compositions are computed according to Lemma~\ref{lem:nonlin_comp_bounds}. Namely, we have
	\begin{align*}
		\lip_{\dyn_t}^\ctrl(R_\state, R_\ctrl)   \leq  \lip_{\nonlin_t}(\bound_{\biaffine_t}(R_\state, R_\ctrl))\lip_{\biaffine_t}^\ctrl(\R_\state, R_\ctrl), \qquad
		\lip_{\dyn_t}^\state(R_\state, R_\ctrl)  \leq  \lip_{\nonlin_t}(\bound_{\biaffine_t}(R_\state, R_\ctrl))\lip_{\biaffine_t}^\state(\R_\state, R_\ctrl),
	\end{align*}
	with 
	\begin{gather*}
		\lip_{\biaffine_t}^\ctrl(\R_\state, R_\ctrl)   = \smooth_{\biaffine_t} R_\state + \lipp^\ctrl_{\biaffine_t}, \qquad
		\lip_{\biaffine_t}^\state(\R_\state, R_\ctrl)  = \smooth_{\biaffine_t} R_\ctrl + \lipp^\state_{\biaffine_t}, \\
		\bound_{\biaffine_t}(R_\state, R_\ctrl)  = \lip_{\biaffine_t}^\ctrl(\R_\state, R_\ctrl) R_\ctrl + \lip_{\biaffine_t}^\state(\R_\state, R_\ctrl) R_\state+ \|\biaffine_t(0,0)\|_2,
	\end{gather*}
	and $\lip_{\nonlin_t}$ can be computed as in Lemma.~\ref{lem:nonlin_comp_bounds}. 
	On the other hand, denoting $\smooth^{\state\state}_{\dyn_t}(R_\state, R_\ctrl) = \sup_{(\state,\ctrl) \in B_{R_\state} \times B_{R_\ctrl}} \|\nabla^2_{\state\state} \dyn_t(\state, \ctrl)\|_{2, 2, 2}$ (and similarly for $\smooth^{\ctrl\ctrl}_{\dyn_t}, \smooth^{\ctrl\state}_{\dyn_t}, \smooth^{\state\ctrl}_{\dyn_t}$), we have
	\begin{align*}
		\smooth^{\state\state}_{\dyn_t}(R_\state, R_\ctrl) &\leq  \smooth_{\nonlin_t}(	\bound_{\biaffine_t}(R_\state, R_\ctrl))\lip_{\biaffine_t}^\state(\R_\state, R_\ctrl)^ 2 \\
		\smooth^{\ctrl\ctrl}_{\dyn_t}(R_\state, R_\ctrl) &\leq  \smooth_{\nonlin_t}( 	\bound_{\biaffine_t}(R_\state, R_\ctrl))\lip_{\biaffine_t}^\ctrl(\R_\state, R_\ctrl)^ 2 \\
		\smooth^{\state\ctrl}(R_\state, R_\ctrl) = \smooth^{\ctrl\state}(R_\state, R_\ctrl) & = \smooth_{\biaffine_t} \lip_{\nonlin_t}(\bound_{\biaffine_t}(R_\state, R_\ctrl))  + \smooth_{\nonlin_t}(	\bound_{\biaffine_t}(R_\state, R_\ctrl) )	\lip_{\biaffine_t}^\ctrl(\R_\state, R_\ctrl)\lip_{\biaffine_t}^\state(\R_\state, R_\ctrl),
	\end{align*}
	where $\smooth_{\nonlin_t}(	\bound_{\biaffine_t}(R_\state, R_\ctrl) )$ is computed by Lemma~\ref{lem:nonlin_comp_bounds}.
\end{proof}

\section{Smoothness of objectives and layers}\label{sec:smooth_list}
\subsection{Supervised objectives}\label{ssec:supervised_obj}
For supervised objectives $\obj:\reals^{\nbsamp\dimstate_{\horizon}} \rightarrow \reals$ that reads for $ \labpred =(\labpred_1; \ldots;\labpred_\nbsamp)$ with $\labpred_i \in \reals^{\dimstate_\horizon}$, 
\[
\obj(\labpred) = \frac{1}{n}\sum_{i=1}^{\nbsamp}\obj_i(\labpred_i),
\]
we only need to compute the smoothness of $\obj_i(\labpred_i)$ (see Lemma~\ref{lem:smooth_elt_wise}) which is usually defined by a loss $\obj_i(\labpred_i) = \loss(\labpred_i, \lab_i)$.
We are interested in this section in the smoothness $\smooth_\obj(\set)$ and Lipschitz-continuity $\lip_\obj(\set)$ of the objective $\obj$ on a set $\set$. We omit the dependency on the set $\set$ if Lipschitz-continuity or smoothness properties of the functions are defined on its whole domain. 

\paragraph{Square loss}
Assume that the labels belong to a compact set $\mathcal{Y}$. The square loss is defined by $\obj(\labpred) = \loss_{\operatorname{sq}}(\labpred, \lab) = (\labpred-\lab)^2/2$. We have then
\begin{align*}
	\ell_{\operatorname{sq}}(\set)  = \radius_\set + \radius_{\mathcal{Y}}, \qquad 
	\smooth_{\operatorname{sq}}  = 1.	
\end{align*}
where $\radius_\set = \max_{x\in C} \|x\|_2$ and $\radius_{\mathcal{Y}} = \max_{\lab\in \mathcal{Y}} \|\lab\|_2$.

\paragraph{Logistic loss}
Consider $\lab \in \{0,1\}^{\dimlabel}$, the logistic loss is defined as
$
\obj(\labpred) = \loss_{\log}(\labpred, \lab)  = -\lab^\top \labpred + \log\left(\sum_{j=1}^{\dimlabel} \exp(\labpred_j)\right).
$
We have then, denoting $\exp(y)= (\exp(y_i))_{i=1,\ldots \dimlabel}$,
\begin{align*}
	\nabla \obj(\labpred) = -\lab +\frac{\exp(\labpred)}{\exp(\labpred)^\top \ones_\dimlabel}, \qquad 
	\nabla^2 \obj(\labpred)= \frac{\diag(\exp(\labpred))}{\exp(\labpred)^\top \ones_\dimlabel} - \frac{\exp(\labpred)\exp(\labpred)^\top}{(\exp(\labpred)^\top \ones_\dimlabel)^2}.
\end{align*}
Therefore using that $\lab \in \{0,1\}^\dimlabel$ and that $\|\exp(\labpred)\|_2 \leq \|\exp(\labpred)\|_1$,
\begin{align*}
	\lip_{\log} \leq 2, \qquad
	\smooth_{\log} \leq 2.
\end{align*}

\subsection{Unsupervised objectives}\label{ssec:unsupervised_obj}
For the k-means and spectral clustering objectives, we consider the outputs of the chains of the computations to form a matrix $F(\fixedstate, \ctrls) = (\chain(\fixedstate^{(1)}, \ctrls), \ldots, \chain(\fixedstate^{(n)}, \ctrls)) \in \reals^{\dimlabel \times \nbsamp}$ where $\dimlabel = \dimstate_\horizon$ and $\nbsamp$ to be the number of samples. The objectives are then $\obj : \reals^{\dimlabel \times \nbsamp} \rightarrow \reals$ and we denote by $Z \in \reals^{\dimlabel \times n}$ their variables.  The overall objective is $\obj(F(\fixedstate, \ctrls))$ for $\fixedstate = (\fixedstate^{(1)}; \ldots;\fixedstate^{(n)})$. 
We denote $\numclass$ the number of classes that the unsupervised objective aims to cluster and
\begin{align*}
\mathcal{Y} & = \{\Lab =(\lab_1,\ldots, \lab_n)^\top\in \{0,1\}^{\nbsamp\times\numclass} \quad \mbox{s.t.} \quad \Lab \ones_\numclass =\ones_\nbsamp\}.
\end{align*}
\paragraph{K-means clustering}
The K-means clustering objective reads
\begin{align*}
\obj(Z) & = \min_{\substack{\Lab \in \mathcal{Y} \\ C\in \reals^{\dimlabel\times \numclass} }} \sum_{i=1}^{\nbsamp}\|C\lab_i -z_i\|_2^2.
\end{align*}
for $Z = (z_1, \ldots, z_n) \in \reals^{\dimlabel\times \nbsamp}$.
Minimization in $C$ can be performed analytically such that the problem can be rewritten
\begin{align*}
	\obj(Z) & =\min_{N\in \mathcal{N}} \Tr((\id_\nbsamp -N)Z^\top Z),
\end{align*}
where $\mathcal{N} = \{N = \Lab(\Lab^\top \Lab)^{-1}\Lab^\top \in \reals^{\nbsamp \times \nbsamp} \quad \mbox{for} \quad \Lab \in \mathcal{Y}, \quad \Lab^\top \Lab \succ 0 \}$ is the set of normalized equivalence matrices.

\paragraph{Spectral clustering}
A natural relaxation of K-means is spectral clustering, that considers 
\[
\mathcal{P} = \{P \in \reals^{\nbsamp\times \nbsamp} \quad \mbox{s.t.}  \: P\succeq 0,\: P^2 =P,\: \Rank(P)=\numclass\} \supset \mathcal{N}
\]
instead of the set of normalized equivalence matrices. The solution of 
\begin{align*}
\obj(Z) & =\min_{P\in \mathcal{P}} \Tr((\id_\nbsamp -P)Z^\top Z)
\end{align*}
is then given by finding the $\numclass$ largest eigenvectors of the Gram matrix $Z^\top Z$. Formally the objective is written 
\begin{flalign*}
	\obj(Z) & = \sum_{i=\nbsamp-\numclass+1}^{\nbsamp}\sigma_i^2(Z),
\end{flalign*}
where for a matrix $A$, $\sigma_1(A) \geq \ldots, \geq \sigma_\nbsamp(A)$ are the singular values of $A$ in decreasing order.
The objective $\obj$ is then a spectral function of the matrix $Z$.

\paragraph{Convex clustering}
The convex clustering objective reads for  $ \labpred = (\labpred_1;\ldots;\labpred_\nbsamp) \in \reals^{\dimlabel\nbsamp}$
\begin{align}\label{eq:cvx_clustering}
	\obj(\labpred) & =  \min_{\lab^{(1)},\ldots, \lab^{(\nbsamp)} \in \reals^\dimlabel}  \sum_{i=1}^\nbsamp\frac{1}{2}\|\lab^{(i)}-\labpred^{(i)} \|_2^2  + \sum_{i<j}\|\lab^{(i)}-\lab^{(j)}\|_2,\\
	&  = \min_{ y \in \reals^{\dimlabel\nbsamp}} \frac{1}{2}\| y - \labpred\|_2^2 + \|D y\|_G \nonumber
\end{align}
where  $ \lab = (\lab_1;\ldots;\lab_\nbsamp) \in \reals^{\dimlabel\nbsamp}$ and $D \in \reals^{\dimlabel\nbsamp(\nbsamp-1)/2 \times \dimlabel\nbsamp}$ maps $ y$ to the concatenation of all possible $y_i-y_j$ for $i<j$ and $\|\cdot\|_G$ is a group norm, i.e., $\|x\|_G = \sum_{g\in \mathcal{G}} \|x_g\|_2$ where $\mathcal{G}$ is a partition of $\{1, \ldots, N\}$ for $x\in \reals^N$ and $x_g \in \reals^{s_g}$ is the vector corresponding to the group $g$ of size $s_g$. Here the groups are defined by all possible differences for $i<j$ in Eq.~\eqref{eq:cvx_clustering}.

\begin{proposition}
	The convex-clustering objective 
	\[
	\obj(\labpred) = \min_{\lab \in \reals^{\dimlabel\nbsamp}}\frac{1}{2}\| y -  \labpred\|_2^2 + \|D y\|_G
	\]
	is convex, Lipschitz-continuous and smooth with parameters
	\begin{align*}
		\lip_{\operatorname{cvx-cluster}}  \leq \frac{n(n-1)}{2}, \qquad \smooth_{\operatorname{cvx-cluster}} \leq 1.
	\end{align*}
\end{proposition}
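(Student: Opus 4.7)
The approach is to recognize the convex-clustering objective as the Moreau envelope of $g(y) := \|Dy\|_G$ and then invoke classical Moreau-envelope calculus. Concretely, $\obj(\labpred) = \min_{y \in \reals^{\dimlabel n}} \{\tfrac{1}{2}\|y-\labpred\|_2^2 + g(y)\}$, where $g$ is convex as the composition of the convex group norm $\|\cdot\|_G$ with the linear map $D$. This identification does all of the heavy lifting: convexity, smoothness with constant $1$, and an explicit formula for the gradient all follow from standard Moreau-envelope theory.

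Convexity of $\obj$ follows from partial minimization: the joint objective $(y,\labpred) \mapsto \tfrac{1}{2}\|y-\labpred\|_2^2 + g(y)$ is jointly convex, and its infimum over $y$ — finite for every $\labpred$ thanks to strong convexity in $y$ — is convex in $\labpred$. For differentiability and smoothness, strong convexity in $y$ guarantees a unique minimizer $y^*(\labpred) = \operatorname{prox}_g(\labpred)$, and the envelope theorem (or Danskin) gives $\nabla \obj(\labpred) = \labpred - y^*(\labpred)$. Since the proximal operator of a convex function is firmly nonexpansive, the map $\labpred \mapsto \labpred - \operatorname{prox}_g(\labpred)$ is $1$-Lipschitz in $\|\cdot\|_2$, yielding $\smooth_\obj \leq 1$.

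For the Lipschitz bound on $\obj$, I would combine the optimality condition $\nabla \obj(\labpred) = \labpred - y^*(\labpred) \in \partial g(y^*(\labpred))$ with subdifferential calculus. Writing $g(y) = \sum_{i<j} g_{ij}(y)$ with $g_{ij}(y) := \|y_i - y_j\|_2$, sum-rule gives $\partial g(y) \subseteq \sum_{i<j} \partial g_{ij}(y)$, and any subgradient of $g_{ij}$ is the vector with entry $(y_i-y_j)/\|y_i-y_j\|$ at block $i$, its negative at block $j$, and zeros elsewhere, hence of bounded Euclidean length. Aggregating the $n(n-1)/2$ such contributions and applying the triangle inequality produces the stated Euclidean bound $\lip_\obj \leq n(n-1)/2$.

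The main obstacle is essentially bookkeeping rather than anything deep: the Moreau-envelope viewpoint makes convexity and $1$-smoothness transparent, but the Lipschitz bound requires keeping track of how the $n(n-1)/2$ pair-subgradients combine in the Euclidean norm so that the triangle-inequality accumulation lands at the stated constant. The qualitative content — that $\obj$ is a well-behaved smooth approximation to a non-smooth clustering penalty — is exactly what Moreau regularization is designed to produce.
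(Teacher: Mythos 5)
Your proposal is correct and follows essentially the same route as the paper: both identify $\obj$ as the Moreau envelope of $\Omega(y)=\|Dy\|_G$, from which convexity and $1$-smoothness are immediate, and both bound $\|\nabla\obj\|_2$ by the Lipschitz constant of $\Omega$. The only cosmetic difference is that you read this bound off the subdifferential inclusion $\nabla\obj(\labpred)\in\partial\Omega(\prox_\Omega(\labpred))$, whereas the paper equivalently observes that $\nabla\obj(\labpred)$ lies in $\dom(\Omega^*)$, which is contained in a Euclidean ball of the stated radius.
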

\begin{proof}
The convex clustering objective $\obj$ is the Moreau envelope of the function $\Omega: y \rightarrow \|D y\|_G$. It is therefore convex and $1$-smooth, i.e., 
$
\smooth_{\obj} =1.
$
Moreover, the Moreau envelope can be rewritten
\[
\obj(\labpred) = \sup_{z\in \dom(\Omega^*)} \labpred^\top z - \Omega^*(z) -\frac{1}{2}\|z\|_2^2, 
\]
where $\Omega^*$ is the convex conjugate of $\Omega$. Therefore $\nabla \obj(\labpred) \in \dom(\Omega^*)$. We have that 
\[
\Omega^*(z) = \sup_{y \in \reals^\dimlabel} z^\top y - \|Dy\|_G \geq \sup_{y \in \reals^\dimlabel} z^\top y - \frac{\nbsamp(\nbsamp-1)}{2}\|y\|_2,
\]
such that the supremum is finite only if $\|z\|_2 > \frac{\nbsamp(\nbsamp-1)}{2}$. Therefore
\[
\nabla \obj(\labpred) \in \dom(\Omega^*) \subset \mathcal{B}_{2}\left(0,\frac{\nbsamp(\nbsamp-1)}{2}\right),
\]
where $\mathcal{B}_2(0,\frac{\nbsamp(\nbsamp-1)}{2})$ is the Euclidean ball centered at $0$ with radius $\frac{\nbsamp(\nbsamp-1)}{2}$.
\end{proof}

\subsection{Bilinear and linear layers}
\paragraph{Vectorized matrix-products as a bilinear operation}
Given two matrices $A \in \reals^{n\times d}$ and $B\in \reals^{d \times p}$, the matrix product $AB$ is defined by a tensor $\mathcal{M} = ((\idm_d \otimes e_{(q \operatorname{mod}n)+1}) (f_{\lceil q/n\rceil}^\top \otimes \idm_d))_{q ={1,\ldots, np}} \in \reals^{nd \times dp \times np}$ where $e_i$ is the i\textsuperscript{th} canonical vector in $\reals^n$ and $f_j$ is the j\textsuperscript{th} canonical vector in $\reals^p$  such that
\begin{equation}\label{eq:tensor_mat_prod}
\Vect(AB) = \mathcal{M}[\Vect(A), \Vect(B), \cdot].
\end{equation}
This can be checked as for $q = i + n(j-1) \in \{1, \ldots, np\}$, with $i \in \{1, \ldots n\}, j\in \{1,\ldots,p\}$,
\begin{align*}
\Vect(AB)_q = (AB)_{ij} & =\Vect(e_i^\top A)^\top \Vect(B f_j) \\
& = \Vect(A)^\top(\idm_d \otimes e_i) (f_j^\top \otimes \idm_d) \Vect(B) \\
& = (\mathcal{M}[\Vect(A), \Vect(B), \cdot])_q.
\end{align*}

\paragraph{Convolutional layer detailed}
For completeness we detail the convolution for an image.
For a convolutional layer, the input is an image $\mathcal{I} \in \reals^{C\times H \times B}$ with $C$ channels each composed of a matrix of height $H$ and breadth $B$ the weights are given by $\tilde C$ filters $\mathcal{F}_1, \ldots, \mathcal{F}_{\tilde C} \in \reals^{C \times K \times K}$ of patch size $K$ and the biases are given by $b\in\reals^{\tilde C}$. The convolution of the image by a filter $\mathcal{F}_{\tilde c}$, with $\tilde c \in \{1, \ldots, \tilde C\}$ with additional bias $b_{\tilde c}$, is given at point $i,j$ as 
\[
\mathcal{C}_{\tilde c,i,j} = \sum_{c=1}^C \langle \mathcal{F}_{\tilde c}[c, \cdot, \cdot], E_{row,i}^\top \mathcal{I}[c, \cdot, \cdot] E_{col,j} \rangle + b_{\tilde c},
\]
where $\mathcal{F}_{\tilde c}[c, \cdot, \cdot]$ is the filter of size $K \times K$ in channel $c$ of filter $\mathcal{F}_{\tilde c}$ and 
$I[c, \cdot, \cdot]$ is the image in channel $c$. 

The matrices $E_{row,i} \in \reals^{H \times K}$ and $E_{col,j} \in \reals^{B \times K}$ extract rows and columns of $I[c, \cdot, \cdot]$. They are bands with a diagonal of $K$ ones centered at positions $i$ or $j$.
If the pattern of the patch is given as $P = \ones_K\ones_K^\top$, the extraction matrices read 
$E_{row, i} = e_i \otimes \ones_K^\top \in \reals^{H\times K}$, $e_i \in\reals^H$ for $i \in \{1, \ldots H\}$, 
similarly $E_{col, j} = e_j \otimes \ones_K^\top \in \reals^{W \times K}$. 
They satisfy $E_{row,i}^\top E_{row,i} = \id_{K^2}$ and $E_{row,i} E_{row,i}^\top \in \reals^{H\times H}$ is a projector. Similarly facts apply for $E_{col,j}$ except that one replaces $H$ by $B$.
The output of the convolution with all filters is then a tensor $\mathcal{C} \in \reals^{\tilde H \times \tilde B \times \tilde C}$ where $\tilde H$ and $\tilde B$ depend on the choices of the stride chosen in the convolution.

\paragraph{Smoothness of fully-connected layer}
For a fully connected layer, the bilinear function $\bilinear(\state, \ctrl)  \rightarrow U^\top X$ for $\ctrl = \Vect(U)$, $\state= \Vect(X)$ is clearly 1-smooth (because $\| U^\top X\|_F \leq \|U\|_F\|X\|_F$). The linear part $\bilinear^\ctrl$ is clearly $1$-Lipschitz continuous.
So we get 
\[
\smooth_{full} = 1,  \quad  \lipp^\ctrl_{full} = 1.
\]

\paragraph{Smoothness of convolutional layer}
For a convolution, by Lemma~\ref{lem:bilin_smooth_elt_wise}, we only need to compute the smoothness of the convolution of an image with one filter. This is done by the following Lemma.

\begin{lemma}
	Consider $p$ subsets $S_k$ of $\{1, \ldots, n\}$ of size $|S_k| =d$. Denote $\Pi_k \in \{0,1\}^{d\times n}$ the linear form that extracts the $S_k$ coordinates of  a vector of size $n$, i.e., $\Pi_k z = z_{S_k}$ for $z \in \reals^ n$. The convolution of $z \in \reals^n$ by $w \in \reals^ d$ through the $p$ subsets $S_k$ defined as
	\[
	\bilinear(z, w) = (w^\top \Pi_1 z; \ldots; w^\top \Pi_p z)
	\]
	is $\smooth_\bilinear = \sqrt{\max_{i=1, \ldots, n} |V_i|} $-smooth where $V_i = \{S_j : i \in S_j\}$. 
\end{lemma}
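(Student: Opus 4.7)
The plan is to invoke Lemma~\ref{lem:tensnorm} and Fact~\ref{fact:tensor_norm} to reduce the smoothness estimate to bounding the tensor norm of the bilinear form, which by definition is the supremum of $\|\bilinear(z,w)\|_2/(\|z\|_2 \|w\|_2)$ over nonzero $z,w$. Writing $\bilinear$ as a tensor $\mathcal{B}\in \reals^{n\times d \times p}$ with slices $B_k = \Pi_k^\top$, we have $\bilinear(z,w)_k = w^\top \Pi_k z$, and so it suffices to produce an upper bound of the form $\|\bilinear(z,w)\|_2 \leq \sqrt{\max_i |V_i|}\,\|z\|_2\|w\|_2$.

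First I would bound coordinatewise by Cauchy--Schwarz: $(w^\top \Pi_k z)^2 \leq \|w\|_2^2 \|\Pi_k z\|_2^2$, summing over $k$ to obtain
\[
\|\bilinear(z,w)\|_2^2 \leq \|w\|_2^2 \sum_{k=1}^p \|\Pi_k z\|_2^2 = \|w\|_2^2 \sum_{k=1}^p \sum_{i \in S_k} z_i^2.
\]
The key combinatorial step is then to swap the order of summation. Since $i\in S_k$ iff $S_k\in V_i$, we have $\sum_{k=1}^p \sum_{i\in S_k} z_i^2 = \sum_{i=1}^n |V_i|\, z_i^2 \leq (\max_i |V_i|) \|z\|_2^2$. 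Combining these two inequalities yields $\|\bilinear(z,w)\|_2 \leq \sqrt{\max_i |V_i|}\,\|z\|_2\|w\|_2$, so by Fact~\ref{fact:tensor_norm} and Lemma~\ref{lem:tensnorm} we conclude $\smooth_\bilinear \leq \sqrt{\max_i |V_i|}$.

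There is no genuine obstacle here: the only subtlety is bookkeeping the double sum to recognize that each coordinate $i$ contributes with multiplicity equal to the number of patches containing it, which is exactly $|V_i|$. The bound is tight (attained by taking $w$ supported on the relevant patch coordinates and $z$ concentrated on a maximally overlapped coordinate), which matches intuition: a coordinate repeatedly sampled by overlapping patches amplifies the operator norm by the square root of its multiplicity, recovering the $\lceil k/s\rceil$ factors used in the VGG estimates of Section~\ref{sec:appli}.
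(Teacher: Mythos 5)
Your proof is correct and follows essentially the same route as the paper: a coordinatewise Cauchy--Schwarz bound $(w^\top \Pi_k z)^2 \leq \|w\|_2^2\|\Pi_k z\|_2^2$ followed by swapping the double sum to count each $z_i^2$ with multiplicity $|V_i|$, then invoking Lemma~\ref{lem:tensnorm} to pass from the bilinear-form bound to the smoothness constant. (Your closing aside about the bound being attained is not fully justified in general, since equality would require the maximally covered coordinate to occupy the same position in every patch containing it, but this does not affect the proof of the stated upper bound.)
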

\begin{proof}
	We have \begin{align*}
\|\bilinear(z, w)\|_2^ 2  = \sum_{j=1}^ p (w^\top \Pi_j z)^ 2 & \leq \sum_{j=1}^ p \|w\|_2^ 2 \|z_{S_j}\|_2^ 2\\
 &= \|w\|_2^ 2 \sum_{i=1}^d \sum_{S_j \in V_i} z_i^ 2 \leq \|w\|_2^ 2 \max_{i=1, \ldots, n} |V_i| \|z\|_2^ 2.
	\end{align*}
\end{proof}
Concretely, for a convolution such that at most $p$ patches contain a coordinate $i$ the convolution is $\sqrt{p}$-smooth. If the patches do not overlap then the convolution is $1$-smooth. If the convolution has a stride of 1 and the operation is normalized by the size of the filters then the convolution has again a smoothness constant of 1. Generally for a 2d convolution with a kernel of size $k \times k$ and a stride of $s$, we have $\max_{i=1, \ldots, n} |V_i|=  \left \lceil \frac{k}{s} \right\rceil^2$ and so
\[
\smooth_{\text{conv}} = \left \lceil \frac{k}{s} \right\rceil, \quad \lipp_{\text{conv}} = \left \lceil \frac{k}{s} \right\rceil.
\]

\paragraph{Batch of inputs}
For batch of inputs, the smoothness constants of the non-linear and bilinear parts do not change by Lemmas~\ref{lem:smooth_elt_wise} and~\ref{lem:bilin_smooth_elt_wise}. The Lipschitz-constant of the linear part of the biaffine function is modified using Lemma~\ref{lem:smooth_elt_wise} item 3. Namely for a batch of size $\batchsize$, the fully connected layers or the convolutional layers have a linear part whose Lipschitz constant is given by $\lipp_{\biaffine} = \sqrt{\batchsize}$.

\subsection{Activation functions}
The Lipschitz and smoothness constants of an element-wise activation $\activ_{t}$ function are defined by the Lipschitz and smoothness constant of the scalar function $\activelt_t$ from which it is defined.
Denote by $f(x) :=\log(1+\exp(x))$, we have $f'(x) = (1+\exp(-x))^{-1}$, $f''(x) = (2+2\cosh(x))^{-1}$, $f'''(x) = -\sinh(x) /(2(1+\cosh(x)^2))$.
\paragraph{Soft-plus}
For $\activ$ defined by element-wise application of $\activelt(x) = f(x)$, we get 
\begin{align*}
\lip_{\operatorname{softplus}} = 1, \qquad
\smooth_{\operatorname{softplus}}  = 1/4.
\end{align*}

\paragraph{Sigmoid}
For $\activ$ defined by the element-wise application of  $\activelt(x) = f'(x)$, we get 
\begin{align*}
	\lip_{\operatorname{sig}}  = 1/4, \qquad
	\smooth_{\operatorname{sig}}  = 1/10.
\end{align*}

\paragraph{ReLU}
For $\activ$ defined by the element-wise application of $\activelt(x) = \max(0,x)$, we get
\[
\lip_{\Relu} =1, \quad \smooth_{\Relu} \ \mbox{not defined},
\]
since the function is not continuously differentiable.

\paragraph{Soft-max layer}
A soft-max layer takes as input $x\in \reals^d$ and outputs $f(x) = \exp(x)/(\exp(x)^\top \ones_d)$ where $\exp(x)$ is the element-wise application of $\exp$. Its gradient is given by
\[
\nabla f(x) =  \frac{\diag(\exp(\labpred))}{\exp(\labpred)^\top \ones_\dimlabel} - \frac{\exp(\labpred)\exp(\labpred)^\top}{(\exp(\labpred)^\top \ones_\dimlabel)^2}.
\]
Its second-order information can be computed as for the batch-normalization layer, we get then 
\begin{align*}
	\lip_{\operatorname{softmax}}  = 2,\qquad
	\smooth_{\operatorname{softmax}}  = 4.
\end{align*}

\subsection{Normalization layers}
\begin{proposition}
	The batch normalization operation $\normal_{\batchnorm}:\reals^{\diminput\batchsize} \rightarrow \reals^{\diminput\batchsize}$ defined as in~\eqref{eq:batchnorm} is 
	\begin{enumerate}[nosep, label=(\roman*)]
		\item bounded by $\bound_{\batchnorm} = \diminput \batchsize$,
		\item Lipschitz-continuous with a constant $\lip_{\batchnorm} = 2\batchreg^{-1/2}$,
		\item smooth with a constant $\smooth_{\batchnorm} = 2 \diminput\batchsize^{-1/2}\batchreg^{-1}$.
	\end{enumerate}
\end{proposition}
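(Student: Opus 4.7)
The plan is to exploit the fact that batch normalization acts independently on each row of $Z \in \reals^{\diminput \times \batchsize}$. Writing the per-row map as $\beta = g \circ P$ with the centering projector $P = I_\batchsize - \batchsize^{-1}\ones_\batchsize\ones_\batchsize^\top$ (which satisfies $\|P\|_{\opnormidx} = 1$) and
\[
g(u) = \frac{u}{\sqrt{\batchreg + \|u\|_2^2/\batchsize}}, \qquad u \in \reals^\batchsize,
\]
it suffices to bound the norm, Jacobian, and Hessian of a single block and then concatenate across the $\diminput$ rows, since the block-diagonal structure of $\normal_\batchnorm$ means the operator and tensor norms of the full map are controlled by those of one block (up to at most a factor of $\diminput$ from the coarser sum-type bounds of Lemma~\ref{lem:smooth_elt_wise}).

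For (i), I would observe that $|u_j| \leq \|u\|_2$ so $|g(u)_j| \leq \|u\|_2/\sqrt{\batchreg + \|u\|_2^2/\batchsize} \leq \sqrt{\batchsize}$. This yields $\|\beta(z)\|_2 \leq \batchsize$ per row and $\|\normal_\batchnorm(Z)\|_2 \leq \batchsize\sqrt{\diminput} \leq \diminput\batchsize$, matching the stated coarse bound. For (ii), I compute
\[
\nabla g(u) = \frac{1}{\sqrt{\batchreg + \|u\|_2^2/\batchsize}}\left(I - \frac{uu^\top/\batchsize}{\batchreg + \|u\|_2^2/\batchsize}\right),
\]
a symmetric matrix with eigenvalues $1/\sqrt{\batchreg + \|u\|_2^2/\batchsize}$ on $u^\perp$ and $\batchreg/(\batchreg + \|u\|_2^2/\batchsize)^{3/2}$ along $u$, both at most $\batchreg^{-1/2}$. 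The chain rule with $\|P\|_{\opnormidx} = 1$ then yields $\|\nabla \beta(z)\|_{\opnormidx} \leq \batchreg^{-1/2}$, within the stated factor of $2$.

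For (iii), I differentiate $g$ once more: for each $k\in\{1,\ldots,\batchsize\}$,
\[
\nabla^2 g^{(k)}(u) = -\frac{u_k I + e_k u^\top + u e_k^\top}{\batchsize(\batchreg + \|u\|_2^2/\batchsize)^{3/2}} + \frac{3 u_k\, uu^\top}{\batchsize^2(\batchreg + \|u\|_2^2/\batchsize)^{5/2}},
\]
and I bound the tensor norm $\|\nabla^2 g(u)\|_{\tensnormidx} = \sup_{\|w\|_2 = 1}\|\sum_k w_k \nabla^2 g^{(k)}(u)\|_{\opnormidx}$ by estimating the operator norm of each of the three rank-structured matrix contributions. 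The key cancellation is that $\|u\|_2/\sqrt{\batchreg + \|u\|_2^2/\batchsize} \leq \sqrt{\batchsize}$, which replaces a naive $\batchreg^{-3/2}$ by $\batchsize^{-1/2}\batchreg^{-1}$ after the powers are rearranged. Composing with $P$ preserves this bound by Fact~\ref{fact:tensor_norm}, and the block-diagonal concatenation across $\diminput$ rows produces the stated $\smooth_\batchnorm = 2\diminput\batchsize^{-1/2}\batchreg^{-1}$.

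The main obstacle is the tensor-norm estimate in (iii): the three rank-one and rank-two contributions to $\sum_k w_k \nabla^2 g^{(k)}(u)$ must all be bounded simultaneously while tracking the subtle cancellation that converts $\batchreg^{-3/2}$ into $\batchsize^{-1/2}\batchreg^{-1}$ via the damping factor $\batchreg/(\batchreg + \|u\|_2^2/\batchsize)^{3/2}$ along the $u$-direction. The final multiplicative constants $2$ and $\diminput$ then arise from summing the three tensor terms and from a coarse concatenation estimate across the $\diminput$ independent rows, respectively.
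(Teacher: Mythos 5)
Your proposal follows essentially the same route as the paper's proof: the same factorization into an orthonormal centering projector composed with the row-wise normalization $g(u)=u/\sqrt{\batchreg+\|u\|_2^2/\batchsize}$, the same Hessian formula for $g^{(k)}$, the same tensor-norm bound exploiting $\|u\|_2/\sqrt{\batchreg+\|u\|_2^2/\batchsize}\leq\sqrt{\batchsize}$, and the same coarse factor-of-$\diminput$ concatenation. Your eigenvalue computation for $\nabla g$ in (ii) is in fact slightly sharper than the paper's triangle-inequality estimate, but this is a cosmetic difference; the only point left implicit is that obtaining the constant $2$ in (iii) requires jointly optimizing the combined bound $3\|u\|_2(2\|u\|_2^2+\batchsize\batchreg)/(\batchsize^2 f(u)^5)$ over $\|u\|_2$ rather than bounding the terms separately, exactly as the paper does.
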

\begin{proof}
	The batch-normalization layer as defined in~\eqref{eq:batchnorm} is the composition $\normal = \normal_2 \circ \normal_1$ of a centering step 
	\[
	\normal_1(\state) = \Vect\left(\Latent - \Latent\frac{\ones_{\batchsize}\ones_{\batchsize}^\top}{\batchsize}\right)
	\]
	and a normalization step 
	\[
	\normal_2(\tilde \state) = \Vect\left(\diag\left(\left(\frac{1}{\batchsize}\diag(\tilde \Latent \tilde \Latent^\top) +\regbatchnorm\ones_{\diminput}\right)^{-1/2}\right) \tilde \Latent\right),
	\]
	where here and thereafter $\Latent, \tilde \Latent \in \reals^{\diminput \times \batchsize}$, $\state = \Vect(\Latent), \tilde \state = \Vect(\tilde \Latent)$.
	
	The centering step is an orthonormal projection, i.e., $\normal_1(\state) = \Vect(\Latent\Pi_\batchsize) = (\Pi_\batchsize \otimes \idm_{\diminput})\state$ where $\Pi_\batchsize = \idm_\batchsize - \frac{\ones_{\batchsize}\ones_{\batchsize}^\top}{\batchsize}$ is an orthonormal projector and so is $(\Pi_\batchsize \otimes \idm_\batchsize)$. Therefore we have $\lip_{\normal_1} \leq 1$ and $\smooth_{\normal_1} = 0$. For the normalizations step denote for $x \in\reals^\batchsize$, and $\bar x =(x_1;\ldots;x_\diminput)\in\reals^{\batchsize\diminput}$ with $x_i \in \reals^m$,
	\begin{align*}
	f(x)  = \sqrt{\frac{1}{\batchsize}\|x\|_2^2 +\regbatchnorm}, \qquad 
	g(x)  = \left(\frac{x_i}{f(x)}\right)_{i=1,\ldots,\batchsize}, \qquad
	\bar g(\bar x)  = (g(x_1); \ldots; g(x_\diminput)) \in \reals^{\batchsize \diminput},
	\end{align*}
	such that $ \normal_2(\tilde \state)  = T_{m, d}\bar g(T_{d, m}  \tilde \state)$, where $T_{d, m}$ is the linear operator such that $T_{d, m}\Vect(\Latent) = \Vect(\Latent^\top)$ for any $\Latent \in \reals^{d \times m}$.
	First we have that
	\begin{align*}
	\|\bar g(\bar x)\|_2 \leq \diminput \max_{i\in \{1,\ldots, d\}}\|g(x_i)\|_2 \leq \diminput \batchsize^{1/2},
	\end{align*}
	such that 
	\[
	\bound_{\normal_2} \leq \diminput \batchsize^{1/2}.
	\]
	Then the gradients can be computed as
	\begin{align*}
	\nabla f(x) & = \frac{x}{mf(x)} = \frac{g(x)}{m} \in \reals^m, \\
	\nabla g(x) & = \frac{f(x) \idm_\batchsize - \nabla f(x) x^\top}{f(x)^2} = \frac{m f(x)^2 \idm_\batchsize -xx^\top}{mf(x)^3} \in \reals^{m\times m},\\
	\nabla \bar g(\bar x) & =  \diag(\nabla g(x_1), \ldots, \nabla g(x_m)) \in \reals^{md \times md},
	\end{align*}
	where for a sequence of matrices $X_1, \ldots X_\horizon \in \reals^{d \times p}$ we denote by 
	\[
	\diag(X_1, \ldots, X_\horizon) = \left(\begin{matrix}
	X_1 & 0 & \ldots & 0 \\
	0 & \ddots & \ddots & \vdots \\
	\vdots & \ddots & \ddots & 0 \\
	0 & \ldots & 0& X_\horizon 
	\end{matrix}\right) \in \reals^{d\horizon \times p \horizon},
	\]
	the corresponding block diagonal matrix.
	Therefore we get
	\begin{align*}
	\|\nabla g(x)\|_{\opnormidx} & \leq \frac{mf(x)^2 + \|x\|_2^2}{mf(x)^3}
	\leq \frac{2\batchsize^{-1}\|x\|^2 +\regbatchnorm}{(\batchsize^{-1}\|x\|_{\opnormidx}^2 + \regbatchnorm)^{3/2}}  
	\leq c\regbatchnorm^{-1/2}, \\
	\|\nabla \bar g(\bar x)\|_2 & \leq c\regbatchnorm^{-1/2},
	\end{align*}
	where $c = 2/(3/2)^{3/2} \approx 1.1$ and we used that the spectral norm of the block-diagonal matrix is given by the maximal spectral norm of its block diagonal components.
	Since $T_{m,d}$, $T_{d, m}$ are orthonormal operators, we get 
	\begin{align*}
	\lip_{\normal_2} \leq 2\regbatchnorm^{-1/2}.
	\end{align*}
	The second order tensor of $g$ reads
	\begin{align*}
	\nabla^2 g(x) & = \frac{3}{\batchsize^2f(x)^5} x\boxtimes x\boxtimes x -\frac{1}{\batchsize f(x)^3}\left(\sum_{i=1}^m x \boxtimes e_i \boxtimes e_i + e_i \boxtimes x \boxtimes e_i + e_i \boxtimes e_i \boxtimes x\right) \in \reals^{\batchsize \times \batchsize \times \batchsize}, \\
	\nabla^2 \bar g(\bar x) & = \diag^3(\nabla^2 g(x_1), \ldots, \nabla^2 g(x_d)),
	\end{align*} 
	where $e_i\in \reals^\batchsize$ is the i\textsuperscript{th} canonical vector in $\reals^\batchsize$ and for a sequence of tensors $\mathcal{X}_1,\ldots, \mathcal{X}_d$ we denote by $ \mathcal{X} =
	\diag^3(\mathcal{X}_1, \ldots, \mathcal{X}_d) \in \reals^{dm \times dm\times dm}
	$ the tensor whose diagonal is composed of the tensors $\mathcal{X}_1,\ldots \mathcal{X}_d$  such that $\mathcal{X}_{i+(m-1)p,j+ (m-1)p,k + (m-1)p} = (\mathcal{X}_p)_{ijk}$ and 0 outside the diagonal.
	We get then by definition of the tensor norm,
	\begin{align*}
	\|\nabla^2 g(x)\|_{\tensnormidx} \leq \frac{3\|x\|_2^3}{\batchsize^2f(x)^5} + \frac{3\|x\|_2}{\batchsize f(x)^3} = \frac{3\|x\|_2(\|x\|_2^2 + \batchsize f(x)^2)}{\batchsize^2f(x)^5} & = \frac{3\|x\|_2(2\|x\|_2^2 +\batchsize\batchreg)}{\batchsize^2(\batchsize^{-1}\|x\|_2^2+\batchreg)^{5/2}} \\
	& 
	\leq \frac{3}{\batchsize^{-1/2}} \frac{\sqrt{c}(2c+1)}{(c+1)^{5/2}}(\batchsize\batchreg)^{-1},
	\end{align*}
	where $c =(1+\sqrt{5})/4$ such that $3\frac{\sqrt{c}(2c+1)}{(c+1)^{5/2}} \approx 1.6$.
	Therefore we get $\|\nabla \bar g(\bar x)\|_{\tensnormidx} \leq \diminput \max_{i\in \{1,\ldots, \diminput\}}\|\nabla^2 g(x_i)\|_{\tensnormidx}$ and
	\[
	\smooth_{\normal_2} \leq 2\diminput\batchsize^{-1/2}\batchreg^{-1}.
	\]
\end{proof}

\subsection{Pooling layers}
We consider pooling layers for which the patches do not coincide such that they amount to a (potentially non-linear) projection.
\paragraph{Average pooling}
The  average pooling layer is a linear operation. If the patches do not coincide, it is a projection with Lipschitz constant one. 
\begin{align*}
\lip_{\avg} = 1, \qquad
\smooth_{\avg}  = 0.
\end{align*}

\paragraph{Max-pooling}
Given an image $\mathcal{I} \in \reals^{C\times H \times B}$ with $C$ channels each composed of a matrix of height $H$ and breadth $B$, the max pooling layer extracts  $\nbpatch$ patches of the form $P^{i,j} = E_{row,i}^\top \mathcal{I}[c, \cdot, \cdot] E_{col,j}$ where $E_{row,i} \in \reals^{H \times K}$ and $E_{col,j} \in \reals^{B \times K}$ extract rows and columns of $I[c, \cdot, \cdot]$ respectively. On each of this patch their maximum value is taken as the output, namely, the output image reads $\tilde{\mathcal{I}}_{c, i ,j} = \max_{k, l}  P^{i,j}_{k,l}$.
It is naturally non-continuously differentiable and  it is $1$-Lipschitz continuous if the patches do not coincide.
\[
\lip_{\maxpool} = 1\, \qquad \smooth_{\maxpool} \quad \mbox{not defined}.
\]
\subsection{Auto-encoders, composition of chains of computations}
For $\horizon$ vectors $(\ctrl_1;\ldots; \ctrl_\horizon) \in \reals^{\sum_{t=1}^\horizon \dimctrl_t}$ and $1\leq s\leq t\leq \horizon$, we denote 
$
\ctrl_{s:t} = (\ctrl_s;\ldots;\ctrl_t) \in \reals^{\sum_{r=s}^{t} \dimctrl_{r}}.
$
For $\horizon$ functions $\dyn_t:\reals^{\dimstate_{t-1}} \times \reals ^{\dimctrl_t} \rightarrow \reals^{\dimstate_t}$, we can split the chain of computations of the $\horizon$ functions $\dyn_t$ into smaller chains of computations.  Namely, for $1\leq s\leq t\leq \horizon$, we denote the output of the chain of computations defined by $\dyn_s,\ldots\dyn_t$ as
\begin{align*}
	\dyn_{s\rightarrow t}(\state_{s-1}, \ctrls_{s:t}) &= \state_t \\
	\mbox{s.t.} \quad  \state_r & = \dyn_r(\state_{r-1}, \ctrl_r) \quad \mbox{for} \ r\in \{s, \ldots, t\}.
\end{align*}
In particular, we have 	$\dyn_t = \dyn_{t\rightarrow t}$. The output of the chain of computations of the $\horizon$ fucntions $\dyn_t$ can then be split as 
\begin{align*}
	\dyn_{1\rightarrow\horizon}(\state_0, \ctrl_{1:\horizon}) & = \dyn_{t+1\rightarrow \horizon}(\dyn_{1\rightarrow t}(\state_0, \ctrl_{1:t}), \ctrl_{t+1:\horizon})  \quad \mbox{for any} \ t\in \{1, \ldots \horizon-1\} .
\end{align*}
On the other hand, the composition of two chains of computations can readily be seen as a chain of computations. Namely, for two chains of computations $\chain$ and $\chainaux$ with computations $(\dyn_t^\chain)_{t=1}^{\horizon_\chain}$ and $(\dynaux_t^\chainaux)_{t=1}^{\horizon_\chainaux}$, parameters $\ctrls$ and $\auxctrls$ respectively, the composition of $\chain$ and $\chainaux$ is 
\[
\chainauxx(\state_0, \ctrls) = \chainaux(\chain(\state_0, \ctrls), \auxctrls).
\]
It is a chain of $\horizon_\chain+\horizon_{\chainaux}$ computations 
\[
\dynauxx_t  =\begin{cases}
	\dyn_t &\quad \mbox{for} \ t\in \{1, \ldots, \horizon_\chain\}\\
	\dynaux_{t-\horizon_\chain}  &\quad \mbox{for} \ t\in \{\horizon_\chain+1,  \horizon_\chain + \horizon_{\chainaux}\}
\end{cases},
\]
with input $\state_0$ and parameters $\auxxctrls = (\ctrls; \auxctrls) \in \reals^{\sum_{t=1}^{\horizon_\chain} \dimctrl_t^\chain + \sum_{t=1}^{\horizon_{\chainaux}} \dimctrl_t^\chainaux}$ such that 
\[
\auxxctrl_t  =\begin{cases}
	\ctrl_t &\quad \mbox{for} \ t\in \{1, \ldots, \horizon_\chain\}\\
	\auxctrl_{t-\horizon_\chain}  &\quad \mbox{for} \ t\in \{\horizon_\chain+1,  \horizon_\chain + \horizon_{\chainaux}\}
\end{cases}.
\]

\subsection{Residual Networks}\label{ssec:residual}
Recall the architecture of a residual network
\begin{align*}
\state_t & =	\nonlin_t(\biaffine_t(\state_{t-1}, \ctrl_t) + \state_{t-2}) \quad \mbox{for} \: t=1, \ldots, \horizon  \\
\state_0 & = \rand, \quad \state_{-1} = 0, \nonumber
\end{align*}
where we assume $\biaffine_t: \reals^{\dimstate_{t-1}} \times  \reals^{\dimctrl_t}  \rightarrow \reals^{\diminter_t}$ such that $\state_{t-1} \in \reals^{\dimstate_{t-1}}, \state_{t-2} \in \reals^{\diminter_t}$ and
\[
\biaffine_t(\state_{t-1}, \ctrl_t ) = \mathcal{B}_t[\state_{t-1}, \ctrl_t, \cdot] + B_t^\ctrl \ctrl_t + B_t^\state \state_{t-1} + \linearcste_t,
\]
where $\mathcal{B} = (B_{t, 1}, \ldots, B_{t, \diminter_t})$ is a tensor.
They can be expressed in terms of the variable $\bar \state_t = (\state_t, \state_{t-1})$ as
\begin{align}\label{eq:dyn_res_net}
\bar \dyn_t(\bar \state_{t-1}, \ctrl_t) & = \bar \nonlin_t(\bar \biaffine_t(\bar \state_{t-1}, \ctrl_t)),
\end{align}
where $\bar \biaffine_t$ is defined as
\begin{align*}
\bar \biaffine_t(\bar \state_{t-1}, \ctrl_t ) & = \bar \bilinear_t(\bar \state_{t-1}, \ctrl_t ) + \bar {\bilinear_t}^\ctrl(\ctrl_t) + \bar {\bilinear_t}^\state(\bar \state_{t-1}) + \bar {\linearcste_t} \\
&  = \bar{\mathcal{B}}_t[\bar \state_{t-1}, \ctrl_t, \cdot] + \bar B_t^\ctrl \ctrl_t + \bar B_t^\state \bar \state_{t-1} +  \bar \linearcste_t, \\
 \bar {\mathcal{B}}_t & = (\bar B_{t,1}, \ldots, \bar B_{t, \diminter_t}, \underbrace{0_{\dimctrl_t \times (\dimstate_{t-1} + \diminter_t)}, \ldots, 0_{\dimctrl_t \times  (\dimstate_{t-1} + \diminter_t)}}_{\dimstate_{t-1}}), \\
 \bar B_{t, j} & = \left(\begin{matrix}
 B_{t, j}, 
 0_{\dimctrl_t \times \diminter_t}
 \end{matrix}\right) \quad \mbox{for} \ j \in \{1, \ldots, \dimstate_{t-1}\}, \\
 \bar B_t^\ctrl & = \left(\begin{matrix}
 B_t^\ctrl \\
 0_{\dimstate_{t-1} \times \dimctrl_t}
 \end{matrix}\right), \\
 \bar B_t^\state& = \left(\begin{matrix}
 B_t^\state& \id_{\diminter_t} \\
 \id_{\dimstate_{t-1}} & 0_{\dimstate_{t-1} \times \diminter_t}
 \end{matrix}\right), \\
 \bar {\linearcste_t} & = \left(\begin{matrix}
 \linearcste_t \\
 0_{\dimstate_{t-1}}
 \end{matrix}\right).
\end{align*}
Denoting $\bar \intervar_t =(\intervar_{t, 1}, \intervar_{t, 2})= \bar \biaffine_t(\bar \state_{t-1}, \ctrl_t)$, we have
\begin{align*}
\bar \nonlin_t(\bar \intervar_t) & = (\nonlin(\intervar_{t, 1}), \intervar_{t, 2}).
\end{align*} 
We can derive the smoothness constants of the layers of a residual network expressed as in~\eqref{eq:dyn_res_net} as
\begin{align*}
\smooth_{\bar{\bilinear_t}} & = \smooth_{\bilinear_t}, \qquad \lipp_{\bar{\bilinear^\ctrl_t}} = \lipp_{\bilinear^\ctrl_t}, \qquad \lipp_{\bar{\bilinear^\state_t}} \leq \lipp_{\bilinear^\state_t} + 1, \qquad \|\bar{ \linearcste_t} \|_{\normidx} = \|\linearcste_t\|_{\normidx}, \\
\bound_{{\bar \nonlin_t}} & \leq  (1 + \bound_{\nonlin_t}), \qquad \lip_{\bar{\nonlin_t}} \leq \max(1, \lip_{\nonlin_t}), \qquad \smooth_{\bar {\nonlin_t}} = \smooth_{\nonlin_t}.
\end{align*}
Proposition~\ref{prop:smoothdeep} can then be applied in this setting.

\subsection{Implicit functions}\label{ssec:implicit}
The smoothness constants of an implicit function are given in the following lemma. They can easily be refined by considering smoothness properties w.r.t. to each of the variables $\impfix$ and $ \impvar$ of the function $\impfunc$ defining the problem.
\begin{lemma}
	Let $\impfunc: (\impfix, \impvar) \rightarrow \impfunc(\impfix, \impvar) \in \reals$ for $\impfix \in \reals^{\dimimpfix}, \impvar \in \reals^\dimimpvar$ be  s.t. $\impfunc(\impfix, \cdot)$ is $\mu_\impfunc$-strongly convex for any $\impfix$. Denote $\Impfunc(\impfix) = \argmin_{\impvar \in \reals^\dimimpvar} \impfunc(\impfix, \impvar)$. Provided that $\impfunc$ has a $\smooth_\impfunc$-Lipschitz gradient and a $\smoothess_\impfunc$-Lipschitz Hessian, the smoothness constants of $\Impfunc$ are bounded as 
	\[
	\lip_\Impfunc \leq L_\impfunc \mu_\impfunc^{-1}, \qquad \qquad \smooth_\Impfunc \leq \smoothess_\impfunc\mu_\impfunc^{-1}(1+\lip_\Impfunc)(1+L_\impfunc\mu_\impfunc^{-1})  \leq  \smoothess_\impfunc\mu_\impfunc^{-1}(1+L_\impfunc\mu_\impfunc^{-1})^2.
	\]
\end{lemma}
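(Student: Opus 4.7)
The plan is to build on the implicit function theorem identity already recorded in the paper just before the lemma, namely
\[
\nabla \Impfunc(\impfix) = -\nabla_{\impfix\impvar}^2 \impfunc(\impfix,\Impfunc(\impfix))\,[\nabla_{\impvar\impvar}^2 \impfunc(\impfix,\Impfunc(\impfix))]^{-1},
\]
and to bound each factor using the assumptions. Introduce the shorthand $A(\impfix) = \nabla_{\impfix\impvar}^2 \impfunc(\impfix,\Impfunc(\impfix))$ and $B(\impfix) = \nabla_{\impvar\impvar}^2 \impfunc(\impfix,\Impfunc(\impfix))$, so that $\nabla \Impfunc(\impfix) = -A(\impfix)B(\impfix)^{-1}$.

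For the first bound, I would use that $A(\impfix)$ is an off-diagonal block of the full Hessian of $\impfunc$ at $(\impfix,\Impfunc(\impfix))$, so $\|A(\impfix)\|_{\opnormidx} \leq \|\nabla^2 \impfunc(\impfix,\Impfunc(\impfix))\|_{\opnormidx} \leq \smooth_\impfunc$; the inequality on blocks is immediate from the definition of the operator norm (test against vectors padded with zeros). Strong convexity of $\impfunc(\impfix,\cdot)$ gives $B(\impfix) \succeq \mu_\impfunc I$ and hence $\|B(\impfix)^{-1}\|_{\opnormidx} \leq \mu_\impfunc^{-1}$. Taking the product delivers $\lip_\Impfunc \leq \smooth_\impfunc \mu_\impfunc^{-1}$.

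For the smoothness bound I would apply the resolvent-style identity
\[
A B^{-1} - A' B'^{-1} = (A-A')B^{-1} + A' B'^{-1}(B'-B) B^{-1}
\]
with $A=A(\impfix),\ A'=A(\impfix')$ and analogously for $B$. Using the previous norm bounds $\|A'\|_{\opnormidx}\leq \smooth_\impfunc$ and $\|B^{-1}\|_{\opnormidx},\|B'^{-1}\|_{\opnormidx}\leq \mu_\impfunc^{-1}$, this gives
\[
\|\nabla \Impfunc(\impfix)-\nabla \Impfunc(\impfix')\|_{\opnormidx} \leq \mu_\impfunc^{-1}\|A(\impfix)-A(\impfix')\|_{\opnormidx} + \smooth_\impfunc\mu_\impfunc^{-2}\|B(\impfix)-B(\impfix')\|_{\opnormidx}.
\]
Each of $\|A(\impfix)-A(\impfix')\|_{\opnormidx}$ and $\|B(\impfix)-B(\impfix')\|_{\opnormidx}$ is dominated by the operator norm of the difference of the full Hessians of $\impfunc$ at $(\impfix,\Impfunc(\impfix))$ and $(\impfix',\Impfunc(\impfix'))$. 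By the $\smoothess_\impfunc$-Lipschitz Hessian assumption and the first-part bound on $\lip_\Impfunc$, each is at most $\smoothess_\impfunc\|(\impfix,\Impfunc(\impfix))-(\impfix',\Impfunc(\impfix'))\|_2 \leq \smoothess_\impfunc(1+\lip_\Impfunc)\|\impfix-\impfix'\|_2$. Substituting and factoring $\smoothess_\impfunc\mu_\impfunc^{-1}(1+\lip_\Impfunc)$ yields the first displayed upper bound, and replacing $\lip_\Impfunc$ by the larger quantity $\smooth_\impfunc\mu_\impfunc^{-1}$ gives the second.

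The routine but attention-requiring step is the composition used to Lipschitz-bound the maps $\impfix\mapsto A(\impfix)$ and $\impfix\mapsto B(\impfix)$: one must combine the block/operator norm inequality with the chain $\impfix \mapsto (\impfix,\Impfunc(\impfix)) \mapsto \nabla^2\impfunc(\cdot,\cdot)$, picking up the factor $1+\lip_\Impfunc$ from $\|(1,\nabla\Impfunc(\impfix))\|$ via $\sqrt{1+\lip_\Impfunc^2}\leq 1+\lip_\Impfunc$. Everything else is algebraic manipulation of the resolvent identity and the product bound.
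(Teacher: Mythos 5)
Your proof is correct and reaches exactly the constant $\smoothess_\impfunc\mu_\impfunc^{-1}(1+\lip_\Impfunc)(1+\smooth_\impfunc\mu_\impfunc^{-1})$, but it takes a genuinely different route from the paper for the smoothness bound. The paper writes $\nabla \Impfunc(\impfix)=h(\impfix,\Impfunc(\impfix))$ and differentiates once more: it invokes the derivative-of-inverse formula (its Lemmas on $\nabla (g(x)^{-1})$ and on $\nabla_\impfix h,\nabla_\impvar h$) to obtain an explicit four-term expression for $\nabla^2\Impfunc(\impfix)$, each term being a block of $\nabla^3\impfunc$ contracted against $B^{-1}$, $B^{-1}A^\top$ and $\nabla\Impfunc$, and then bounds the tensor norms term by term; the four contributions $1$, $\smooth_\impfunc\mu_\impfunc^{-1}$, $\lip_\Impfunc$, $\lip_\Impfunc \smooth_\impfunc\mu_\impfunc^{-1}$ factor into the stated product. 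You instead never differentiate a second time: you bound $\|\nabla\Impfunc(\impfix)-\nabla\Impfunc(\impfix')\|$ directly via the telescoping identity $AB^{-1}-A'B'^{-1}=(A-A')B^{-1}+A'B'^{-1}(B'-B)B^{-1}$ (which checks out), the block-norm inequality, and the Lipschitz-Hessian hypothesis composed with the $(1+\lip_\Impfunc)$-Lipschitz map $\impfix\mapsto(\impfix,\Impfunc(\impfix))$. Your version is more elementary and uses strictly weaker regularity — it needs only what the lemma actually assumes (Lipschitz Hessian), whereas the paper's computation implicitly requires third derivatives of $\impfunc$ to exist; the paper's version, in exchange, yields an explicit formula for $\nabla^2\Impfunc$ that it reuses in the neighboring approximation lemma. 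The paper itself notes that ``the same results can be obtained by simply splitting the functions in appropriate terms,'' which is precisely the decomposition you carried out.
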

\begin{proof}
	By the implicit function theorem, $\Impfunc(\impfix)$ is uniquely defined and its gradient is given by 
	\begin{align*}
	\nabla \Impfunc(\impfix) & = - \nabla_{\impfix} \impgrad(\impfix, \Impfunc(\impfix))\nabla_\impvar \impgrad(\impfix, \Impfunc(\impfix))^{-1} 
	= - \nabla_{\impfix, \impvar}^2\impfunc(\impfix, \Impfunc(\impfix)) \nabla_{\impvar, \impvar}^2 \impfunc(\impfix, \Impfunc(\impfix))^{-1},
	\end{align*}
	where $\impgrad(\impfix, \impvar) = \nabla _\impvar \impfunc(\impfix, \impvar)$. The Lipschitz constant of $\Impfunc$ follows from that. For the smoothness we compute its second order information and bound the corresponding tensors. Note that the same results can be obtained by simply splitting the functions in appropriate terms. We have
	\begin{align*}
		\nabla \Impfunc(\impfix) & = h(\impfix, \Impfunc(\impfix)),  \\
		\mbox{where} \qquad h(\impfix, \impvar) & = -\nabla_{\impfix} \impgrad(\impfix, \impvar)\nabla_\impvar \impgrad(\impfix, \impvar)^{-1} 
		= - \nabla_{\impfix, \impvar}^2\impfunc(\impfix, \impvar) \nabla_{\impvar, \impvar}^2 \impfunc(\impfix, \impvar)^{-1}.
	\end{align*}
	Using Lemma~\ref{lem:invcomp}, we get
	\begin{align*}
		\nabla^2 \Impfunc(\impfix) = \ & \nabla_\impfix h(\impfix, \Impfunc(\impfix)) + \nabla_\impvar h(\impfix, \Impfunc(\impfix))[\nabla \Impfunc(\impfix), \cdot, \cdot] \\
		= \ & - \nabla_{\impfix\impfix}^2 \impgrad(\impfix, \Impfunc(\impfix))[\cdot, \cdot, \nabla_\impvar \impgrad(\impfix, \Impfunc(\impfix))^{-1}] \\
		& - \nabla^2 \impgrad_{\impfix\impvar}(\impfix, \Impfunc(\impfix))[\cdot, \nabla_\impvar \impgrad(\impfix, \Impfunc(\impfix))^{-1}\nabla_\impfix \impgrad(\impfix, \Impfunc(\impfix))^\top, \nabla_\impvar \impgrad(\impfix, \Impfunc(\impfix))^{-1}] \\
		& - \nabla_{\impvar\impfix}^2 \impgrad(\impfix, \Impfunc(\impfix))[\nabla \Impfunc(\impfix), \cdot, \nabla_\impvar \impgrad(\impfix, \Impfunc(\impfix))^{-1}] \\
		& 
		- \nabla^2 \impgrad_{\impvar\impvar}(\impfix, \Impfunc(\impfix))[\nabla \Impfunc(\impfix),\nabla_\impvar \impgrad(\impfix, \Impfunc(\impfix))^{-1}\nabla_\impfix \impgrad(\impfix, \Impfunc(\impfix))^\top , \nabla_\impvar \impgrad(\impfix, \Impfunc(\impfix))^{-1}].
	\end{align*}
	The result follows by using Facts~\ref{fact:lip_smooth} and~\ref{fact:tensor_norm}. We observe that second derivatives of $\impgrad$ correspond to third derivatives of $\impfunc$, whose norms are bounded by $\smoothess_\impfunc$ by assumption. Moreover we have that $\|\nabla_\impvar \impgrad(\impfix, \Impfunc(\impfix))^{-1}\|_2 = \|\nabla_{\impvar, \impvar}^2 \impfunc(\impfix, \impvar)^{-1} \|_2 \leq \mu_\impfunc^{-1}$ by assumption. 
\end{proof}

The  approximation error of the gradient when using an approximate minimizer inside the expression of the gradient is provided in the following lemma. It follows from smoothness considerations.
\approximplicitgrad*
\begin{proof}
	Denote $ h(\impfix, \impvar)  = -\nabla_{\impfix} \impgrad(\impfix, \impvar)\nabla_\impvar \impgrad(\impfix, \impvar)^{-1} 
	= - \nabla_{\impfix, \impvar}^2\impfunc(\impfix, \impvar) \nabla_{\impvar, \impvar}^2 \impfunc(\impfix, \impvar)^{-1}$ such that $\widehat \nabla \hat \Impfunc(\impfix) =  h(\impfix, \hat \Impfunc(\impfix))$ and $\nabla \Impfunc(\impfix)  = h(\impfix, \Impfunc(\impfix))$. The approximation error is given by computing the smoothness constant of $h(\impfix, \cdot)$ for any $\impfix$. We bound the gradient of $h(\impfix, \cdot)$ (same results can be obtained by considering differences of the functions). From Lemma~\ref{lem:invcomp}, we have
	\[
	\nabla_\impvar h(\impfix, \impvar) = -\nabla_{\impvar\impfix}^2 \impgrad(\impfix, \impvar)[\cdot, \cdot, \nabla_\impvar \impgrad(\impfix, \impvar)^{-1}] - \nabla^2 \impgrad_{\impvar\impvar}(\impfix, \impvar)[\cdot, \nabla_\impvar \impgrad(\impfix, \impvar)^{-1}\nabla_\impfix \impgrad(\impfix, \Impfunc(\impfix))^\top , \nabla_\impvar \impgrad(\impfix, \impvar)^{-1}].
	\]
	The result follows by using Facts~\ref{fact:lip_smooth} and~\ref{fact:tensor_norm}. We observe that second derivatives of $\impgrad$ correspond to third derivatives of $\impfunc$, whose norms are bounded by $\smoothess_\impfunc$ by assumption. Moreover we have that $\|\nabla_\impvar \impgrad(\impfix, \Impfunc(\impfix))^{-1}\|_2 = \|\nabla_{\impvar, \impvar}^2 \impfunc(\impfix, \impvar)^{-1} \|_2 \leq \mu_\impfunc^{-1}$ by assumption. 
\end{proof}

\begin{lemma}\label{lem:invcomp}
	Let $\impgrad: (\impfix, \impvar) \rightarrow \impgrad(\impfix, \impvar) \in \reals^\dimimpvar$ for $\impfix\in \reals^\dimimpfix, \impvar \in \reals^\dimimpvar$ such that $\nabla_{\impvar}\impgrad(\impfix, \impvar) \in \reals^{\dimimpvar \times \dimimpvar}$ is positive definite for all $\impfix\in \reals^\dimimpfix, \impvar \in \reals^\dimimpvar$.
	Denoting $h(\impfix, \impvar) = \nabla_\impfix \impgrad(\impfix, \impvar) \nabla_\impvar \impgrad(\impfix, \impvar)^{-1}  \in \reals^{\dimimpfix \times \dimimpvar}$ we have
	\begin{align*}
	\nabla_\impfix h(\impfix, \impvar) = \nabla_{\impfix\impfix}^2 \impgrad(\impfix, \impvar)[\cdot, \cdot, \nabla_\impvar \impgrad(\impfix, \impvar)^{-1}] + \nabla^2 \impgrad_{\impfix\impvar}(\impfix, \impvar)[\cdot, \nabla_\impvar \impgrad(\impfix, \impvar)^{-1}\nabla_\impfix \impgrad(\impfix, \impvar)^\top, \nabla_\impvar \impgrad(\impfix, \impvar)^{-1}], \\
	\nabla_\impvar h(\impfix, \impvar) = \nabla_{\impvar\impfix}^2 \impgrad(\impfix, \impvar)[\cdot, \cdot, \nabla_\impvar \impgrad(\impfix, \impvar)^{-1}] + \nabla^2 \impgrad_{\impvar\impvar}(\impfix, \impvar)[\cdot, \nabla_\impvar \impgrad(\impfix, \impvar)^{-1}\nabla_\impfix \impgrad(\impfix, \impvar)^\top, \nabla_\impvar \impgrad(\impfix, \impvar)^{-1}].
	\end{align*}
\end{lemma}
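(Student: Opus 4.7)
The plan is a direct computation based on the product rule together with the standard identity for differentiating a matrix inverse, followed by a translation of the resulting componentwise expressions into the tensor--matrix contraction notation of Section~\ref{sec:notations}.

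Write $h = A \cdot B$ where $A(\impfix, \impvar) := \nabla_\impfix \impgrad(\impfix, \impvar) \in \reals^{\dimimpfix \times \dimimpvar}$ and $B(\impfix, \impvar) := \nabla_\impvar \impgrad(\impfix, \impvar)^{-1} \in \reals^{\dimimpvar \times \dimimpvar}$. Working in coordinates, $h_{jk} = \sum_l A_{jl} B_{lk}$. Differentiating with respect to $\impfix_i$ via the product rule gives two pieces. The first, where $A$ is differentiated and $B$ is held fixed, yields $\sum_l \partial_{\impfix_i} A_{jl} \cdot B_{lk}$; since $\partial_{\impfix_i} A_{jl} = \partial_{\impfix_i}\partial_{\impfix_j}\partial_{\impvar_l}\impfunc = [\nabla_{\impfix\impfix}^2 \impgrad]_{ijl}$, this matches exactly the componentwise definition of $\nabla_{\impfix\impfix}^2 \impgrad[\cdot,\cdot,\nabla_\impvar \impgrad^{-1}]$ according to Fact~\ref{fact:matgrad}.

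For the second piece I invoke $\partial_{\impfix_i}[G^{-1}] = -G^{-1}(\partial_{\impfix_i} G) G^{-1}$, obtained by differentiating $G G^{-1} = I$, with $G = \nabla_\impvar \impgrad$. The slices $\partial_{\impfix_i} G$ are precisely the first-axis slices of $\nabla_{\impfix\impvar}^2 \impgrad$. Substituting, the second piece becomes a triple contraction $\sum_{l,m,n} A_{jl}[G^{-1}]_{lm}[\nabla_{\impfix\impvar}^2 \impgrad]_{imn}[G^{-1}]_{nk}$. Since $G^{-1}$ is symmetric (as the inverse of a Hessian block), this regroups so that $\nabla_{\impfix\impvar}^2 \impgrad$ is contracted on its second axis by the matrix $\nabla_\impvar \impgrad^{-1} \nabla_\impfix \impgrad^\top$ and on its third axis by $\nabla_\impvar \impgrad^{-1}$, which is exactly the $\mathcal{A}[\cdot, Q, R]$ notation from Section~\ref{sec:notations}. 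This yields the second claimed summand and completes the formula for $\nabla_\impfix h$.

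The derivation of $\nabla_\impvar h$ is entirely parallel: the product rule now produces $\partial_{\impvar_i} A_{jl} = [\nabla_{\impvar\impfix}^2 \impgrad]_{ijl}$ for the first piece, and the slices $\partial_{\impvar_i} G$ are now slices of $\nabla_{\impvar\impvar}^2 \impgrad$, which enter through the same $-G^{-1}(\cdot)G^{-1}$ identity. No genuine mathematical obstacle arises; the work is essentially notational. The only real risk is a consistent convention for $\mathcal{A}[P,Q,R]$ and for placements of transposes: one must verify, using Fact~\ref{fact:tensormat_rule} and the definition of $\nabla g(x)$ for matrix-valued $g$, that the index placements in Fact~\ref{fact:matgrad} are compatible with how $A$ and $G^{-1}$ enter the product $h = A \cdot B$, which is what produces the transpose on $\nabla_\impfix \impgrad$ inside the second contraction.
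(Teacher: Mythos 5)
Your proof takes essentially the same route as the paper's: the paper's argument is precisely ``product rule, Fact~\ref{fact:matgrad}, Lemma~\ref{lem:inv_der} and Fact~\ref{fact:tensormat_rule}'', and you have simply unpacked that one-liner into coordinates, so the substance is right and nothing is missing at the level of ideas. One bookkeeping point deserves attention: you correctly invoke $\partial_{\impfix_i}[G^{-1}] = -G^{-1}(\partial_{\impfix_i}G)G^{-1}$, but the minus sign then silently vanishes when you write the resulting triple contraction, which is the only way your computation can land on the stated formula with a $+$ on the second summand. In fact the stated lemma inherits this same sign omission from the statement of Lemma~\ref{lem:inv_der} (whose own proof derives $h(x+\delta)=g(x)^{-1}-\delta\,g(x)^{-1}\nabla g(x)g(x)^{-1}+o(\delta)$ yet records the gradient without the minus); since the lemma is only ever consumed through operator-norm bounds in the implicit-function smoothness estimates, the sign is immaterial downstream, but a fully careful derivation should either carry the minus through or note explicitly that it is being discarded because only norms are used.
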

\begin{proof}
	This follows from the product rule, Fact~\ref{fact:matgrad}, Lemma~\ref{lem:inv_der} and Fact~\ref{fact:tensormat_rule}.
\end{proof}
\begin{lemma}\label{lem:inv_der}
	Let $g:\reals^d \rightarrow S_n^{++}$ be differentiable and $h(x) = (g(x))^{-1}$. Then $\nabla h(x) = \nabla g(x)[\cdot, g(x)^{-1}, g(x)^{-1}]$.
\end{lemma}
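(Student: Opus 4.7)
The plan is to differentiate the defining identity $g(x)\,h(x) = \idm_n$ and reassemble the result in the paper's tensor contraction notation. Entry-wise, for every $i \in \{1, \ldots, d\}$ and every pair $(j, l)$, the scalar product rule applied to $\sum_k g_{jk}(x)\, h_{kl}(x) = \delta_{jl}$ gives $\sum_k (\partial_i g_{jk})(x)\, h_{kl}(x) + \sum_k g_{jk}(x)\, (\partial_i h_{kl})(x) = 0$. Since $g(x)$ is invertible, one solves for $\partial_i h(x)$ to obtain the familiar matrix identity $\partial_i h(x) = -g(x)^{-1} (\partial_i g)(x)\, g(x)^{-1} = -h(x)\, (\partial_i g)(x)\, h(x)$ as an equality in $\reals^{n \times n}$.

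The second step is to rewrite this slice-by-slice identity in the intrinsic tensor form of the statement. By the paper's conventions, $(\nabla h(x))_{i, j', l} = (\partial_i h_{j', l})(x)$, which by the previous display equals $-\sum_{j, k} (g(x)^{-1})_{j', j}\, (\nabla g(x))_{i, j, k}\, (g(x)^{-1})_{k, l}$. On the other hand, expanding the formula for $\mathcal{A}[P, Q, R]$ from Section~\ref{sec:notations} with $P = \idm_d$, $Q = g(x)^{-1}$, $R = g(x)^{-1}$, the $(i, j', l)$ entry of $\nabla g(x)[\,\cdot\,, g(x)^{-1}, g(x)^{-1}]$ is $\sum_{j, k} (g(x)^{-1})_{j, j'}\, (\nabla g(x))_{i, j, k}\, (g(x)^{-1})_{k, l}$; using the symmetry $(g(x)^{-1})_{j, j'} = (g(x)^{-1})_{j', j}$ coming from $g(x) \in S_n^{++}$, these two expressions agree (one should check the sign convention used for the identity in the appendix at this point, since direct differentiation naturally produces a minus sign).

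A coordinate-free alternative would be to first establish a tensor-form product rule for $\nabla(g(x)h(x))$ extending Fact~\ref{fact:matgrad} to two $x$-dependent factors, then apply it to $g(x)h(x) = \idm_n$ to obtain a tensor identity of the form $\nabla g(x)[\,\cdot\,, \cdot, h(x)] + \nabla h(x)[\,\cdot\,, \cdot, g(x)] = 0$, and finally invert by contracting with $g(x)^{-1}$ via Fact~\ref{fact:tensormat_rule}. The main obstacle in either approach is purely notational: making sure the contractions hit the correct axes (the second versus third index of $\nabla g$) and leveraging the symmetry of $g(x)^{-1}$ to recover the precise pattern in the statement rather than a transpose thereof. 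Mathematically, all the content is concentrated in the one-line identity $\partial_i h = -h (\partial_i g) h$, so the proof is essentially a translation exercise between coordinate and tensor notation.
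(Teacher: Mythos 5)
Your proof is correct and takes essentially the same route as the paper's: the paper obtains the slice identity $\partial_i h = -g^{-1}(\partial_i g)g^{-1}$ via the first-order expansion of $(g(x)+\delta\nabla g(x))^{-1}$ rather than by differentiating $g(x)h(x)=\idm_n$, and then likewise just repackages it in tensor notation. Your parenthetical about the sign is well taken and worth flagging: the paper's own displayed expansion carries the factor $-g(x)^{-1}\nabla g(x)\,g(x)^{-1}$ but the minus sign is dropped in both the proof's conclusion and the lemma statement, so the correct identity is $\nabla h(x) = -\nabla g(x)[\cdot, g(x)^{-1}, g(x)^{-1}]$, exactly as your coordinate computation (using the symmetry of $g(x)^{-1}$ to absorb the transpose) shows; this is harmless for the paper's downstream use, which only needs norm bounds.
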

\begin{proof}
	Let $x\in \reals^d$ and $\delta \in \reals^d$. Consider first $d=1$, such that $\nabla g(x) \in \reals^{n \times n}$.
	\begin{align*}
	h(x + \delta) &= (g(x) + \delta \nabla g(x) + o(\delta))^{-1}   = g(x)^{-1} - \delta g(x)^{-1}\nabla g(x)g(x)^{-1} + o(\delta).
	\end{align*}
	So in this case $\nabla h(x) = g(x)^{-1}\nabla g(x)g(x)^{-1} \in \reals^{n \times n}$.
	The result follows for $n=d$ by concatenating this result in a tensor such that for $d>1$, $\nabla h(x) = \nabla g(x)[\cdot, g(x)^{-1}, g(x)^{-1}]$. Alternatively it can directly be seen from the following first order approximation for $d>1$,
	\begin{align*}
		h(x + \delta) &= (g(x) + \nabla g(x)[\delta, \cdot, \cdot] + o(\|\delta\|_2))^{-1}  = g(x)^{-1} - g(x)^{-1}\nabla g(x)[\delta, \cdot, \cdot]g(x)^{-1} + o(\|\delta\|_2).
	\end{align*}
\end{proof}

\section{Optimization complexity proofs}\label{sec:optim_proofs}
\subsection{Smoothness of the objective}
\smoothobj*
\begin{proof}
	Consider $\obj, \chain$ to be twice differentiable. Same results can be obtained by considering differences of gradients. We get for $\ctrls \in \reals^\dimvar$,
	\[
	\nabla^2 (\obj\circ\chain)(\ctrls) = \nabla^2\chain(\ctrls)[\cdot, \cdot, \nabla \obj(\chain(\ctrls))] + \nabla \chain(\ctrls) \nabla^2\obj(\chain(\ctrls)) \nabla \chain(\ctrls)^\top.
	\]
	The norm of  $\nabla \obj(\chain(\ctrls))$ can either be directly bounded by $\lip_\obj$ or by using that for any $\ctrls , \ctrls' \in C$, $\|\nabla \obj(\chain(\ctrls))\|_{\normidx} \leq \|\nabla \obj(\chain(\ctrls')) \|_2 + \smooth_\obj \|\chain(\ctrls) - \chain(\ctrls')\|_{\normidx}$. By choosing $\ctrls' \in \argmin_{\ctrls \in \set}\|\nabla \obj(\chain(\ctrls))\|_{\normidx}$ and bounding the second term by the diameter of $\set$, we get a bound on $\sup_{\ctrls \in \set} \|\nabla \obj(\chain(\ctrls))\|_{\normidx}$. The result follows using Fact.~\ref{fact:lip_smooth} and the definitions of the norms used to bound $\lip_f$, $\smooth_f$ for a given function $f$.
\end{proof}

\smoothball*
\begin{proof}
	The smoothness properties of $\chain_{\state_0}$ on $\set'$ are given by considering $\hat \chain_{\state_0}(\Delta) = \chain_{\state_0}(\ctrls^* + \Delta) = \chain_{\state_0}(\ctrls)$ where $\Delta = \ctrls - \ctrls^*$ with $\|\Delta_t\|_{\normidx}\leq \setparam'$. The shifted chain of computations is given by 
	\begin{align*}
	 \hat\chain_{\state_0, t}(\Delta) & = \nonlin_t(\biaffine_t(\hat \chain_{\state_0, t-1}(\Delta), \ctrl^*_t + \Delta_t ))
	\end{align*}
	This means that $\hat \chain_{\state_0}(\Delta)$ is a chain of compositions defined by the same non-linearities $\nonlin_t$ and bi-affine functions $\hat \biaffine_t$ modified as 
	\begin{align*}
	\hat \biaffine_t(\state_{t-1}, \Delta) & = \biaffine_t(\state_{t-1}, \ctrl_t^* + \Delta_t) = \bilinear_t( \state_{t-1}, \Delta_t) + \bilinear^\ctrl_t(\Delta_t) + \hat \bilinear^\state_t(\state_{t-1}) + \hat{\linearcste_t},
	\end{align*}
	where 
	\[
	\hat \bilinear_t^\state(\state_{t-1}) = \bilinear^\state_t(\state_{t-1}) + \bilinear_t(\ctrl_t^*, \state_{t-1}) \qquad \hat{\linearcste_t} = \linearcste_t + \bilinear_t^\ctrl(\ctrl_t^*).
	\]
\end{proof}

\section{Detailed network}\label{sec:net_examples}
\paragraph{VGG network}
The VGG Network is a benchmark network for image classification with deep  networks. The objective is to classify images among $1000$ classes. Its architecture is composed of 16 layers described below.  We drop the dependency to the layers in their detailed formulation. We precise the number of patches $\nbpatch$ of the pooling or convolution operation, which, multiplied by the number of filters $\nbfilter$ gives the output dimension of these operations. 

For a fully connected layer we precise the output dimension $\diminput_{\operatorname{out}}$.

\begin{enumerate}[topsep=1ex,itemsep=-1ex,partopsep=1ex,parsep=1ex, leftmargin=*, start=0]
	\item $\rand_i \in \reals^{\nbpatch\nbfilter}$ with $\nbpatch =224{\times} 224$ and $\nbfilter=3$,
	\item $\dyn_1(\state, \ctrl) = 
	\activ_{\Relu}(\biaffine_{\conv}(\state, \ctrl))$ \newline
	with $\nbpatch_{\conv} = 224{\times }224$, $\nbfilter_{\conv}=64$,
	\item $\dyn_2(\state, \ctrl) =
	\pool_{\maxpool}(\activ_{\Relu}(\biaffine_{\conv}(\state, \ctrl)))$\newline
	with $\nbpatch_{\conv} = 224{\times }224$, $\nbfilter_{\conv}=64$,
	$\nbpatch_{\maxpool} = 112{\times }112$, $\nbfilter_{\maxpool} = 64$,
	\item $\dyn_3(\state, \ctrl) = 
	\activ_{\Relu}(\biaffine_{\conv}(\state, \ctrl))$ \newline
	with $\nbpatch_{\conv} = 112{\times }112$, $\nbfilter_{\conv}=128$
	\item $\dyn_4(\state, \ctrl) = \pool_{\maxpool}(\activ_{\Relu}(\biaffine_{\conv}(\state, \ctrl)))$\newline
	with $\nbpatch_{\conv} = 112{\times }112$, $\nbfilter_{\conv}=128$, 
	$\nbpatch_{\maxpool} = 56{\times }56$, $\nbfilter_{\maxpool} = 128$,
	\item $\dyn_5(\state, \ctrl) = \activ_{\Relu}(\biaffine_{\conv}(\state, \ctrl))$ \newline
	with $\nbpatch_{\conv} = 56{\times }56$, $\nbfilter_{\conv}=256$,
	\item $\dyn_6(\state, \ctrl) = \activ_{\Relu}(\biaffine_{\conv}(\state, \ctrl))$  \newline
	with $\nbpatch_{\conv} = 56{\times }56$, $\nbfilter_{\conv}=256$,
	\item $\dyn_7(\state, \ctrl) = \pool_{\maxpool}(\activ_{\Relu}(\biaffine_{\conv}(\state, \ctrl)))$  \newline
	with $\nbpatch_{\conv} = 56{\times }56$, $\nbfilter_{\conv}=256$,
	$\nbpatch_{\maxpool} = 28{\times}28$, $\nbfilter_{\maxpool} = 256$,
	\item $\dyn_8(\state, \ctrl) = \activ_{\Relu}(\biaffine_{\conv}(\state, \ctrl))$\newline
	with $\nbpatch_{\conv} = 28{\times }28$, $\nbfilter_{\conv}=512$,
	\item $\dyn_9(\state, \ctrl) = \activ_{\Relu}(\biaffine_{\conv}(\state, \ctrl))$\newline
	with $\nbpatch_{\conv} = 28{\times }28$, $\nbfilter_{\conv}=512$,
	\item $\dyn_{10}(\state, \ctrl) = \pool_{\maxpool}(\activ_{\Relu}(\biaffine_{\conv}(\state, \ctrl)))$\newline
	with $\nbpatch_{\conv} = 28{\times }28$, $\nbfilter_{\conv}=512$, 
	$\nbpatch_{\maxpool} = 14{\times}14$, $\nbfilter_{\maxpool} = 512$,
	\item $\dyn_{11}(\state, \ctrl) = \activ_{\Relu}(\biaffine_{\conv}(\state, \ctrl))$ \newline
	with $\nbpatch_{\conv} = 14{\times }14$, $\nbfilter_{\conv}=512$,
	\item $\dyn_{12}(\state, \ctrl) = \activ_{\Relu}(\biaffine_{\conv}(\state, \ctrl))$ \newline
	with $\nbpatch_{\conv} = 14{\times }14$, $\nbfilter_{\conv}=512$
	\item $\dyn_{13}(\state, \ctrl) = \pool_{\maxpool}(\activ_{\Relu}(\biaffine_{\conv}(\state, \ctrl)))$ \newline
	with $\nbpatch_{\conv} = 14{\times }14$, $\nbfilter_{\conv}=512$, 
	$\nbpatch_{\maxpool} = 7{\times}7$, $\nbfilter_{\maxpool} = 512$,
	\item $\dyn_{14}(\state, \ctrl) = \activ_{\Relu}(\biaffine_{\full}(\state, \ctrl))$ \newline
	with $\diminput_{\operatorname{out}} = 4096$,
	\item $\dyn_{15}(\state, \ctrl) = \activ_{\Relu}(\biaffine_{\full}(\state, \ctrl))$ \newline
	with $\diminput_{\operatorname{out}} = 4096$,
	\item $\dyn_{16}(\state, \ctrl) = \activ_{\softmax}(\biaffine_{\full}(\state, \ctrl))$ \newline
	with $\diminput_{\operatorname{out}} = 1000$.
	\item $\obj(\labpred) = \sum_{i=1}^{n}\loss_{\log}(\labpred_i, \lab_i)/n $ for $k = 1000$ classes.
\end{enumerate}

\end{document}